\crefname{equation}{equation}{equations}
\crefname{lemma}{lemma}{lemmata}
\crefname{claim}{claim}{claims}
\crefname{theorem}{theorem}{theorems}
\crefname{proposition}{proposition}{propositions}
\crefname{corollary}{corollary}{corollaries}
\crefname{claim}{claim}{claims}
\crefname{remark}{remark}{remarks}
\crefname{definition}{definition}{definitions}
\crefname{fact}{fact}{facts}
\crefname{question}{question}{questions}
\crefname{condition}{condition}{conditions}
\crefname{algorithm}{algorithm}{algorithms}
\crefname{assumption}{assumption}{assumptions}
\crefname{problem}{problem}{problems}
\newtheorem{theorem}{Theorem}[section]
\newtheorem{lemma}[theorem]{Lemma}
\newtheorem{corollary}[theorem]{Corollary}
\newtheorem{claim}[theorem]{Claim}
\newtheorem{definition}[theorem]{Definition}
\theoremstyle{definition}
\newtheorem{problem}[theorem]{Problem}
\newcommand{\eps}{\epsilon}
\newcommand{\poly}{\mathrm{poly}}
\def\R{\mathbb R}
\def\Z{\mathbb Z}
\newcommand{\bb}{\mathbf{b}}
\newcommand{\bw}{\mathbf{w}}
\newcommand{\bx}{\mathbf{x}}
\newcommand{\bW}{\vec{W}}
\DeclareMathOperator{\maj}{Maj}
\newcommand{\ba}{\textbf{a}}
\newcommand{\hide}[1]{}
\DeclareMathOperator*{\E}{\mathbf{E}}
\let\vec\mathbf
\def\colorful{0}
\newcommand{\new}[1]{{\color{red} #1}}
\newcommand{\snote}[1]{\footnote{{\bf [Sushrut: {#1}\bf ] }}}
\newcommand{\new}[1]{#1}
\newcommand{\snote}[1]{}
\icmltitlerunning{Efficient Knowledge Distillation via Curriculum Extraction}
\begin{document}

\twocolumn[
\icmltitle{Efficient Knowledge Distillation via Curriculum Extraction}



\icmlsetsymbol{equal}{*}

\begin{icmlauthorlist}
\icmlauthor{Firstname1 Lastname1}{equal,yyy}
\icmlauthor{Firstname2 Lastname2}{equal,yyy,comp}
\icmlauthor{Firstname3 Lastname3}{comp}
\icmlauthor{Firstname4 Lastname4}{sch}
\icmlauthor{Firstname5 Lastname5}{yyy}
\icmlauthor{Firstname6 Lastname6}{sch,yyy,comp}
\icmlauthor{Firstname7 Lastname7}{comp}
\icmlauthor{Firstname8 Lastname8}{sch}
\icmlauthor{Firstname8 Lastname8}{yyy,comp}
\end{icmlauthorlist}

\icmlaffiliation{yyy}{Department of XXX, University of YYY, Location, Country}
\icmlaffiliation{comp}{Company Name, Location, Country}
\icmlaffiliation{sch}{School of ZZZ, Institute of WWW, Location, Country}

\icmlcorrespondingauthor{Firstname1 Lastname1}{first1.last1@xxx.edu}
\icmlcorrespondingauthor{Firstname2 Lastname2}{first2.last2@www.uk}

\icmlkeywords{Machine Learning, ICML}

\vskip 0.3in
]



\printAffiliationsAndNotice{\icmlEqualContribution} 

\begin{abstract}
Knowledge distillation is a technique used to train a small student network using the output generated by a large teacher network, and has many empirical advantages~\citep{Hinton2015DistillingTK}. While the standard one-shot approach to distillation only uses the output of the final teacher network, recent work~\citep{panigrahi2024progressive} has shown that using intermediate checkpoints from the teacher's training process as an implicit ``curriculum'' for progressive distillation can significantly speed up training. However, such schemes require storing these checkpoints, and often require careful selection of the intermediate checkpoints to train on, which can be impractical for large-scale training.

In this paper, we show that a curriculum can be \emph{extracted} from just the fully trained teacher network, and that this extracted curriculum can give similar efficiency benefits to those of progressive distillation. 
Our extraction scheme is natural; we use a random projection of the hidden representations of the teacher network to progressively train the student network, before training using the output of the full network. 
We show that our scheme significantly outperforms one-shot distillation and achieves a performance similar to that of progressive distillation for learning sparse parities with two-layer networks, and provide theoretical guarantees for this setting. 
Additionally, we show that our method outperforms one-shot distillation even when using transformer-based architectures, both for sparse-parity learning, and language modeling tasks.
\end{abstract}
\section{Introduction}
In the era of large-scale models, as the cost of training state-of-the-art models increases substantially with each passing year, leveraging compute effectively for training and inference has become increasingly important. Knowledge distillation~\citep{Hinton2015DistillingTK} is one popular technique that is commonly used to reduce the amount of compute necessary for inference, by training a small \emph{student} network to mimic the output of a large teacher network. Indeed, several state-of-the-art language models are distilled versions of larger models~\citep{deepseekai2025deepseekr1incentivizingreasoningcapability,abdin2024phi4technicalreport,geminiteam2024gemini15unlockingmultimodal,openai-o1mini}.

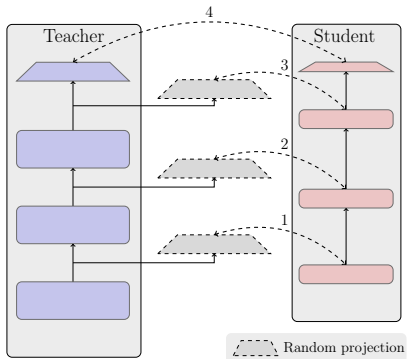
\begin{figure}[t]
    \centering
    \begin{minipage}[b]{\linewidth}
        \centering
        \hspace{-.8cm}
        \scalebox{0.5}{\usetikzlibrary{positioning, backgrounds, shapes.geometric, calc} 

\begin{tikzpicture}[
    node distance=1cm,
    rect/.style={
        draw,
        minimum width=3cm,
        minimum height=1cm,
        fill=blue!30,
        rounded corners
    },
    rectstudent/.style={
        draw,
        minimum width=2.5cm,
        minimum height=0.5cm,
        fill=red!30,
        rounded corners
    },
    background/.style={
        fill=gray!30,
        fill opacity=0.5,
        rounded corners
    },
    trapezstudent/.style={
        draw,
        fill=red!30,
        trapezium, 
        trapezium stretches=true,
        trapezium left angle=75,
        trapezium right angle=75,
        minimum width=2.5cm,
        minimum height=0.25cm
    },
    trapez/.style={
        draw,
        fill=blue!30,
        trapezium, 
        trapezium stretches=true,
        trapezium left angle=75,
        trapezium right angle=75,
        minimum width=3cm,
        minimum height=0.5cm
    },
    trapez dashed/.style={
        trapez,
        fill=gray!30,
        draw=black,
        dashed,
    }
]

\node[rect] (rect1) {};
\node[rect, below=of rect1] (rect2) {};
\node[rect, below=of rect2] (rect3) {};

\node[trapez, above=of rect1, yshift=0.3cm] (trap) {};

\node[trapez dashed, right=of rect3, yshift=1.5cm] (rand1) {}; 
\node[trapez dashed, right=of rect2, yshift=1.5cm] (rand2) {}; 
\node[trapez dashed, right=of rect1, yshift=1.6cm] (rand3) {}; 

\node[rectstudent, right=of rand1, yshift=-0.8cm] (rectstudent1) {};
\node[rectstudent, right=of rand2, yshift=-0.8cm] (rectstudent2) {};
\node[rectstudent, right=of rand3, yshift=-0.8cm] (rectstudent3) {};

\node[trapezstudent, above=of rectstudent3] (trapstudent) {};
\draw[background] 
    ([shift={(-1.3cm,1cm)}]trapstudent.north west) 
    rectangle 
    ([shift={(0.2cm,-1cm)}]rectstudent1.south east);
\path let \p1 = ([shift={(-1.5cm,1cm)}]trapstudent.north west), 
          \p2 = ([shift={(0.3cm,-1cm)}]rectstudent1.south east) in 
          node[font=\Large] at ({(\x1 + \x2)/2}, \y1 - 0.3cm) {Student};
\draw[background] 
    ([shift={(-1.5cm,1cm)}]trap.north west) 
    rectangle 
    ([shift={(0.3cm,-1cm)}]rect3.south east);

\path let \p1 = ([shift={(-1.5cm,1cm)}]trap.north west), 
          \p2 = ([shift={(0.3cm,-1cm)}]rect3.south east) in 
          node[font=\Large] at ({(\x1 + \x2)/2}, \y1 - 0.3cm) {Teacher};

\draw[<->, dashed, thick] (rand1.north) to [bend left=30] node[midway, above] {\large 1} (rectstudent1.north);
\draw[<->, dashed, thick] (rand2.north) to [bend left=30] node[midway, above] {\large 2}(rectstudent2.north);
\draw[<->, dashed, thick] (rand3.north) to [bend left=30] node[midway, above] {\large 3} (rectstudent3.north);
\draw[<->, dashed, thick] (trap.north) to [bend left=30] node[midway, above] {\large 4} (trapstudent.north);

\draw[->] (rect2) -- (rect1);
\draw[->] (rect3) -- (rect2);
\draw[->] (rect1) -- (trap);

\draw[->] (rectstudent1) -- (rectstudent2);
\draw[->] (rectstudent2) -- (rectstudent3);
\draw[->] (rectstudent3) -- (trapstudent);

\path (rect3) -- (rect2) coordinate[midway] (midpoint1);
\path (rect2) -- (rect1) coordinate[midway] (midpoint2);
\path (rect1) -- (trap) coordinate[midway] (midpoint3);

\draw[->] (midpoint1) -| (rand1.south) -- (rand1.south);
\draw[->] (midpoint2) -| (rand2.south) -- (rand2.south);
\draw[->] (midpoint3) -| (rand3.south) -- (rand3.south);

\node[background, anchor=north west, minimum width=4.8cm, minimum height=0.8cm] (legend) at ([shift={(-3.2cm,-1.3cm)}]rectstudent1.south) {}; 

\node[trapez dashed, minimum width=1.2cm, minimum height=0.4cm] at ([xshift=0.8cm]legend.west) {};
\node[right=1.4cm of legend.west] {Random projection};

%
%

\end{tikzpicture}}
            \caption{%
    Our curriculum extraction method trains the student model in a layer-wise fashion. Student layers are sequentially aligned to a random projection of the corresponding teacher layer’s hidden representation using the Mean Squared Error (MSE). After aligning layers, the student is trained on the teacher’s output logits via the KL Divergence loss.}
            \label{fig:curriculum_extraction_diagram}
    \end{minipage}
\end{figure}

\begin{figure*}[t]
    \centering
    \begin{minipage}[t]{0.31\linewidth}
        \centering
            \hspace*{-2mm} 
        \includegraphics[width=1.12\textwidth]{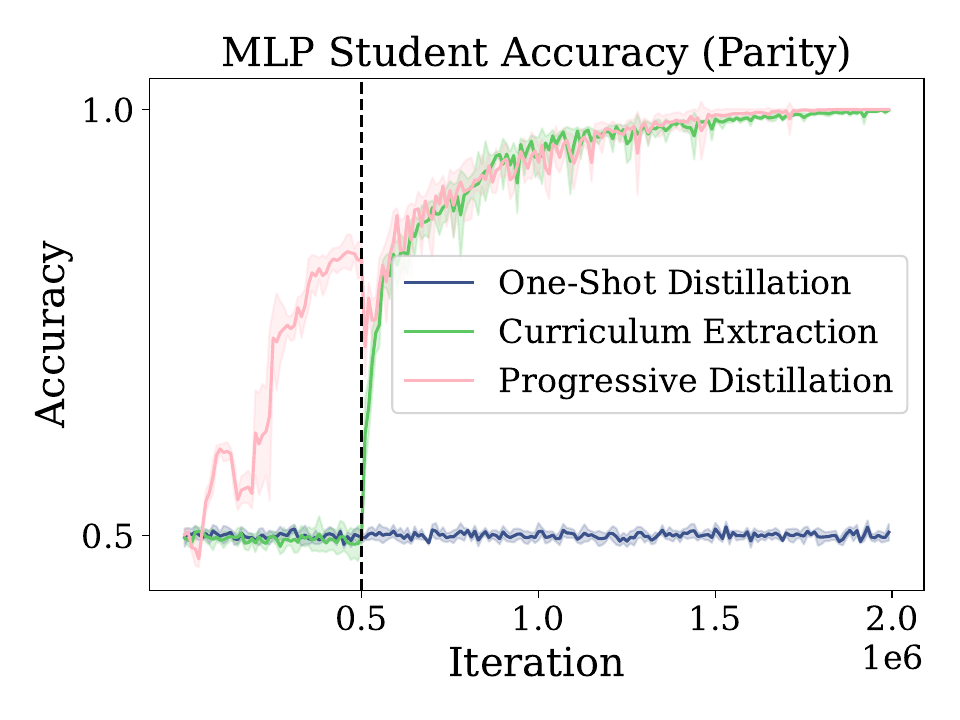}
        \centering\scriptsize\textbf{(a)} 
        \label{fig:mlp_student_accuracy}
    \end{minipage}
    \hfill
    \begin{minipage}[t]{0.31\linewidth}
    \centering
        \hspace*{-3mm} 
    \includegraphics[width=1.12\textwidth]{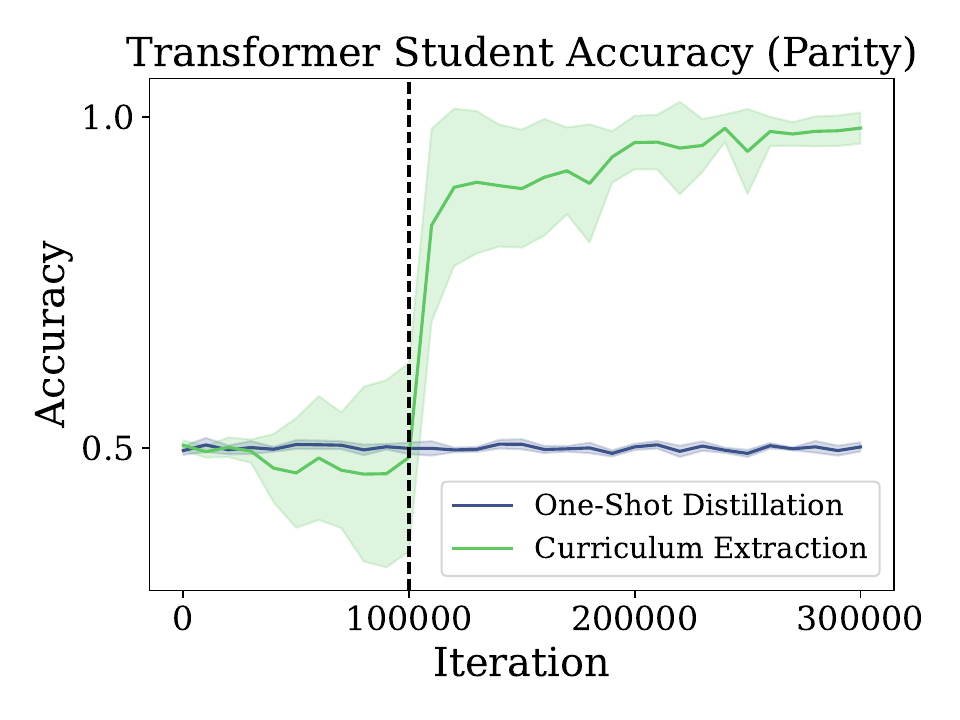}
        \centering\scriptsize\textbf{(b)} 
    \label{fig:transformer_student_accuracy}
    \end{minipage}
    \hfill
        \begin{minipage}[t]{0.31\linewidth}
    \centering
    \hspace*{-3mm} 
    \includegraphics[width=1.12\textwidth]{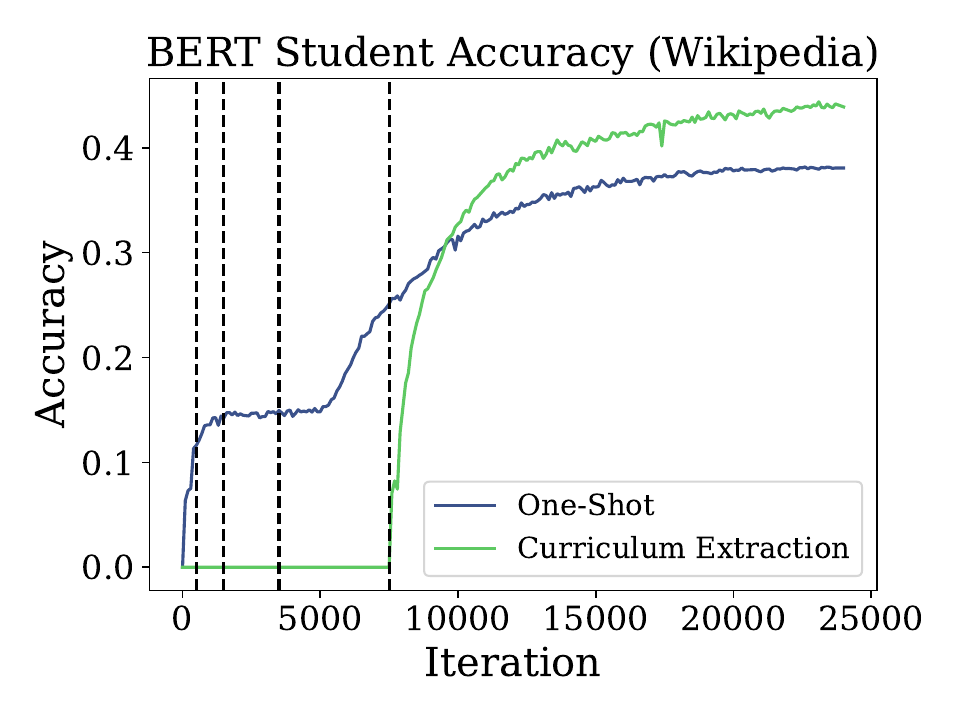}
        \centering\scriptsize\textbf{(c)} 
    \label{fig:wiki}
    \end{minipage}
    \caption{%
    \textbf{Comparing Curriculum Extraction and One-Shot Distillation.}
     We show three tasks for which curriculum extraction outperforms one-shot distillation:
(a) A two-layer MLP trained on 100-dimensional 6-sparse parity, with a teacher hidden dimension of 50k and a student hidden dimension of 100.
(b) A transformer trained on 100-dimensional 6-sparse parity, using 256-dimensional embeddings, where the teacher has 32 attention heads and the student has 4.
(c) A BERT-large model fine-tuned on the Wikipedia dataset, with the teacher using 768-dimensional embeddings, 12 attention heads, and 12 transformer blocks, while the student reduces embeddings to 256 dimensions and attention heads to 4. The dashed vertical lines indicate the iterations where the layer being distilled is changed in the case of curriculum extraction, and a change in teacher checkpoint in the case of progressive distillation.}
    \label{fig:student_accuracies}
\end{figure*}

Despite the adoption of distillation for language model training, prior work~\citep{panigrahi2024progressive, anil2020largescaledistributedneural,mirzadeh2019improvedknowledgedistillationteacher} has shown that just using the output of the fully-trained teacher network to train the student can result in poor performance relative to the teacher (a ``teacher-student gap'' in performance), and that \emph{progressive distillation} can significantly improve the performance of the student. \citet{panigrahi2024progressive} in particular offers an explanation for this phenomenon -- the intermediate checkpoints during the training of the teacher network act as an implicit \emph{curriculum} for the training of the student network, with earlier checkpoints emphasizing simpler patterns (e.g., local syntax in the case of language models), and later checkpoints capturing complex abstractions (e.g., long-range semantics). Please see \Cref{sec:related_work} for a more detailed overview of related work. 

While progressive distillation offers significant efficiency advantages over one-shot distillation, it requires storing frequent checkpoints during the training of the teacher, which can be prohibitive for modern LLMs. Deciding on which checkpoints to make use of to train the student is often unclear; such checkpoints are found by extensive experimentation in the works listed above, which can be impractical. Moreover, in many cases, one lacks access to intermediate checkpoints during training, even for open-source models~\citep{jiang2023mistral7b,deepseekai2025deepseekr1incentivizingreasoningcapability,meta2024llama}, making progressive distillation impossible.

This raises a natural question -- can we design a scheme that maintains the advantages of progressive distillation without suffering from its drawbacks? Specifically, can we leverage the final fully-trained teacher model more effectively to train the student model efficiently?

\paragraph{Curriculum Extraction.} We propose a scheme to \emph{extract} a curriculum from the fully-trained teacher network. Our key insight is that the layer-wise hierarchy of a fully trained network naturally encodes a progression from simple to complex features. To operationalize this, we train the student's hidden layers sequentially on \emph{random projections} of the teacher’s hidden layers, starting from shallow (layer $l$, say) to deep (the final layer $L$), before training the full student network on the output of the full teacher network. See Figure~\ref{fig:curriculum_extraction_diagram} for a visual description of our extraction scheme.

 By progressively training on projections from shallower to deeper layers, the student learns incrementally—mirroring the coarse-to-fine learning in progressive distillation, without having to store intermediate checkpoints. Beyond circumventing checkpoint storage, our approach can potentially be applied to efficiently distill from open-source models (e.g., Llama~\citep{meta2024llama}, Mistral~\citep{jiang2023mistral7b}, and Deepseek models~\citep{deepseekai2025deepseekr1incentivizingreasoningcapability}), where only the final model is available. In addition, our method is computationally cheaper than one-shot or progressive distillation per training iteration  --  during early stages, only a subset of student and teacher layers are active, reducing memory and FLOPs per iteration.  

\subsection{Our Results}

\paragraph{Sparse Parity Learning.}
We show that our curriculum extraction scheme is \emph{significantly} more efficient than one-shot distillation for the task of learning sparse parities using a two-layer MLP, and provide a theoretical analysis for this setting. See Section~\ref{sec:sparse_parities} for a formal description of sparse parity learning and two-layer MLPs. Here, we state an informal version of our main theorem, with the formal theorem stated in \Cref{sec:main-thm}.

\begin{theorem}[Main, Informal]
\label{thm:main_informal}
Let $d \geq \tilde \Omega(k^4)$. 
Consider learning $d$-dimensional $k$-sparse parity with a student model of size $\tilde{\Theta}(2^{O(k)})$, where $\tilde{O}, \tilde{\Theta}$ hides polylog factors in $d,k$. Suppose the teacher 
has a loss $O(\epsilon)$ for some small $\epsilon > 0$. Then, the total sample complexity needed for the student to reach $\epsilon$-loss using curriculum extraction based on random projection is: $\tilde{\Theta}\left(2^{O(k)} \poly(d,k) \epsilon^{-2}\right)$. However, one-shot distillation requires at least $\Omega\left(d^{k-1} \epsilon^{-2}\right)$ samples.\end{theorem}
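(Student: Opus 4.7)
The plan is to split the theorem into an upper bound on the sample complexity of curriculum extraction and a matching lower bound for one-shot distillation, and to handle the two halves with quite different techniques. The upper bound rests on the structural phenomenon highlighted in Figure~\ref{fig:correlation_plot}: after a random scalar projection, the teacher's first-layer representation correlates noticeably more strongly (in variance over the choice of projection) with coordinates $x_j$ inside the parity support $S$ than with coordinates outside it. This variance gap is the signal that the student exploits in the first phase. The lower bound, by contrast, is essentially the classical correlational statistical query (CSQ) lower bound for $k$-sparse parity: a student that sees only the (near-)parity output of the teacher cannot do fundamentally better than learning the parity from scratch.

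For the upper bound I would first establish the variance gap quantitatively. Concretely, for $A \in \R^{1 \times m_t}$ with i.i.d.\ Gaussian entries, the claim to prove is
\[
   \Var_A\!\bigl[\E_x[(A f_t^{(1)})(x)\,x_j]\bigr] \;\gtrsim\; 1/\poly(d, m_t) \quad \text{for } j \in S,
\]
while the same quantity is smaller by a factor of at least $d^{\Omega(k)}$ for $j \notin S$. The proof is a Hermite/Fourier computation on the trained teacher: since the teacher achieves loss $\epsilon/C$ on $\chi_S$, its second-layer weights combine the $m_t$ first-layer neurons into a function whose Fourier mass lies on $S$, which forces the individual neurons to have non-trivial degree-one coefficients on each $j \in S$ and only exponentially smaller coefficients on $j \notin S$. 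A random projection $A$ preserves this per-neuron signal with standard deviation $\Theta(1/\sqrt{m_t})$. Given the gap, a standard SGD analysis for two-layer MLPs, in the style of Barak et al.\ and of the progressive-distillation analysis of \citet{panigrahi2024progressive}, shows that a student of width $\tilde\Theta(2^{O(k)})$ trained on the MSE objective against $A f_t^{(1)}$ identifies $S$ in $\tilde O(\poly(d)\,\epsilon^{-2})$ samples. The second phase, which distills against the teacher's final output with the support already discovered, reduces to fitting a parity on $k$ known coordinates and can be solved by direct memorization in $\tilde O(2^{O(k)}\,\epsilon^{-2})$ samples. Summing the two phases gives the claimed bound.

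For the lower bound, the one-shot student sees only (an $\epsilon/C$-approximation of) $\chi_S$, so each stochastic gradient step is, up to scalars, a correlation of $\chi_S$ with some bounded query function determined by the current student. Since $\{\chi_T : |T| = k\}$ is an orthonormal family of size $\binom{d}{k} = \Theta(d^k / k!)$ in $L^2(\{\pm 1\}^d)$, the standard CSQ lower bound for parity implies that $\Omega(d^{k-1})$ stochastic gradient steps are required before the student can align with the correct $S$, with a further $\epsilon^{-2}$ factor coming from the variance-to-loss conversion. The main obstacle in the whole argument is the quantitative form of the variance-gap lemma: the theorem assumes only that the teacher achieves small loss, not that its first-layer weights are structured. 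One must therefore show that \emph{any} two-layer MLP approximating $\chi_S$ to loss $\epsilon/C$ necessarily has first-layer neurons whose degree-one Fourier coefficients on support coordinates dominate those off-support by the required $d^{\Omega(k)}$ factor. This is the crux of the argument; once the structural lemma is in place, the SGD analysis of the two phases and the one-shot CSQ lower bound both follow standard templates.
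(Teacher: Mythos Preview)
Your lower-bound half is fine and matches the paper: the one-shot bound is simply the CSQ/SQ lower bound for $k$-sparse parity, which the paper imports from \citet{panigrahi2024progressive} (their Theorem~B.1).

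The upper-bound half, however, has a real gap, and you have in fact put your finger on it yourself. You write that ``the main obstacle'' is to show that \emph{any} two-layer MLP approximating $\chi_S$ to loss $\epsilon/C$ has first-layer neurons whose degree-one Fourier coefficients on $S$ dominate those off $S$. The paper does \emph{not} prove this, and it is almost certainly false as stated: a sufficiently wide random-features network can approximate $\chi_S$ while each individual neuron has no special alignment with $S$ at all. The informal theorem statement (``suppose the teacher has loss $\epsilon/C$'') is misleading here; the formal version (\Cref{thm:main}) explicitly assumes that the teacher was trained with the specific two-stage procedure of \Cref{alg:2_stage_training}. That assumption is what does the work: after one regularized gradient step on the hinge loss, \Cref{lem:effect_of_1_step_on_w} pins down the teacher's first-layer weights essentially in closed form, with $|w_{\ell j}^{(1)}|\approx 1/(2k)$ for $j\in S$ and $|w_{\ell j}^{(1)}|\approx |\zeta_{k+1}/\zeta_{k-1}|/(2k)$ for $j\notin S$. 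The correlation gap for the projected teacher (\Cref{lem:degree_1_gap}) is then a Berry--Esseen/Hoeffding argument over the random projection coefficients applied to these \emph{known} scaling factors, not a structural statement about arbitrary low-loss networks.

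Two smaller quantitative points. First, the gap you aim for, ``smaller by a factor of at least $d^{\Omega(k)}$,'' is much larger than what the paper actually establishes or needs: the in-support versus out-of-support ratio in \Cref{lem:bounds_on_coeffs} is $\Theta(1/k)$ versus $O(1/(kd))$, i.e.\ a factor of $d$, and after projection the gap in \Cref{lem:degree_1_gap} is $\Omega(1/(k^2\sqrt{m_t}))$ versus $\tilde O(1/(kd\sqrt{m_t}))$. A $\poly(d)$ gap suffices because the first-phase sample complexity only needs to beat Hoeffding at scale $1/(kd)$, giving $\tilde O((kd)^2)$ samples (\Cref{lem:emp_grad_gap}). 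Second, your second phase (``direct memorization in $\tilde O(2^{O(k)}\epsilon^{-2})$ samples'') drops the $\poly(d)$ factor; the paper trains only the top layer with hinge loss and invokes the Barak et al.\ analysis (\Cref{lem:second_stage_conclusion}), which costs $\Theta(m_s d^2 k^3/\epsilon^2)$ samples.
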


 Thus, one-shot distillation \emph{requires} $\Omega(d^{O(k)})$ samples to learn sparse parties, while our curriculum extraction scheme can learn using only $O(2^{O(k)} \poly(d))$ samples. 
We show in Figure~\ref{fig:student_accuracies} (a) that our curriculum extraction scheme significantly outperforms one-shot distillation empirically, as predicted by our theory -- our scheme succeeds in learning, while one-shot distillation fails after training using $2 \cdot 10^6$ samples. Furthermore, it has similar performance as progressive distillation for a carefully chosen checkpoint -- we choose the checkpoint during training of the teacher network whose output is most correlated with the support of the parity function, as proposed by~\citet{panigrahi2024efficient}.

We also show empirically that our scheme continues to outperform one-shot distillation when using a transformer-based architecture for learning sparse parities in Figure~\ref{fig:student_accuracies} (b).

\paragraph{Masked Language Modeling (BERT).}

In addition to learning sparse parities, we empirically study our curriculum extraction scheme for language modeling, focusing on BERT-style \emph{masked language modeling}. We study two settings with different kinds of data: (i) Synthetic data generated by a Probabilistic Context-free Grammar (PCFG), and (ii) Real-world language data from Wikipedia.

In the case of PCFGs, we show that our scheme outperforms one-shot distillation, both in terms of computational efficiency (number of FLOPs), and in terms of sample efficiency (number of iterations), in Figures~\ref{fig:pcfg_bert} (a) and (b). We also show that our curriculum scheme outperforms just using the final hidden representation of the teacher to distill before distilling using the full network; this suggests that the efficiency benefits of our scheme do indeed come from the fact that the layers of the teacher network implicitly act as a curriculum, rather than merely from the increased dimensionality of the distilled features.

For Wikipedia data, we show in Figure~\ref{fig:student_accuracies} (c) that curriculum extraction has a significant accuracy advantage over one-shot distillation when training for $24 \cdot 10^3$ iterations with a batch size of $128$ -- for extraction, we use $4$ intermediate layers to distill, before distilling with the full teacher network.

\section{Related Work}
\label{sec:related_work}

\paragraph{Knowledge Distillation}  
Knowledge distillation (KD), pioneered by \cite{Hinton2015DistillingTK}, transfers knowledge from computationally expensive teacher models to lightweight students by aligning output distributions. This paradigm has been widely adopted in modern language models for inference cost reduction. Examples of models trained via distillation include ChatGPT O1-mini~\citep{openai-o1mini}, Gemini Flash~\citep{geminiteam2024gemini15unlockingmultimodal}, and the Phi series of models~\citep{abdin2024phi4technicalreport}. Despite the success of distillation, there have been a number of works~\citep{mirzadeh2019improvedknowledgedistillationteacher,cho,supervision_complexity,panigrahi2024progressive} that have observed that simply distilling using the output of the fully trained teacher network can be suboptimal, resulting in a ``teacher-student gap'' in capabilities. To circumvent this gap, these works have proposed \emph{progressive distillation}, a technique that trains the student using intermediate checkpoints during the teacher training progressively, before training on the output of the fully-trained teacher. \citet{panigrahi2024progressive} proposes that the intermediate checkpoints act as an implicit \emph{curriculum}, allowing the student to learn simpler functions before moving to the final complex one, and shows theoretical and empirical evidence to support these claims.

 Despite its promise, progressive distillation faces some challenges -- (1) storing intermediate checkpoints for large models incurs prohibitive costs, (2) finding an effective checkpoint schedule to use is done heuristically \citep{panigrahi2024progressive}, requiring costly trial-and-error, (3) Most models, including open-source ones (e.g., Llama 3~\citep{meta2024llama}, Mistral~\citep{jiang2023mistral7b}) only release final weights, making progressive distillation impossible.

 
Our method builds a curriculum using just the fully-trained teacher network, and thus, avoids the shortcomings or progressive distillation. While our curriculum extraction method is related to layer-wise distillation methods proposed in the literature~\citep{aguilar2020knowledge, liang2023less, jiao2020tinybert,Sun2019PatientKD}, these methods typically require the teacher and student to have the same embedding dimension. In contrast, our method uses a \emph{random projection} to embed the teacher's hidden representation into the student's embedding dimension, allowing us to accommodate differing embedding dimensions. Our method also crucially relies on the stage-wise curriculum extraction of each layer individually, while the aforementioned works typically distill teacher layers simultaneously with the final output. Moreover, all of  works are the prior works are heuristic in nature; we provide rigorous guarantees showing the correctness of our method, albeit in a stylized setting. 

\paragraph{Curriculum Learning}  
Curriculum learning, formalized by \citet{bengio2009curriculum}, structures training data by difficulty to improve learning efficiency. Early NLP work relied on handcrafted curricula \citep{kocmi2017curriculum}, while modern approaches automate this process through self-paced learning \citep{kumar2010self}. Recent advances, such as those by \citep{panigrahi2024efficient}, demonstrate that stagewise pretraining via incremental subnetworks achieves computational efficiency without compromising model quality, further supporting the benefits of progressive learning paradigms. Additionally, studies on BERT reveal that its layers encode linguistic hierarchies \citep{tenney2019bert,hewitt2019structural}, suggesting that intermediate representations can scaffold learning—though this typically requires access to intermediate checkpoints \citep{shwartz2017opening,voita2019analyzing}.

\section{Curriculum Extraction}

We now describe our curriculum extraction scheme formally.

\begin{definition}[Curriculum Extraction Scheme]
\label{def:scheme}
Given a pre-trained teacher network $~T$ and a student network $S$, both having the same number of layers $L$, let $T_i$ and $S_i$ denote the network up to layer $i$ ($T_i:\R^{d} \rightarrow \R^{m_i}$ and $S_i:\R^{d} \rightarrow \R^{n_i}$).
Suppose also that for some $\ell \in [0, L)$ we are given a sequence $\{t_\ell, \dots, t_L\}$ such that $t_i \in \Z$ indicates the number of iterations we train $S_i$ for.
The Layer-Wise Curriculum Extraction Scheme proceeds as follows:

\begin{enumerate}
    \item \textbf{Initialization:} 
    Initialize the student network $S$ with random weights.
    \item \textbf{Layer-Wise Training:}
     For each $i \in [\ell, L-1]$, such that $t_i > 0$:
        \begin{enumerate}
            \item Define a random projection matrix $P_i : \R^{m_i} \rightarrow \R^{n_i}$ for layer $i$.
            \item Train \(S_i\) for \(t_i\) iterations to reduce the MSE loss between \(S_i(x)\) and \(P_i(T_i(x))\):
            $\mathcal{L}_i = \frac{1}{T} \sum_{t=1}^{T} \| S_i(x^{(t)}) - P_i(T_i(x^{(t)})) \|^2_2$
            where $T$ is the number of training samples.
        \end{enumerate}
    \item 
    Train the entire student network $S$ for $t_L$ iterations to reduce the KL-divergence loss between $S(x)$ and $T(x)$.
\end{enumerate}
\end{definition}

\section{Learning Sparse Parities via Curriculum Extraction}
\label{sec:sparse_parities}

To demonstrate the effectiveness of our curriculum extraction scheme, we study its performance in learning sparse parities. 
We compare our curriculum extraction scheme to one-shot distillation, where the student is trained directly on samples generated by the teacher. 



\subsection{Preliminaries}
\label{subsec:prelims}

For our arguments in this section, we will assume WLOG that the support of the unknown parity is $S = [k]$. 
We will learn two-layer MLP networks of the form 
$f(\bx) := \ba \cdot \sigma(\bW \bx + \bb) = \sum_{i=1}^m a_i \sigma(\bw_i \cdot \bx + b_i)$. 
where $\bx \in \R^d; \bW \in \R^{d \times m}$; $\bb, \ba \in \R^m$, and $\sigma(t) := \max(0, t)$ is applied coordinate-wise when applied to a vector. 
We will denote the student network by $f_s$ and the teacher network by $f_t$ with hidden dimensions $m_s$ and $m_t$ respectively.
In general $m_s \leq m_t$. 
Let $\ell_f(\bx, y) := \max(0, 1-f(\bx)y)$ be the hinge loss. 
Our main task will be to find the best fitting two-layer MLP to an unknown sparse parity function. 

\begin{problem}[Learning Sparse Parities]
\label{prob:sparse_parities}
    Let \( S \subset [d] \) with \( |S| = k \) and \( k < d \) denote the support of our unknown sparse parity. 
    For \( \bx \in \{\pm 1\}^d \), we define $\chi_S(\bx) := \prod_{i \in S} x_i$ to be a \emph{sparse parity} supported on $S$.  
    Given a tolerance \(\epsilon \in \R\) and $n$ samples \(\{(\bx_i, \chi_S(\bx_i)) \mid \bx_i \sim_{u.a.r} \{\pm 1\}^d \text{ for } i \in [0, n]\}\) for an unknown support $S$, 
    the task of learning a sparse parity function using a two-layer MLP, is to find a two-layer MLP that achieves loss $\E_{(\bx, y)} [\ell_f(\bx,y)] \leq \epsilon$. 
\end{problem}

For our theoretical analysis, our training setup differs slightly from \Cref{def:scheme} when it comes to our losses -- we use a regularized version of the hinge loss to train the bottom layer of the teacher, the hinge loss to train the top layer of the teacher as well as the student, and a regularized version of a correlation-based distillation loss (defined below) to train the student's hidden layer.

\paragraph{Initialization}
Prior to training, we will initialize the network using the following symmetric initialization from \citet{barak2022hidden}.

\begin{definition}[Symmetric Initialization]
\label{def:sym_initialization}
Let $f(\bx) := \sum_{i=1}^m a_i \sigma(\bw_i \cdot \bx + b_i)$ be a two-layer MLP with input dimension $d$ and hidden dimension $m$. For each 
$1 \leq i \leq m/2,$ we initialize the parameters $\{\bw_i\}_{i=1}^m$, $\{b_i\}_{i=1}^m$ and $\{a_i\}_{i=1}^m$ as follows: $\bw_i \sim U(\{\pm 1\}^d),$
$b_i \sim U\left(\left\{-1 + \frac{1}{k}, \cdots, 1 - \frac{1}{k}\right\}\right), 
\quad a_i \sim U\left(\left\{\frac{\pm 1}{m}\right\}\right)$, $m/2 < i \leq m$ are set to $\bw_i = -\bw_{i -m/2}, \quad b_i = b_{i -m/2}, \quad a_i = -a_{i -m/2}$.
\end{definition}

\paragraph{Training Algorithms:}
We train the teacher network $f_t$ by minimizing the hinge loss in two stages. 
In the first stage of training, we freeze the top layer weights $(\ba)$ and train the network with a regularized version of $\ell_{f_t}(\bx, y)$, given by $\ell_{f_t}(\bx, y) - \lambda \|\bW\|^2$ updating only $\bW$.
In the second stage of training, we freeze the bottom layer weights and biases $\bW, \bb$ and only update $\ba$. 

For the student network $f_s$, we define a \emph{distillation loss} instead.
Let $f_t^{(1)} := \sigma(\bW \bx + \bb) : \R^d \rightarrow \R^{m_t}$ denote the output of the first layer of the teacher and suppose $A \in \R^{m_t \times m_s}$ is a random symmetric projection which mimics the initialization, i.e. for $i \leq m_t/2$, each \new{$A_{ij} \sim U(\{\pm 1/m_t\})$} and for $i > m_t/2$, 
$A_{ij} = -A_{(i-m/2)~j}$.
In the first stage (where we only update the bottom layer weights), the first layer of the student (i.e. $f_s^{(1)}(\bx) := \sigma(\bW_s \bx + \bb_s)$) is trained using the a similarly regularized version of the following distillation loss: 
$\ell_{DL}(\bx, f_s^{(1)}, Af_t^{(1)}) = \new{-f^{(1)}_s(\bx) \cdot (Af^{(1)}_t(\bx))}$, i.e. we use the loss $\ell_{DL}(\bx, f_s^{(1)}, Af_t^{(1)}) - \lambda \|\bW_s\|^2$ for a carefully chosen value of $\lambda$.

The second layer of the student ($\ba_s$) is then trained using the standard hinge loss.

\subsection{Theoretical Results}
\label{sec:main-thm}
We prove that, compared to one-shot distillation—where the student needs at least \(\Omega\left(d^{k-1}\right)\) samples to learn the unknown sparse parity—our curriculum extraction method reduces this requirement to \(\tilde{O}(2^{O(k)}\poly(k,d))\).
We state this formally below:
\begin{theorem}[Curriculum Extraction Requires Fewer Samples]
\label{thm:main}
    Let $m_s$ and $m_t$ denote the student and teacher hidden dimensions, with $m_t \geq m_s$ and $d \geq\tilde \Omega(k^4)$.
    Suppose the teacher model 
    has been trained to learn a $d$-dimensional $k$-sparse parity with the 2-stage training algorithm in \Cref{alg:2_stage_training},
    and achieves a loss of $\eps = O(d^{-c})$ for some constant $c \geq 1$ at the end of the second stage. 
    Suppose further, that we train a student model $f_s$ of size $m_s = \tilde{\Theta}(2^k k)$ using the following two strategies:
\begin{enumerate}
    \item \textbf{Random-projection curriculum extraction:} 
    Train the first layer of the student with a random projection of the first layer of the teacher to the right output dimension, and then train the entire student network with the final teacher network.
    \item \textbf{One-shot Distillation:} Train with the teacher network throughout.
\end{enumerate}

Then,
\begin{enumerate}
    \item 
    \label{item:sufficient_our_distillation_scheme}
    Under our distillation scheme, the total sample complexity to reach a loss of $\epsilon$ with probability $99\%$ is 
    $\Theta(2^{O(k)} \poly(d,k)\epsilon^{-2} \log(k/\epsilon)).$
    \item The necessary sample complexity under distillation is at least
    \label{item:necessary_one_shot_distill}
    $\Omega\left(d^{\min(2c, k-1)}\right)$.
    
\end{enumerate}
\end{theorem}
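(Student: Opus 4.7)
The plan is to handle the upper bound (Part~1) by analyzing the two student-training stages separately, and the lower bound (Part~2) via a correlational statistical query argument. For Stage~1 of the student (where only $\bW_s$ is updated using the projected teacher), the starting point is that the trained teacher has hinge loss at most $d^{-c}/C$, so the scalar output $\ba \cdot f_t^{(1)}(\bx)$ is $d^{-c}/C$-close to $\chi_S(\bx)$ in expectation. Fourier-analyzing this on $\{\pm 1\}^d$ implies that the correlations $\E[x_j \cdot f_t^{(1)}(\bx)_i]$ are essentially supported on $j \in S$, with off-support correlations controlled by $O(d^{-c})$. Since $A$ is a symmetric $\pm 1/m_t$ projection mimicking the teacher's initialization distribution (and the teacher's $\ba$ itself has this symmetric structure), $A f_t^{(1)}(\bx)$ retains an $\Omega(2^{-O(k)})$-sized correlation with $\chi_S(\bx)$ and only $O(d^{-c})$ correlation with any $x_j$ for $j \notin S$. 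This is precisely what Figure~\ref{fig:correlation_plot} illustrates.

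Given this signal-to-noise gap, I would follow the single regularized SGD step framework of \citet{barak2022hidden}. At symmetric initialization, the gradient of $-f_s^{(1)}(\bx) \cdot Af_t^{(1)}(\bx) - \lambda\|\bW_s\|^2$ with respect to $w_{ij}$ has a signal component of magnitude $\Omega(2^{-O(k)})$ for $j \in S$ and only $O(d^{-c})$ for $j \notin S$, while the empirical noise from $n$ samples contributes $\tilde O(1/\sqrt{n})$ by Hoeffding. Taking $n = \tilde O(\poly(d,k) \cdot 2^{O(k)})$ makes the empirical noise smaller than the signal, and choosing $\lambda$ between the signal and noise scales (as in \citet{barak2022hidden}) forces the student's first-layer weights after one regularized step to concentrate along the support. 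Stage~2 then trains only $\ba_s$ on the teacher's output via the hinge loss; since $\bW_s$ is now support-dependent and $m_s = \tilde\Theta(2^k k)$ suffices to represent $\chi_S$ exactly over the $k$-dimensional reduced problem, a standard uniform-convergence argument on the hinge loss gives $\tilde O(2^{O(k)}/\epsilon^2)$ additional samples to reach loss $\epsilon$. Summing the two stages yields the claimed $\tilde\Theta(2^{O(k)} \poly(d,k)\epsilon^{-2})$ bound.

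For Part~2, I would adapt the standard SQ/CSQ lower bound for learning sparse parities. One-shot distillation effectively supplies queries of the form $\E[\phi(\bx) f_t(\bx)]$, and since $f_t$ is $d^{-c}/C$-close to $\chi_S$, any answer carries an intrinsic bias of at most $d^{-c}$. Two regimes combine: if $\phi$ has only small correlation with any $k$-parity, the standard SQ hardness of $k$-parity (SQ dimension $\Theta(d^k)$) yields $\Omega(d^{k-1})$ necessary samples; if $\phi$ has large correlation with $\chi_S$, we still need to estimate $\E[\phi \cdot f_t]$ to precision $o(d^{-c})$ to separate $f_t$ from a null teacher, which by Chebyshev requires $\Omega(d^{2c})$ samples. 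Taking the minimum yields $\Omega(d^{\min(2c, k-1)})$. The hard part will be Stage~1 of Part~1: rigorously showing that the random symmetric projection $A$ preserves constant-order Fourier mass on $\chi_S$ in $A f_t^{(1)}$ while keeping off-support coordinates small, so that a single regularized SGD step identifies the support with only $\poly(d,k)$ samples. This requires carefully tracking how the symmetric $\pm 1/m_t$ structure of $A$ interacts with the teacher's learned (and potentially adversarial) first-layer Fourier spectrum.
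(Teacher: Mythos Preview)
Your Stage~1 argument for Part~1 has a genuine gap. You infer from small teacher hinge loss that $\ba\cdot f_t^{(1)}\approx \chi_S$, and then that ``Fourier-analyzing this'' gives $\E[x_j\, f_t^{(1)}(\bx)_i]$ essentially supported on $S$, and that $(Af_t^{(1)})_i$ retains an $\Omega(2^{-O(k)})$ correlation with $\chi_S$. Neither step is valid. For $k\ge 2$, $\chi_S$ has \emph{zero} degree-one Fourier coefficients, so closeness of the scalar output $\ba\cdot f_t^{(1)}$ to $\chi_S$ says nothing about the degree-one coefficients of individual hidden neurons, nor of a fresh random combination $(Af_t^{(1)})_i$: the trained $\ba$ was updated in the teacher's second stage and is not distributed as $\pm 1/m_t$, so $A_i$ and $\ba$ are not exchangeable. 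And the quantity that actually drives the student's Stage~1 gradient at symmetric initialization is the degree-one term $\E[(Af_t^{(1)})_i\,x_j]$ (plus a dominated majority term), not a correlation with $\chi_S$; a hidden layer can realize $\chi_S$ perfectly under one particular $\ba$ while individual neurons carry arbitrary off-support degree-one mass that only cancels under that specific $\ba$.

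The paper's route differs at exactly this point. It does not use the teacher's final loss for Stage~1 at all; it uses that the theorem \emph{assumes} the teacher was trained via \Cref{alg:2_stage_training}, so the teacher's first-layer weights have the explicit structure of \Cref{lem:effect_of_1_step_on_w}: $|w^{(1)}_{\ell j}|\approx 1/(2k)$ for $j\in S$ and $O(1/(kd))$ otherwise. This makes each hidden neuron essentially a function of $\bx_S$, and \Cref{lem:bounds_on_coeffs} then gives per-neuron bounds $|\E_{\bx}[\phi_{b_\ell}(\bw_\ell\cdot\bx)x_j]|\ge\Omega(1/k)$ for $j\in S$ (for a constant fraction of $\ell$) and $\le O(1/(kd))$ for $j\notin S$. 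The projected correlation $\E[(Af_t^{(1)})_i\,x_j]=\sum_{\ell\le m_t/2} A_{i\ell}\,\E[\phi_{b_\ell}(\bw_\ell\cdot\bx)x_j]$ is a scaled Rademacher sum in the fresh signs $A_{i\ell}$; the in-support \emph{lower} bound requires a Berry--Esseen \emph{anticoncentration} step (\Cref{lem:degree_1_gap}) to rule out cancellation, yielding $\Omega(1/(k^2\sqrt{m_t}))$ only with constant probability per projected dimension $i$, which is why $m_s=\tilde\Theta(2^k k)$ is needed to collect enough ``good'' dimensions. Your plan omits both the reliance on \Cref{lem:effect_of_1_step_on_w} and the anticoncentration argument, and without them the claimed $\Omega(2^{-O(k)})$ in-support signal is not established. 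Your Part~2 CSQ sketch is fine in spirit and matches what the paper invokes.
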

 The key difference between the two is that, in one-shot distillation, the student must identify one of \(\Omega(d^k)\) possible parity functions from scratch. In contrast, our scheme splits the learning into two phases: identifying the support and learning the final function. Initially, the gradients of the distillation loss guide the student in detecting the support of the sparse parity via the bottom layer. With the support identified, the student only needs to select from \(O(2^k)\) possible parities.

\subsubsection{Proof Overview}
\label{subsec:proof_overview}

\begin{figure}[t]
     \begin{minipage}[b]{\linewidth}
        \centering
        \includegraphics[width=1\textwidth]{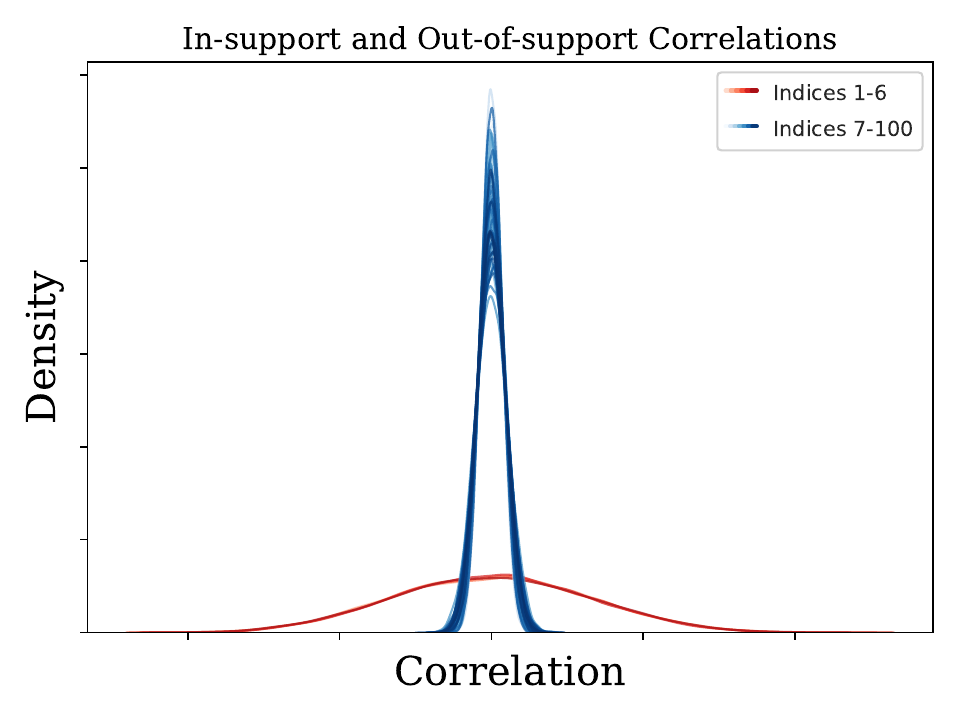}
        \caption{\textbf{In-support and out-of-support correlations.} 
     A two-layer MLP trained on 100-dimensional 6-sparse parity data exhibits distinct in-support (red) and out-of-support (blue) correlations of $(Af_t^{(1)})(\bx)$ with $x_j$ for the random projection $A \in \R^{1 \times m_t}$. When $j$ is in the support, the correlations show significantly larger standard deviations compared to when $j$ is outside the support. }
        \label{fig:correlation_plot}
    \end{minipage}
\end{figure}

We begin by assuming, without loss of generality, that the support of the unknown parity is $S=[k]$. 
Our goal is to prove two main results: a lower bound for one-shot distillation and an upper bound on the sample complexity of our curriculum extraction scheme. 
The lower bound follows directly from \citet{panigrahi2024progressive} (Theorem B.1); here we focus on a high level sketch of the upper bound.


To achieve this, our approach explicitly separates two tasks: first, \emph{support recovery} (identifying the relevant coordinates for the 
$k$-sparse parity), and second, \emph{loss minimization} (optimizing the network to approximate the parity function). This separation is central to reducing the sample complexity. 

Our key insight is to exploit a property of the teacher's first-layer weights. When the teacher is trained using the two-stage process, the first-layer weights have significantly larger magnitudes on the true support than on the out-of-support coordinates (see \Cref{lem:effect_of_1_step_on_w}). This gap is crucial for ensuring that the second stage of training can find a network with low loss.
In particular, Theorem 4 from \citet{barak2022hidden} (restated in \Cref{lem:second_stage_conclusion}) shows that the \(k\)-sparse parity can be accurately approximated by training only the top layer of the student, provided that the hidden layer satisfies some mild conditions, such as being large enough (at least \(\tilde{\Omega}(2^k k)\)) and satisfying a clear gap between the in-support and out-of-support coordinates of the weights at the end of the first stage of training.

In our approach, in the first training step, we transfer the gap from the teacher's first-layer weights to the student's first-layer weights using the gradients of the distillation loss. This transfer enables support recovery using far fewer samples than would be required if we trained the entire network from scratch. Once the support is recovered, the second stage of training, where we update only the top layer weights, proceeds exactly as in \cite{panigrahi2024progressive, barak2022hidden}.

By an appropriate choice of the regularization parameter $\lambda$, we can ensure that the student's first layer weights $(f_s^{(1)})$ become proportional to the gradients after the first update. This means that it is sufficient to demonstrate a clear gap between the magnitudes of the in-support and out-of-support coordinates of the gradient at initialization. In fact, \Cref{lem:teach_gap_implies_student_gap} shows that the gradient of the distillation loss with respect to \(\bw_i\) is proportional to
$\E_{\bx}\Bigl[(Af_t^{(1)}(\bx))_i \, \bx + (Af_t^{(1)}(\bx))_i \, \maj(\bw_i \odot \bx)\, \bx\Bigr].$
Thus, if we can show that the magnitude of the $j$-th coordinate for $j \in S$ (in-support) is significantly larger than for $j \notin S$ (out-of-support) the student can effectively recover the support from the gradient information.

For the remainder of this discussion, we focus on showing this gap for the first term, since it will turn out that the second term is dominated by this first term.
The \(j\)-th coordinate of the first term is given by $\E_{\bx}\Bigl[(Af_t^{(1)}(\bx))_i\, x_j\Bigr].$
Due to the symmetric initialization (defined in \Cref{def:sym_initialization}),  the weights after the first training stage retain this symmetry (see, \Cref{lem:effect_of_1_step_on_w}).
As a result, we can rewrite the first term as a sum of scaled Rademacher random variables \(\{A_{i\ell}\}_{\ell=1}^{m_t/2}\), i.e., 
$\E_{\bx}\Bigl[(Af_t^{(1)}(\bx))_i\, x_j\Bigr] 
= \sum_{\ell=1}^{m_t/2} A_{i\ell}~\E_{\bx}\Bigl[\left( \sigma(\bx\cdot \bw_\ell + b_\ell) - \sigma(-\bx\cdot \bw_\ell + b_\ell) \right)~x_j\Bigr]$. Let $s_{\ell j}$ be scaling factors defined as, $s_{\ell j} := \E_{\bx}\Bigl[\left( \sigma(\bx\cdot \bw_\ell + b_\ell) - \sigma(-\bx\cdot \bw_\ell + b_\ell) \right)~x_j\Bigr]$. 

Then, $ \E_{\bx}\Bigl[(Af_t^{(1)}(\bx))_i\, x_j\Bigr] = \sum_{\ell=1}^{m_t/2} A_{i\ell} s_{\ell j}$. By the central limit theorem, the sum of scaled Rademacher random variables behaves like a mean-zero Gaussian with variance $\sigma_j^2 := \sum_{\ell=1}^{m_t/2} s_{\ell j}^2/m_t^2$.
\Cref{lem:bounds_on_coeffs} then guarentees that the scaling factors are significantly larger when \(j\) is in the true support \(S\) than when \(j\) is not. 
This, in turn, 
implies that the standard deviations of the in-support $j$ are much larger than those for the out-of-support $j$, more precisely, $\sigma_j > \frac{1}{k\sqrt{m_t}}$ for $j$ in the support, and $\sigma_j < \frac{1}{kd\sqrt{m_t}}$ for $j$ out of the support (see \Cref{fig:correlation_plot} for the corresponding distributions in our trained network).


Next, by applying anticoncentration and concentration inequalities for sums of Rademacher random variables
- which behave similar to those of a Gaussian - to the sum
$\sum_{\ell=1}^{m_t/2} A_{i\ell} s_{\ell j}$
we show that, with high probability over the randomness of the projection $A$, this variance gap carries over to a proportional gap in a sufficiently large subset of coordinates of $\E_{\bx}\Bigl[(Af_t^{(1)}(\bx))_i\,\bx\Bigr]$ 
. This result is summarized informally below; the formal statement of which is provided in \Cref{lem:degree_1_gap}.

\begin{lemma}[Correlation Gap (Informal)]
\label{lem:corr_gap_informal}
Let $d \geq \Omega(k^4)$ and $m_t \geq \Omega(k^2)$. 
Then, with probability 
$99\%$ over the randomness of initialization, an independently drawn subset of at least $\tilde \Omega(2^k k)$ coordinates 
of the projected teacher network satisfy $\left|\E_\bx[(Af_t)_i (\bx) x_j]\right| > \Omega((k^2 \sqrt{m_t})^{-1})$ for all $j$ in the support of the unknown sparse parity, and $\left|\E_\bx[(Af_t)_i (\bx) x_j]\right| \leq O((k^3 \sqrt{m_t})^{-1})$ for $j$ that are out-of-support.
\end{lemma}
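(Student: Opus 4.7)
The plan is to reduce the claim to a quantitative CLT/anticoncentration statement applied row-by-row to the random projection matrix $A$, using the variance separation already supplied by \Cref{lem:bounds_on_coeffs}. First, I use the symmetric structure of the teacher's initialization (Definition~\ref{def:sym_initialization}), which is preserved through one step of training (\Cref{lem:effect_of_1_step_on_w}), together with the antisymmetric choice of $A$, to fold the sum defining $(Af_t^{(1)}(\bx))_i$ down to pairs indexed by $\ell\in[m_t/2]$. Taking expectation over $\bx$ rewrites the quantity of interest as
\[
X_{ij}:=\E_\bx\bigl[(Af_t^{(1)}(\bx))_i\,x_j\bigr]=\sum_{\ell=1}^{m_t/2}A_{i\ell}\,s_{\ell j},\qquad s_{\ell j}:=\E_\bx\bigl[(\sigma(\bw_\ell\cdot\bx+b_\ell)-\sigma(-\bw_\ell\cdot\bx+b_\ell))\,x_j\bigr].
\]
For every fixed $i$, the entries $\{A_{i\ell}\}_\ell$ are i.i.d.\ $\pm 1/m_t$ Rademachers, and different rows of $A$ are independent.

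Next, I would plug \Cref{lem:bounds_on_coeffs} into the variance $\sum_\ell s_{\ell j}^2/m_t^2$ to obtain the conditional standard deviations $\sigma_j^{\mathrm{in}}\ge\Omega(1/(k\sqrt{m_t}))$ for $j\in S$ and $\sigma_j^{\mathrm{out}}\le O(1/(kd\sqrt{m_t}))$ for $j\notin S$. With this gap in place, for each fixed output index $i$ I would study the event $E_i$ that $|X_{ij}|\ge\Omega(1/(k^2\sqrt{m_t}))$ for every $j\in S$ and $|X_{ij}|\le O(1/(k^3\sqrt{m_t}))$ for every $j\notin S$. The out-of-support half is handled by Hoeffding on the Rademacher sum: since the threshold exceeds $\sigma_j^{\mathrm{out}}$ by a factor $d/k^2\ge k^2$, each failure probability is at most $\exp(-\Omega(d^2/k^4))$, which easily survives a union bound over the at most $d$ out-of-support coordinates whenever $d\ge\tilde\Omega(k^4)$. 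The in-support half is more delicate, since Gaussian-scale anticoncentration at radius $\sigma/k$ only contributes probability $\Theta(1/k)$, so the union bound over the $k$ elements of $S$ is just barely admissible. I would obtain it through a Berry--Esseen estimate on $X_{ij}/\sigma_j^{\mathrm{in}}$, with the hypothesis $m_t\ge\Omega(k^2)$ ensuring that the CLT error is strictly dominated by the anticoncentration mass, so that $\Pr[|X_{ij}|<\sigma_j^{\mathrm{in}}/k]\le O(1/k)$ and therefore $\Pr[E_i]\ge\Omega(1)$ after a union bound over $S$.

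The final step amplifies from one $i$ to many. Because the rows of $A$ are independent, the events $\{E_i\}_{i\in[m_s]}$ are mutually independent conditional on the teacher, so a Chernoff bound produces a subset of $\Omega(m_s)$ output coordinates on which $E_i$ holds except with probability $\exp(-\Omega(m_s))$. Choosing $m_s=\tilde\Omega(2^k k)$ — the same student width forced elsewhere by \Cref{lem:second_stage_conclusion} — delivers the claimed subset of size $\tilde\Omega(2^k k)$, with overall success probability at least $99\%$ after absorbing the teacher-side randomness.

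I expect the main obstacle to be the quantitative control of the in-support anticoncentration: the union bound over $|S|=k$ saturates the available probability budget, so I will need to track constants in the Berry--Esseen step carefully, verifying that $\max_\ell|s_{\ell j}|/\sigma_j^{\mathrm{in}}$ remains polynomially bounded in $k$ so that the hypothesis $m_t\ge\Omega(k^2)$ is genuinely sufficient rather than requiring a stronger width assumption. Everything else (Hoeffding on the complement and Chernoff across rows) is standard once the variance gap and the symmetric decomposition are in hand.
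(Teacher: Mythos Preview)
Your proposal is correct and follows essentially the same approach as the paper: fold via the antisymmetric initialization to a Rademacher sum in the rows of $A$, invoke \Cref{lem:bounds_on_coeffs} for the variance separation, use Berry--Esseen anticoncentration for the in-support lower bound together with a union bound over $|S|=k$ (this is exactly where the paper also uses $m_t\ge\Omega(k^2)$), Hoeffding plus a union bound over $d$ for the out-of-support upper bound, and then independence of the rows of $A$ plus a Bernoulli lower-tail bound to harvest $\tilde\Omega(2^k k)$ good output coordinates. Your anticipated obstacle --- controlling $\max_\ell|s_{\ell j}|/\sigma_j^{\mathrm{in}}$ --- is handled in the paper by the uniform upper bound $|s_{\ell j}|\le O(1/k)$ from \Cref{lem:bounds_on_coeffs}, which makes the Berry--Esseen error $O(1/\sqrt{m_t})$ directly.
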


Above, we outlined the argument to show a gap for the first term of the gradient. \Cref{lem:expansion_of_f_r} demonstrates that the second term of the gradient is dominated by the first, 
and so the gap witnessed by the first term transfers to the gradient. 
This gap drives our support recovery.

Now, we must verify that this gap persists when using empirical estimates. To do so, it is sufficient to ensure that the scaling factors of the variables $\{ A_{i\ell} \}_{\ell=1}^{m_t/2}$, are close to the true expectations.
If this closeness holds, the empirical standard deviations will inherit the gap observed by the $\sigma_j$, and the concentration and anticoncentration properties will continue to apply -- thus preserving the conclusion of \Cref{lem:corr_gap_informal}.

In \Cref{lem:emp_grad_gap}, we use Hoeffding's inequality (\Cref{lem:hoeffding}) to show that the empirical scaling factors converge to their true values with only $O((kd)^2 \log(m_td))$ samples. Once the gradients are updated using these empirical estimates, the student's weights exhibit a gap -- after rescaling by $\sqrt{m_t}$ this gap is $\Omega(1/k^2)$.
This result allows us to invoke the analysis from \cite{barak2022hidden, panigrahi2024progressive} for the second stage, which requires an additional $\tilde O(2^{O(k)}\poly(d)/\eps^2)$ samples to achieve a loss of $\eps$. Thus, we learn the parity with a significantly smaller overall sample complexity compared to one-shot distillation.

At this point, we note an important difference between our proof and the one for progressive distillation in \citet{panigrahi2024progressive}. In their work, the weights after the first stage of training adjust to the initialized top layer. This dependence allows them to more easily demonstrate a gap in the gradients, without having to deal with the randomness of a projection. In fact, we also observe the effect of being able to tune to the current top layer in \Cref{fig:student_accuracies} (a), where we see that progressive distillation is able to make some progress even during the stage where we only tune the bottom layer weights. 
In our setting, in the first phase of training, we train the bottom layer of the student independently of the top layer, requiring us to rely on a different argument.

\subsection{Experiments} 
\label{subsec:sparse_parity_experiments}
We investigate curriculum extraction for the problem of learning sparse parities for a Multi-Layer Perceptron (MLP) and Transformer architectures, which we describe below:

\paragraph{Student and Teacher Architectures}
For the \emph{Multi-Layer Perceptron,} both the student and teacher are two-layer MLPs with the teacher network having a hidden dimension of $5 \times 10^4$ and the student having hidden dimension $100$.  For the \emph{transformer architecture}, the transformer configuration has matching embedding dimensions ($256$ dimensions for both teacher and student); however, the teacher has 32 attention heads and the student has only 4. \new{Both student as well as teacher architectures use two decoder blocks followed by a linear projection layer.}
\paragraph{Training and Evaluation}
Our distillation loss is the Mean Squared Error (MSE) and the final checkpoint training is done using the Cross-Entropy loss. 
We measure performance of our model by looking at the accuracy. 
\subsubsection{Discussion}
\paragraph{Multi-Layer Perceptron:}
In \Cref{fig:student_accuracies} (a), we compare the performance of MLPs trained using one-shot distillation, layer-wise curriculum extraction, and the progressive distillation approach from \citet{panigrahi2024progressive}. In both curriculum extraction and progressive distillation, we switch to the final checkpoint at iteration \( 5 \times 10^5 \).

We observe that after \(10^6\) iterations, both methods perform comparably, with progressive distillation achieving slightly higher accuracy. Notably, progressive distillation shows early improvement before the \(5 \times 10^5\) iteration mark, likely due to the MLP's bottom layer quickly tuning to the top layer. In the second phase of training, the top layer benefits from a well-optimized starting point. 
With curriculum extraction on the other hand, the bottom layer starts randomly initialized and uncorrelated with the top layer, limiting early gains. However, once we begin training the top layer, performance improves rapidly, matching progressive distillation. In contrast, one-shot distillation shows no significant improvement, even after \(2 \times 10^6\) iterations.
In fact, we observe in \Cref{fig:mlp_proj_vs_layer_correlation} that 
our curriculum extraction method leads to the more information about the support being transferred to the underlying network than progressive distillation.


\paragraph{Transformer:}
In \Cref{fig:student_accuracies} (b), we compare the performance of the transformer model trained using one-shot distillation and our curriculum extraction scheme, with a checkpoint at $10^5$ iterations. We see that this leads to significantly improved student performance over the one-shot distillation case. 

\section{BERT Language Modeling}
\label{sec:bert}
\begin{figure*}[t]
    \centering
    \begin{minipage}[t]{0.31\linewidth}
        \centering
        \hspace*{-2mm}
        \includegraphics[width=1.1\textwidth]{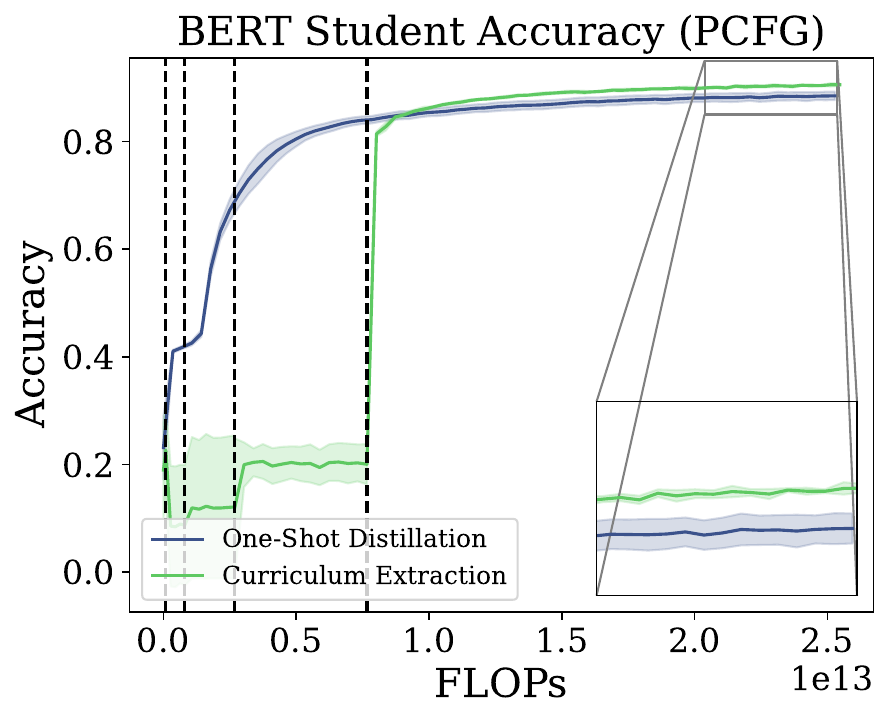}
        \vspace{0.3em} 
        \centering\scriptsize\textbf{(a)} 
        \label{fig:bert_pcfg_flops_vs_acc}
    \end{minipage}
        \hfill
        \begin{minipage}[t]{0.31\linewidth}
    \centering
    \hspace*{-2mm}
    \includegraphics[width=1.1\textwidth]{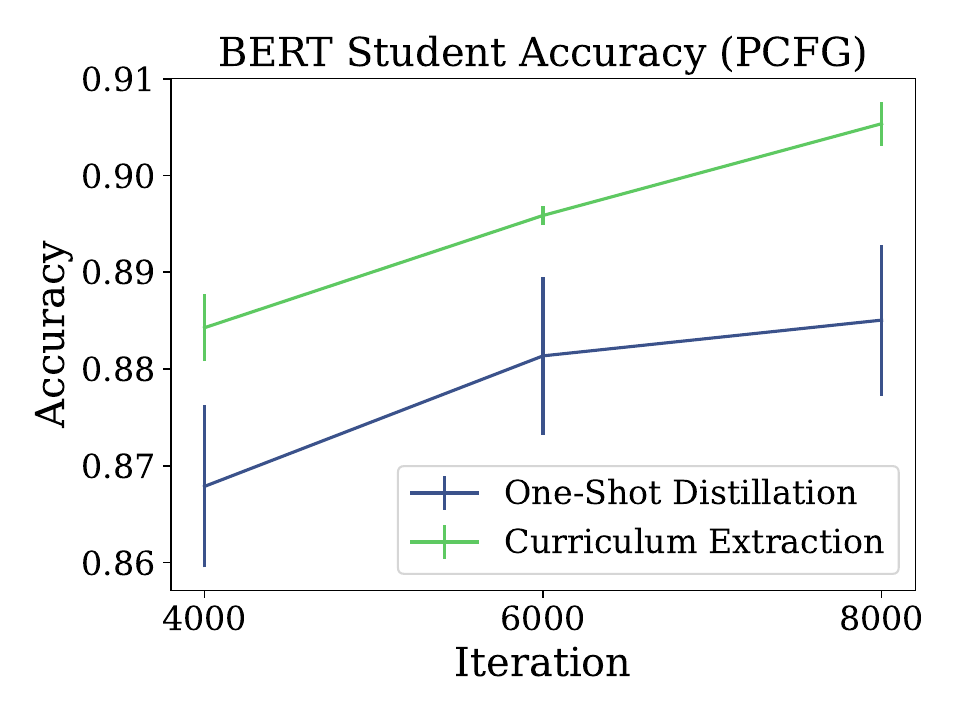}
        \vspace{0.3em} 
        \centering\scriptsize\textbf{(b)} 
    \label{fig:many_vs_one_bert_pcfg}
    \end{minipage}
    \hfill
    \begin{minipage}[t]{0.31\linewidth}
    \centering
    \hspace*{-3mm}
    \includegraphics[width=1.1\textwidth]{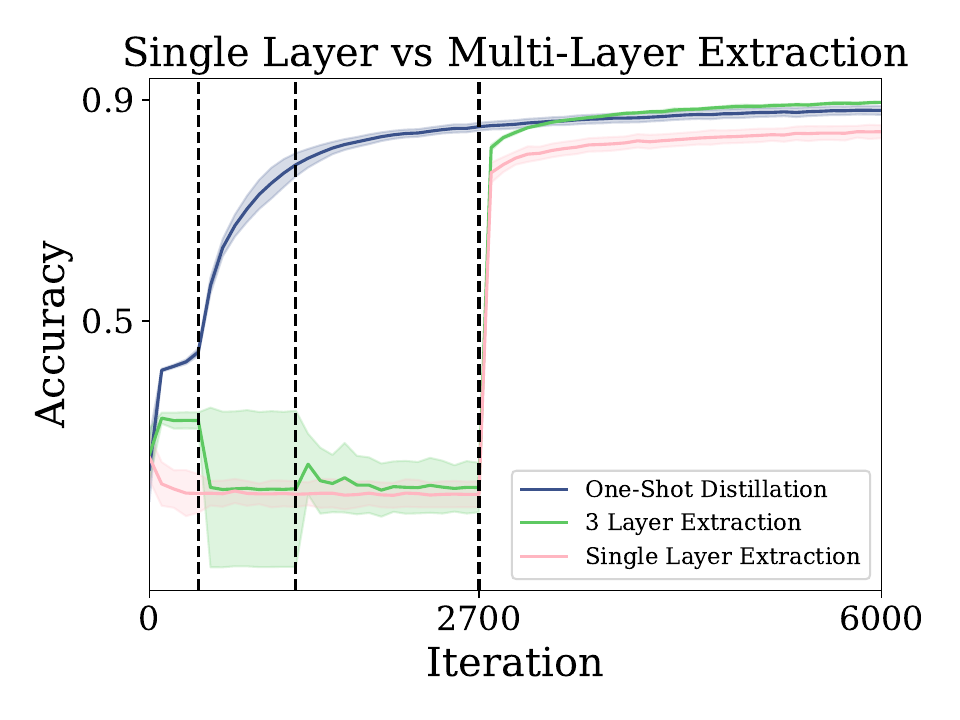}
        \vspace{0.3em} 
        \centering\scriptsize\textbf{(c)} 
    \label{fig:bert_pcfg_layerwise_vs_oneshot_acc}
    \end{minipage}

    \caption{%
    \textbf{PCFG Experiments on BERT.}
    The dashed vertical lines indicate iterations where the layer being distilled from is changed.
    (a) Soon after the final checkpoint, curriculum extraction achieves a larger accuracy in the same number of FLOPs, when compared to one-shot distillation.
    (b) We compare curriculum extraction to one-shot distillation across three models trained with two, three, and four-stage curricula at 4000, 6000, and 8000 iterations, respectively. Curriculum extraction consistently outperforms one-shot distillation at all scales. 
    (c) We compare curriculum extraction performance by varying the number of layers. With a fixed budget of 6000 iterations (2700 for extraction, 3300 for full network training), extracting from three layers outperforms one-shot distillation, and using a single layer.
}
    \label{fig:pcfg_bert}
\end{figure*}
We use BERT models trained for masked prediction, a task that involves predicting the tokens hidden (masked) in an input sequence. Similar to \citet{panigrahi2024progressive} we study this task for learning
probabilistic context-free grammars (PCFGs) and natural language modeling on the Wikipedia dataset. We define the masked token prediction task below.

\begin{problem}[Masked Token Prediction]
    Let $v$ be a vocabulary (including the token \([mask]\)), and let $x$ be a sequence of length $h$. We randomly choose a fraction $(30\%)$ of the positions $M\subseteq [h]$ to mask, with each position included independently with probability $p$. We create a masked input $x \setminus M$ by replacing tokens in $M$ with \([mask]\), a random token, or leaving them unchanged with respective probabilities $80\%,\,10\%,\,10\%$.
The model is then trained via a cross-entropy objective to predict the original tokens at those masked positions.
\end{problem}
For sequence-to-sequence modelling our teacher and student networks map from sequences to real vectors $f_t, f_s \colon v^h \;\to\; \mathbb{R}^{h \times C}$. The teacher’s output distribution at position $i$ is given by $p_T^{(i)}(\bx; \tau) \;=\; \mathrm{softmax}\bigl([f_t(\bx)]_i \,/\, \tau\bigr),$ and $p_S^{(i)}$ is defined analogously for the student.
We set the temperature $\tau$ to $10^{-4}$ for experiments on sparse parity and PCFG, and to $10^{-20}$ for Wikipedia.

\paragraph{Training and Evaluation}
For the masked prediction task, we use $ \ell(\bx; f_S) = \mathbb{E}_{M} \Bigl[\frac{1}{|M|} \sum_{i \in M} \mathrm{KL}\bigl(e_{x_i} \,\|\, p_S^{(i)}(\bx \setminus M; \tau)\bigr)\Bigr],$ where $e_{y}$ is a one-hot vector with a 1 at index $y$ for the final layer training; and the MSE for the indetermediate layers as our distillation loss. Our final performance is measured using the top-1 accuracy on the masked tokens.

\paragraph{Student and Teacher Architectures}
For the \emph{PCFG tasks}, the teacher model uses a BERT-style architecture 
with a 256-dimensional embedding, 32 attention heads, and 4 transformer blocks, trained with a batch size of 512. The student model retains the teacher’s 4-block architecture and 512 batch size but reduces the embedding dimension to 64 and the number of attention heads to 8.\\
For \emph{Wikipedia} language modeling, the teacher model follows a standard BERT-large configuration, with 768-dimensional embeddings, 12 attention heads, and 12 transformer blocks, trained with a batch size of 256. The student model preserves the teacher’s 12-block depth and 256 batch size but reduces the embedding dimension to 256  and the number of attention heads to 4.

\paragraph{Probabilistic Context-Free Grammar}
A Probabilistic Context-Free Grammar (PCFG) generates sentences using a hierarchical tree structure, defined by non-terminal symbols, rules, a probability distribution over the rules, and a vocabulary of terminal symbols. PCFGs have been used as mechanistic proxies for language data \citep{panigrahi2024progressive, allen2023physics, zhao2023transformers}. We focus on masked token prediction for synthetic data from the cfg3b PCFG \citep{allen2023physics} (defined in \Cref{app:pcfg}). Our layer-wise curriculum schedule outperforms one-shot distillation. To confirm this improvement is due to the curriculum and not increased training bandwidth, we compare models trained with different curricula but the same time steps.




\subsection{Discussion}
\paragraph{PCFG Experiments:}

In \Cref{fig:pcfg_bert} (a), curriculum extraction shows FLOPS savings compared to one-shot distillation, outperforming it after \(\approx 0.8 \times 10^{13}\) FLOPS. Additionally, its accuracy improves at every checkpoint.

To establish that the reason for our improved performance is our scheme, 
In \Cref{fig:pcfg_bert} (b), we implement two distinct curriculum strategies for BERT distillation to train the student network while maintaining equivalent \emph{bandwidth} across our experiments.
We compare single-layer and multi-layer extraction, both with 6000 training steps and 2700 dedicated to curriculum extraction. The single-layer model uses one checkpoint at the end, while the three-layer model has checkpoints at 400, 1200, and 2700 steps. Extracting progressively from three layers improves performance. Single-layer extraction appears to perform worse than one-shot distillation in this instance, possibly because the layer is hard to extract sufficiently well by the student architecture without being given some signal on how to extract lower layers. 

In \Cref{fig:pcfg_bert} (c) we see similar performance gains to those in \Cref{fig:pcfg_bert} (b) for higher bandwidth experiments -- for two, three and four-stage curricula. 
The two-stage curriculum skips the linear projection and first transformer layer for iterations 1 through 500, skips just the top linear layer for iterations \new{501 to 1500, and finally trains the entire network for iterations 1501 through 4000}.  
\new{The three-stage curriculum skips two encoder blocks for iterations 1 through 400, skips one encoder block for iterations 401 through 1200, and skips the final projection layer for iterations 1201 through 2700, finally training the entire network for iterations 2701 to 6000.}
The four-stage curriculum skips four encoder blocks for iterations 1 through 200, then three blocks for iterations 201 through 700, then two blocks for iterations 701 through 1500, one block for iteration 1501 through 3000 and finally just the final linear layer until iteration 8000.
    
    
\paragraph{Wikipedia}
    In \Cref{fig:student_accuracies} (c) we see that our extraction scheme also works extremely well on real-world data.
    We train our BERT model for 500 iterations while skipping 4 encoder blocks, a further 1000 iterations while skipping 3 encoder blocks, a further 2000 iterations skipping one encoder block, 4000 iterations skipping the final projection layer and finally the full network for 16500 iterations.

\section{Conclusion}
In this work, we introduced a novel curriculum extraction scheme for knowledge distillation, in which the student network’s intermediate layers are trained to match a random projection of the hidden representations of the teacher network, before training using the full network. Theoretically, we show that this curriculum extraction strategy outperforms one-shot distillation. Experimentally, our method consistently outperformed one-shot distillation on learning sparse parities, and on synthetic and real-world language modeling tasks.

While our theory only applies to a stylized setting, one way to further justify our scheme could be to show similar performance gains for student learning under the hypothesis that high-level concepts that are of interest are represented linearly as directions in some representation space (i.e., the linear representation hypothesis). We leave this as an interesting direction for future work.

\section*{Impact Statement}
This work presents a memory-efficient alternative to progressive distillation by extracting a curriculum directly from the final teacher network. Smaller student models can help lower computational costs and energy consumption, thereby mitigating environmental impact. However, these benefits should be balanced against potential ethical concerns: if the teacher network contains biases or other undesirable characteristics, the distilled student may inherit or even amplify them. Therefore, responsible development and deployment of distilled models require careful auditing of teacher networks, transparent reporting of potential biases, and continual monitoring to ensure fairness and integrity.

\bibliography{icml2025/references}

\newcommand{\etalchar}[1]{$^{#1}$}
\begin{thebibliography}{DAGY{\etalchar{+}}25}

\bibitem[AAB{\etalchar{+}}24]{abdin2024phi4technicalreport}
Marah Abdin, Jyoti Aneja, Harkirat Behl, S{\'e}bastien Bubeck, Ronen Eldan, Suriya Gunasekar, Michael Harrison, Russell~J Hewett, Mojan Javaheripi, Piero Kauffmann, et~al.
\newblock Phi-4 technical report.
\newblock {\em arXiv preprint arXiv:2412.08905}, 2024.

\bibitem[ALZ{\etalchar{+}}20]{aguilar2020knowledge}
Gustavo Aguilar, Yuan Ling, Yu~Zhang, Benjamin Yao, Xing Fan, and Chenlei Guo.
\newblock Knowledge distillation from internal representations.
\newblock In {\em Proceedings of the AAAI conference on artificial intelligence}, volume~34, pages 7350--7357, 2020.

\bibitem[APP{\etalchar{+}}18]{anil2020largescaledistributedneural}
Rohan Anil, Gabriel Pereyra, Alexandre Passos, Robert Ormandi, George~E Dahl, and Geoffrey~E Hinton.
\newblock Large scale distributed neural network training through online distillation.
\newblock {\em arXiv preprint arXiv:1804.03235}, 2018.

\bibitem[AZL23]{allen2023physics}
Zeyuan Allen-Zhu and Yuanzhi Li.
\newblock Physics of language models: Part 1, context-free grammar.
\newblock {\em arXiv preprint arXiv:2305.13673}, 2023.

\bibitem[BEG{\etalchar{+}}22]{barak2022hidden}
Boaz Barak, Benjamin~L. Edelman, Surbhi Goel, Sham~M. Kakade, eran malach, and Cyril Zhang.
\newblock Hidden progress in deep learning: {SGD} learns parities near the computational limit.
\newblock In Alice~H. Oh, Alekh Agarwal, Danielle Belgrave, and Kyunghyun Cho, editors, {\em Advances in Neural Information Processing Systems}, 2022.

\bibitem[BLCW09]{bengio2009curriculum}
Yoshua Bengio, J{\'{e}}r{\^{o}}me Louradour, Ronan Collobert, and Jason Weston.
\newblock Curriculum learning.
\newblock In {\em ICML}, pages 41--48, 2009.

\bibitem[CH19]{cho}
Jang~Hyun Cho and Bharath Hariharan.
\newblock On the efficacy of knowledge distillation.
\newblock In {\em 2019 IEEE/CVF International Conference on Computer Vision (ICCV)}, pages 4793--4801, 2019.

\bibitem[DAGY{\etalchar{+}}25]{deepseekai2025deepseekr1incentivizingreasoningcapability}
DeepSeek-AI, Daya Guo, Dejian Yang, Haowei Zhang, Junxiao Song, Ruoyu Zhang, Runxin Xu, Qihao Zhu, Shirong Ma, Peiyi Wang, Xiao Bi, Xiaokang Zhang, Xingkai Yu, Yu~Wu, Z.~F. Wu, Zhibin Gou, Zhihong Shao, Zhuoshu Li, Ziyi Gao, Aixin Liu, Bing Xue, Bingxuan Wang, Bochao Wu, Bei Feng, Chengda Lu, Chenggang Zhao, Chengqi Deng, Chenyu Zhang, Chong Ruan, Damai Dai, Deli Chen, Dongjie Ji, Erhang Li, Fangyun Lin, Fucong Dai, Fuli Luo, Guangbo Hao, Guanting Chen, Guowei Li, H.~Zhang, Han Bao, Hanwei Xu, Haocheng Wang, Honghui Ding, Huajian Xin, Huazuo Gao, Hui Qu, Hui Li, Jianzhong Guo, Jiashi Li, Jiawei Wang, Jingchang Chen, Jingyang Yuan, Junjie Qiu, Junlong Li, J.~L. Cai, Jiaqi Ni, Jian Liang, Jin Chen, Kai Dong, Kai Hu, Kaige Gao, Kang Guan, Kexin Huang, Kuai Yu, Lean Wang, Lecong Zhang, Liang Zhao, Litong Wang, Liyue Zhang, Lei Xu, Leyi Xia, Mingchuan Zhang, Minghua Zhang, Minghui Tang, Meng Li, Miaojun Wang, Mingming Li, Ning Tian, Panpan Huang, Peng Zhang, Qiancheng Wang, Qinyu Chen, Qiushi Du, Ruiqi Ge, Ruisong
  Zhang, Ruizhe Pan, Runji Wang, R.~J. Chen, R.~L. Jin, Ruyi Chen, Shanghao Lu, Shangyan Zhou, Shanhuang Chen, Shengfeng Ye, Shiyu Wang, Shuiping Yu, Shunfeng Zhou, Shuting Pan, S.~S. Li, Shuang Zhou, Shaoqing Wu, Shengfeng Ye, Tao Yun, Tian Pei, Tianyu Sun, T.~Wang, Wangding Zeng, Wanjia Zhao, Wen Liu, Wenfeng Liang, Wenjun Gao, Wenqin Yu, Wentao Zhang, W.~L. Xiao, Wei An, Xiaodong Liu, Xiaohan Wang, Xiaokang Chen, Xiaotao Nie, Xin Cheng, Xin Liu, Xin Xie, Xingchao Liu, Xinyu Yang, Xinyuan Li, Xuecheng Su, Xuheng Lin, X.~Q. Li, Xiangyue Jin, Xiaojin Shen, Xiaosha Chen, Xiaowen Sun, Xiaoxiang Wang, Xinnan Song, Xinyi Zhou, Xianzu Wang, Xinxia Shan, Y.~K. Li, Y.~Q. Wang, Y.~X. Wei, Yang Zhang, Yanhong Xu, Yao Li, Yao Zhao, Yaofeng Sun, Yaohui Wang, Yi~Yu, Yichao Zhang, Yifan Shi, Yiliang Xiong, Ying He, Yishi Piao, Yisong Wang, Yixuan Tan, Yiyang Ma, Yiyuan Liu, Yongqiang Guo, Yuan Ou, Yuduan Wang, Yue Gong, Yuheng Zou, Yujia He, Yunfan Xiong, Yuxiang Luo, Yuxiang You, Yuxuan Liu, Yuyang Zhou, Y.~X. Zhu,
  Yanhong Xu, Yanping Huang, Yaohui Li, Yi~Zheng, Yuchen Zhu, Yunxian Ma, Ying Tang, Yukun Zha, Yuting Yan, Z.~Z. Ren, Zehui Ren, Zhangli Sha, Zhe Fu, Zhean Xu, Zhenda Xie, Zhengyan Zhang, Zhewen Hao, Zhicheng Ma, Zhigang Yan, Zhiyu Wu, Zihui Gu, Zijia Zhu, Zijun Liu, Zilin Li, Ziwei Xie, Ziyang Song, Zizheng Pan, Zhen Huang, Zhipeng Xu, Zhongyu Zhang, and Zhen Zhang.
\newblock Deepseek-r1: Incentivizing reasoning capability in llms via reinforcement learning, 2025.

\bibitem[HM19]{hewitt2019structural}
John Hewitt and Christopher~D Manning.
\newblock A structural probe for finding syntax in word representations.
\newblock In {\em Proceedings of the 2019 Conference of the North American Chapter of the Association for Computational Linguistics: Human Language Technologies, Volume 1 (Long and Short Papers)}, pages 4129--4138, 2019.

\bibitem[HRM{\etalchar{+}}23]{supervision_complexity}
Hrayr Harutyunyan, Ankit Rawat, Aditya Menon, Kim Seungyeon, and Sanjiv Kumar.
\newblock Supervision complexity and its role in knowledge distillation, 01 2023.

\bibitem[HVD15]{Hinton2015DistillingTK}
Geoffrey~E. Hinton, Oriol Vinyals, and Jeffrey Dean.
\newblock Distilling the knowledge in a neural network.
\newblock {\em ArXiv}, abs/1503.02531, 2015.

\bibitem[JSM{\etalchar{+}}23]{jiang2023mistral7b}
Albert~Q Jiang, Alexandre Sablayrolles, Arthur Mensch, Chris Bamford, Devendra~Singh Chaplot, Diego de~las Casas, Florian Bressand, Gianna Lengyel, Guillaume Lample, Lucile Saulnier, et~al.
\newblock Mistral 7b.
\newblock {\em arXiv preprint arXiv:2310.06825}, 2023.

\bibitem[JYS{\etalchar{+}}20]{jiao2020tinybert}
Xiaoqi Jiao, Yichun Yin, Lifeng Shang, Xin Jiang, Xiao Chen, Linlin Li, Fang Wang, and Qun Liu.
\newblock Tinybert: Distilling {BERT} for natural language understanding.
\newblock {\em EMNLP Findings}, pages 4163--4174, 2020.

\bibitem[KB17]{kocmi2017curriculum}
Tom Kocmi and Ondrej Bojar.
\newblock Curriculum learning and minibatch bucketing in neural machine translation.
\newblock 2017.

\bibitem[KPK10]{kumar2010self}
M~Kumar, Benjamin Packer, and Daphne Koller.
\newblock Self-paced learning for latent variable models.
\newblock {\em Advances in neural information processing systems}, 23, 2010.

\bibitem[LZZ{\etalchar{+}}23]{liang2023less}
Chen Liang, Simiao Zuo, Qingru Zhang, Pengcheng He, Weizhu Chen, and Tuo Zhao.
\newblock Less is more: Task-aware layer-wise distillation for language model compression.
\newblock In {\em International Conference on Machine Learning}, pages 20852--20867. PMLR, 2023.

\bibitem[{Met}24]{meta2024llama}
{Meta AI}.
\newblock Llama 3.
\newblock 2024.
\newblock \url{https://ai.meta.com/blog/meta-llama-3/}.

\bibitem[MFL{\etalchar{+}}20]{mirzadeh2019improvedknowledgedistillationteacher}
Seyed~Iman Mirzadeh, Mehrdad Farajtabar, Ang Li, Nir Levine, Akihiro Matsukawa, and Hassan Ghasemzadeh.
\newblock Improved knowledge distillation via teacher assistant.
\newblock In {\em Proceedings of the AAAI conference on artificial intelligence}, volume~34, pages 5191--5198, 2020.

\bibitem[Ope24]{openai-o1mini}
OpenAI.
\newblock Openai o1-mini, 2024.

\bibitem[PLM{\etalchar{+}}24]{panigrahi2024progressive}
Abhishek Panigrahi, Bingbin Liu, Sadhika Malladi, Andrej Risteski, and Surbhi Goel.
\newblock Progressive distillation induces an implicit curriculum.
\newblock {\em arXiv preprint arXiv:2410.05464}, 2024.

\bibitem[PSL{\etalchar{+}}24]{panigrahi2024efficient}
Abhishek Panigrahi, Nikunj Saunshi, Kaifeng Lyu, Sobhan Miryoosefi, Sashank Reddi, Satyen Kale, and Sanjiv Kumar.
\newblock Efficient stagewise pretraining via progressive subnetworks.
\newblock {\em arXiv preprint arXiv:2402.05913}, 2024.

\bibitem[SCGL19]{Sun2019PatientKD}
S.~Sun, Yu~Cheng, Zhe Gan, and Jingjing Liu.
\newblock Patient knowledge distillation for bert model compression.
\newblock In {\em Conference on Empirical Methods in Natural Language Processing}, 2019.

\bibitem[SZT17]{shwartz2017opening}
Ravid Shwartz-Ziv and Naftali Tishby.
\newblock Opening the black box of deep neural networks via information.
\newblock {\em arXiv preprint arXiv:1703.00810}, 2017.

\bibitem[Tea24]{geminiteam2024gemini15unlockingmultimodal}
Gemini Team.
\newblock Gemini 1.5: Unlocking multimodal understanding across millions of tokens of context, 2024.

\bibitem[Ten19]{tenney2019bert}
I~Tenney.
\newblock Bert rediscovers the classical nlp pipeline.
\newblock {\em arXiv preprint arXiv:1905.05950}, 2019.

\bibitem[VTM{\etalchar{+}}19]{voita2019analyzing}
Elena Voita, David Talbot, Fedor Moiseev, Rico Sennrich, and Ivan Titov.
\newblock Analyzing multi-head self-attention: Specialized heads do the heavy lifting, the rest can be pruned.
\newblock {\em arXiv preprint arXiv:1905.09418}, 2019.

\bibitem[ZPGA23]{zhao2023transformers}
Haoyu Zhao, Abhishek Panigrahi, Rong Ge, and Sanjeev Arora.
\newblock Do transformers parse while predicting the masked word?
\newblock {\em arXiv preprint arXiv:2303.08117}, 2023.

\end{thebibliography}
\bibliographystyle{icml2025}

\newpage
\appendix
\onecolumn

\newpage
\appendix
\section{Organization}
 In \Cref{app:prelims} we recall  essential definitions and probabilistic tools that we will use throughout our analysis. 
 In \Cref{app:teacher_training} we revisit the analysis of the teacher model as presented in \cite{panigrahi2024progressive,barak2022hidden}. 
 In \Cref{app:student_training}, we analyze the sample complexity required for our student model to perform well under our proposed framework. 
 Finally, \Cref{app:pcfg} formally defines a probabilistic context-free grammar (PCFG) and reviews the definition of the \textbf{cfg3b} grammar from \cite{allen2023physics}, which serves as the foundation for our experimental evaluation.

\section{Preliminaries}
\label{app:prelims}
This section lays the mathematical groundwork for our analysis. We begin by defining the Fourier expansion of Boolean functions. Next, we introduce important properties of the ReLU function and present key probabilistic tools such as the Berry–Esseen theorem and Hoeffding’s inequality. These results will be repeatedly used in the subsequent analysis to control gradient estimation and sample complexity.

We define the Fourier expansion of a boolean function below. 
\begin{definition}[Fourier Expansion of a Boolean Function]
\label{def:fourier_expansion}
    Let \(f: \{-1, 1\}^n \to \mathbb{R}\) be a Boolean function. The \textbf{Fourier coefficients} of \(f\) are defined as:
\[
\hat{f}(S) = \mathbb{E}_{\bx \sim \{-1, 1\}^n} \left[ f(\bx) \cdot \chi_S(\bx) \right],
\]
where:
\begin{itemize}
    \item \(S \subseteq [n]\) is a subset of the input coordinates,
    \item \(\chi_S(\bx) = \prod_{i \in S} x_i\) is the \textbf{parity function} 
    corresponding to \(S\),
\end{itemize}

The function \(f\) can then be expressed in terms of its Fourier expansion:
\[
f(\bx) = \sum_{S \subseteq [n]} \hat{f}(S) \cdot \chi_S(\bx).
\]
\end{definition}


\subsection{Properties of the ReLU Function}

We will need the following properties of the ReLU function. 

\begin{lemma}[Properties of $\phi_b(t)$]
\label{lem:prop_phi}
Let $\sigma(t) := \max(0, t)$, and let $a, b \in \R$.
Then $\phi_b(a) := \sigma(a + b) - \sigma(-a+b)$ satisfies the following:
\begin{enumerate}
    \item $\phi_b(0) = 0$
    \item $\phi_b(-t) = -\phi_b(t)$
    \item $\phi_b(t)$ is monotonically non-decreasing in $t$. 
\end{enumerate}
\end{lemma}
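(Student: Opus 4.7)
The plan is to verify each of the three properties directly from the definition $\phi_b(a) = \sigma(a+b) - \sigma(-a+b)$; no real machinery is required. For part~(1), substituting $a = 0$ collapses both ReLU terms to $\sigma(b)$, so $\phi_b(0) = \sigma(b) - \sigma(b) = 0$. For part~(2), substituting $-t$ in place of $t$ gives $\phi_b(-t) = \sigma(-t + b) - \sigma(t + b)$, which is precisely $-\phi_b(t)$ after reordering the two summands.

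For part~(3), I would proceed by a case analysis on the signs of $t + b$ and $-t + b$. Since $\sigma$ is piecewise linear with breakpoint at the origin, these two sign conditions partition $\mathbb{R}$ (for each fixed $b$) into at most four intervals. On each such interval the function $\phi_b$ coincides with a linear function whose slope is $0$, $1$, or $2$, depending on which of the two ReLU pieces are active: if both arguments are negative we get slope $0$; if exactly one is positive we get slope $1$; if both are positive we get slope $2$. In every case the slope is non-negative, so $\phi_b$ is non-decreasing on each piece of the partition.

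An equivalent and slightly slicker formulation is to observe that $\phi_b$ is absolutely continuous with almost-everywhere derivative $\phi_b'(t) = \sigma'(t + b) + \sigma'(-t + b)$, where $\sigma'$ takes values in $\{0, 1\}$ wherever defined; since both terms are non-negative, monotonicity follows from the fundamental theorem of calculus. I do not anticipate any genuine obstacle in this lemma; the only minor subtlety is extending the monotonicity argument across the (at most two) kinks of $\sigma$, which is immediate from continuity of $\phi_b$ as a difference of continuous functions.
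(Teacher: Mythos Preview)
Your proof is correct. Parts~(1) and~(2) match the paper verbatim. For part~(3), your case analysis and derivative argument are both valid, but the paper takes a shorter route: since $\sigma$ is non-decreasing, for $t_1 \le t_2$ one has $\sigma(t_1+b) \le \sigma(t_2+b)$ and $-\sigma(-t_1+b) \le -\sigma(-t_2+b)$, and adding these yields $\phi_b(t_1) \le \phi_b(t_2)$ directly. Your approach has the minor advantage of giving the explicit piecewise form of $\phi_b$ (slopes $0$, $1$, $2$), which could be useful downstream, but for the lemma as stated the paper's one-line monotonicity argument is cleaner and avoids any discussion of kinks or almost-everywhere differentiability.
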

\begin{proof}
The proofs follow by the definition of $\phi_b(a)$. 
\begin{enumerate}
    \item $\phi_b(0) = \sigma(b) - \sigma(b)$.
    \item $\phi_b(-t) = \sigma(-t+b) - \sigma(t+b) = -\phi_b(t)$.
    \item This follows from the fact that $\sigma(t)$ is monotonically non-decreasing. If $t_1 \leq t_2$, observe that
    \begin{align*}
        \phi_b(t_1) 
        &= \sigma(t_1 + b) - \sigma(-t_1+b)\\
        &\leq \sigma(t_2 + b) - \sigma(-t_2 + b)\\
        &= \phi_b(t_2).
    \end{align*}
\end{enumerate}
\end{proof}





\subsection{Probability Facts}
We will need the following anticoncentration and concentration inequalities for sums of scaled Rademacher random variables. 

\begin{theorem}[Berry--Esseen]
\label{thm:berry-esseen}
Let $X_1, X_2, \dots, X_n$ be independent random variables satisfying
\[
\mathbb{E}[X_i] = 0,\quad \mathbb{E}[X_i^2] = \sigma_i^2 > 0,\quad \text{and} \quad \mathbb{E}[|X_i|^3] < \infty, \quad \text{for } i=1,\dots,n.
\]
Define
\[
S_n = \sum_{i=1}^n X_i, \quad \sigma^2 = \sum_{i=1}^n \sigma_i^2, \quad \text{and} \quad \rho_n = \sum_{i=1}^n \mathbb{E}[|X_i|^3].
\]
Then for all $x\in\mathbb{R}$, we have
\[
\left| \Pr\left[ \frac{S_n}{\sigma} \le x \right] - \Phi(x) \right| \le C \frac{\rho_n}{\sigma^3},
\]
where $\Phi(x)$ is the standard normal cumulative distribution function and $C>0$ is an absolute constant.
\end{theorem}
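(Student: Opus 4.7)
The plan is to prove this via the characteristic-function approach pioneered by Berry and Esseen. The argument decomposes into a smoothing step that reduces Kolmogorov distance to a characteristic-function estimate, a Taylor-expansion step for that estimate, and a balancing of the truncation parameter.

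First I would invoke Esseen's smoothing lemma, which for any distribution function $F$ with characteristic function $f$ and any $T>0$ gives $\sup_x |F(x)-\Phi(x)| \le \frac{1}{\pi}\int_{-T}^{T}\frac{|f(t)-e^{-t^2/2}|}{|t|}\,dt + \frac{24}{\pi T\sqrt{2\pi}}$. Applied to $F(x)=\Pr[S_n/\sigma\le x]$, this reduces the task to bounding $f(t)=\prod_{i=1}^n \E[e^{itX_i/\sigma}]$ against $e^{-t^2/2}$ on $[-T,T]$. Next I would Taylor-expand each factor. Using $\E[X_i]=0$, $\E[X_i^2]=\sigma_i^2$, and a third-order remainder, one obtains $|f_i(t/\sigma) - (1-t^2\sigma_i^2/(2\sigma^2))| \le |t|^3 \E|X_i|^3/(6\sigma^3)$. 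Taking logarithms carefully (valid once $|t|$ is at most a small absolute constant times $\sigma^3/\rho_n$) and summing in $i$ yields $\log f(t) = -t^2/2 + E(t)$ with $|E(t)|\le C\rho_n |t|^3/\sigma^3$, so $|f(t)-e^{-t^2/2}|\le C'(\rho_n/\sigma^3)|t|^3 e^{-t^2/4}$ in that window.

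Finally I would set $T$ of order $\sigma^3/\rho_n$ in the smoothing inequality. The integral then contributes $O(\rho_n/\sigma^3)$ because $\int |t|^2 e^{-t^2/4}\,dt$ is finite, and the residual $24/(\pi T\sqrt{2\pi})$ also contributes $O(\rho_n/\sigma^3)$; combining gives the stated bound with an absolute constant $C$. The main obstacle is balancing the Taylor-expansion regime against the tail: the bound $|f(t)-e^{-t^2/2}|\le C|t|^3 \rho_n/\sigma^3\cdot e^{-t^2/4}$ degrades for large $|t|$, so one must argue separately that $|f_i(t/\sigma)|\le 1 - c\, t^2\sigma_i^2/\sigma^2$ in the appropriate range, or truncate $T$ at exactly the right scale so both terms match. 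Since Berry--Esseen is a standard textbook result and used here only as a black-box probabilistic tool, in practice I would simply cite a standard reference rather than reproduce the full argument.
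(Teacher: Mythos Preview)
Your proposal is correct and aligns with the paper's treatment: the paper does not prove Berry--Esseen at all but simply states it as a standard probabilistic tool (without proof or citation) and then applies it as a black box to derive the Rademacher anticoncentration bound. Your final remark---that in practice one would cite a reference rather than reproduce the argument---is exactly what the paper does, so there is nothing to add.
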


\begin{theorem}[Anticoncentration for Rademacher Sums]
\label{thm:anticoncentration-rademacher}
Let \(x_1,\dots,x_n\) be independent Rademacher random variables 
and let \(c_1,\dots,c_n\) be real numbers. Define
\[
X = \sum_{i=1}^n c_i x_i,\quad \sigma^2 = \sum_{i=1}^n c_i^2.
\]
Then, for any \(t\in\mathbb{R}\) and any \(\delta>0\), we have
\[
\Pr\Bigl[X\in [t,t+\delta\sigma]\Bigr] \le \frac{\delta}{\sqrt{2\pi}} + 2C\,\frac{\max_i |c_i|}{\sigma},
\]
where \(C\) is the absolute constant in the Berry--Esseen theorem.
\end{theorem}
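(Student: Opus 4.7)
The plan is to reduce the anticoncentration bound to a direct application of Berry--Esseen (\Cref{thm:berry-esseen}) applied to the standardized sum $X/\sigma$, followed by the elementary bound on the Gaussian density.

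First, I would verify the hypotheses of Berry--Esseen for $X_i := c_i x_i$. These are independent with $\E[X_i] = 0$, $\E[X_i^2] = c_i^2$, and $\E[|X_i|^3] = |c_i|^3$. The key estimate is to control the error term $\rho_n/\sigma^3$ in Berry--Esseen by $\max_i|c_i|/\sigma$: using $|c_i|^3 \le \max_j|c_j|\cdot c_i^2$ and summing,
\begin{equation*}
\rho_n = \sum_{i=1}^n |c_i|^3 \le \max_{i}|c_i| \sum_{i=1}^n c_i^2 = \max_i|c_i|\cdot\sigma^2,
\end{equation*}
so that $\rho_n/\sigma^3 \le \max_i|c_i|/\sigma$. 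Plugging into \Cref{thm:berry-esseen} yields the uniform Kolmogorov bound $|\Pr[X/\sigma \le x] - \Phi(x)| \le C\max_i|c_i|/\sigma$ for every $x \in \R$.

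Second, I would write the event $\{X \in [t, t+\delta\sigma]\}$ as $\{X/\sigma \in [a, a+\delta]\}$ with $a := t/\sigma$, and express its probability as the difference of two CDF evaluations. Concretely, $\Pr[X/\sigma \in [a, a+\delta]] \le \Pr[X/\sigma \le a+\delta] - \Pr[X/\sigma \le a-\eps]$ for every $\eps > 0$; applying the Berry--Esseen bound twice (once on each side, picking up the constant $C$ each time) and then letting $\eps \downarrow 0$ gives
\begin{equation*}
\Pr[X/\sigma \in [a, a+\delta]] \le \bigl(\Phi(a+\delta) - \Phi(a)\bigr) + 2C\,\frac{\max_i|c_i|}{\sigma}.
\end{equation*}

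Third, I would bound the Gaussian increment by the sup norm of the standard normal density: since $\Phi'(u) = (2\pi)^{-1/2} e^{-u^2/2} \le (2\pi)^{-1/2}$, the mean value theorem gives $\Phi(a+\delta) - \Phi(a) \le \delta/\sqrt{2\pi}$, which combines with the previous display to yield exactly the stated inequality. The only mild subtlety, which is not really an obstacle, is handling the closed left endpoint $a$ (since $X/\sigma$ is a discrete random variable one must invoke Berry--Esseen at $a-\eps$ and pass to the limit), but this is immediate because the right-continuous CDF bound of Berry--Esseen applies at every real point. No deeper ingredient is required.
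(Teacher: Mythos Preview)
Your proposal is correct and follows essentially the same route as the paper's proof: apply Berry--Esseen to $X/\sigma$, bound the interval probability as a difference of CDF evaluations (picking up two Berry--Esseen error terms), use the mean value theorem to bound $\Phi(a+\delta)-\Phi(a)\le \delta/\sqrt{2\pi}$, and control $\rho_n/\sigma^3$ via $\sum|c_i|^3\le(\max_i|c_i|)\sigma^2$. The only difference is that you are slightly more careful than the paper in handling the closed left endpoint via the $\eps\downarrow 0$ argument, which the paper simply glosses over.
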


\begin{proof}
Since the \(x_i\) are i.i.d. symmetric random variables with \(\mathbb{E}[x_i]=0\) and \(\mathbb{E}[x_i^2]=1\), the random variables \(c_i x_i\) satisfy
\[
\mathbb{E}[c_i x_i] = 0,\quad \mathbb{E}[(c_i x_i)^2] = c_i^2,\quad \text{and} \quad \mathbb{E}[|c_i x_i|^3] = |c_i|^3.
\]
Define
\[
S_n = X = \sum_{i=1}^n c_i x_i,\quad \sigma^2 = \sum_{i=1}^n c_i^2,\quad \rho_n = \sum_{i=1}^n |c_i|^3.
\]
By the Berry--Esseen theorem we have, for all \(y\in\mathbb{R}\),
\[
\left| \Pr\Bigl[\frac{X}{\sigma}\le y\Bigr] - \Phi(y) \right| \le C\,\frac{\rho_n}{\sigma^3},
\]
where \(\Phi(y)\) is the standard normal cumulative distribution function.

Now, fix any \(t\in\mathbb{R}\) and let
\[
y_0 = \frac{t}{\sigma} \quad \text{and} \quad y_1 = \frac{t+\delta\sigma}{\sigma} = y_0+\delta.
\]
Then
\[
\Pr\Bigl[X\in [t,t+\delta\sigma]\Bigr] = \Pr\Bigl[\frac{X}{\sigma}\in [y_0, y_1]\Bigr] 
= \Pr\Bigl[\frac{X}{\sigma}\le y_1\Bigr] - \Pr\Bigl[\frac{X}{\sigma}\le y_0\Bigr].
\]
Using the Berry--Esseen bound for both endpoints, we have
\[
\Pr\Bigl[\frac{X}{\sigma}\le y_i\Bigr] = \Phi(y_i) + \varepsilon_i,\quad \text{with } |\varepsilon_i|\le C\,\frac{\rho_n}{\sigma^3},\quad i=0,1.
\]
Therefore,
\[
\Pr\Bigl[X\in [t,t+\delta\sigma]\Bigr] = \Phi(y_1)-\Phi(y_0) + (\varepsilon_1-\varepsilon_0).
\]
Taking absolute values and using the triangle inequality, we deduce
\[
\Pr\Bigl[X\in [t,t+\delta\sigma]\Bigr] \le \Phi(y_1)-\Phi(y_0) + 2C\,\frac{\rho_n}{\sigma^3}.
\]

Next, note that by the mean value theorem for the differentiable function \(\Phi\), there exists some \(y^*\in [y_0,y_1]\) such that
\[
\Phi(y_1)-\Phi(y_0) = \delta\,\varphi(y^*),
\]
where \(\varphi(y)=\frac{1}{\sqrt{2\pi}}e^{-y^2/2}\) is the standard normal density. Since \(\varphi(y)\le \frac{1}{\sqrt{2\pi}}\) for all \(y\), we obtain
\[
\Phi(y_1)-\Phi(y_0) \le \frac{\delta}{\sqrt{2\pi}}.
\]

It remains to control the term \(\rho_n/\sigma^3\). Since
\[
\rho_n = \sum_{i=1}^n |c_i|^3 \le (\max_i |c_i|) \sum_{i=1}^n c_i^2 = (\max_i |c_i|) \sigma^2,
\]
we have
\[
\frac{\rho_n}{\sigma^3} \le \frac{\max_i |c_i|}{\sigma}.
\]
Thus, the error term satisfies
\[
2C\,\frac{\rho_n}{\sigma^3} \le 2C\,\frac{\max_i |c_i|}{\sigma}.
\]

Combining the estimates, we conclude that
\[
\Pr\Bigl[X\in [t,t+\delta\sigma]\Bigr] \le \frac{\delta}{\sqrt{2\pi}} + 2C\,\frac{\max_i |c_i|}{\sigma}.
\]
This completes the proof.
\end{proof}


We will also need the standard Hoeffding's inequality. 

\begin{lemma}[Hoeffding's Inequality]
\label{lem:hoeffding}
Let \( X_1, X_2, \ldots, X_n \) be independent random variables such that \( a_i \leq X_i \leq b_i \) almost surely for each \( i \in \{1, 2, \ldots, n\} \). Define the sample mean 
\[
\overline{X} = \frac{1}{n} \sum_{i=1}^n X_i,
\]
and let \( \mu = \mathbb{E}[\overline{X}] \) be the expected value of the sample mean. Then for any \( t > 0 \),
\[
\mathbb{P}\left(|\overline{X} - \mu| \geq t \right) \leq 2 \exp\left(-\frac{2n^2 t^2}{\sum_{i=1}^n (b_i - a_i)^2}\right).
\]
\end{lemma}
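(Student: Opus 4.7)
The plan is to prove this via the classical Chernoff/MGF method. First, I would center the variables by setting $Y_i = X_i - \mathbb{E}[X_i]$, so each $Y_i$ is mean-zero and supported in an interval of width $b_i - a_i$. The event $\{\overline{X} - \mu \geq t\}$ is the same as $\{\sum_{i=1}^n Y_i \geq nt\}$, and for any $s > 0$, Markov's inequality applied to $e^{s \sum_i Y_i}$ together with independence gives
\[
\mathbb{P}\Bigl(\sum_{i=1}^n Y_i \geq nt\Bigr) \le e^{-snt}\prod_{i=1}^n \mathbb{E}\bigl[e^{s Y_i}\bigr].
\]

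The core step is Hoeffding's lemma: for a mean-zero random variable $Y$ supported in $[\alpha,\beta]$, one has $\mathbb{E}[e^{sY}] \le \exp\bigl(s^2(\beta-\alpha)^2/8\bigr)$. I would derive this from convexity of $y \mapsto e^{sy}$, which yields the two-point upper bound $e^{sy} \le \tfrac{\beta - y}{\beta - \alpha}e^{s\alpha} + \tfrac{y-\alpha}{\beta-\alpha}e^{s\beta}$ for $y\in[\alpha,\beta]$. Taking expectations and using $\mathbb{E}[Y]=0$ reduces $\log \mathbb{E}[e^{sY}]$ to a one-variable function $L(h)$ of $h = s(\beta-\alpha)$, and a direct calculation shows $L''(h) \le 1/4$ uniformly, so Taylor expansion around $0$ gives $L(h)\le h^2/8$. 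This is the one genuinely non-trivial calculation and is the step I would expect to require the most care.

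Applying the lemma to each $Y_i$ and plugging back in yields
\[
\mathbb{P}\Bigl(\sum_{i=1}^n Y_i \ge nt\Bigr) \le \exp\!\Bigl(-snt + \tfrac{s^2}{8}\sum_{i=1}^n (b_i-a_i)^2\Bigr).
\]
Optimizing over $s$ (the choice $s = 4nt/\sum_i (b_i-a_i)^2$ minimizes the exponent) gives the one-sided bound $\exp\!\bigl(-2n^2 t^2/\sum_i(b_i-a_i)^2\bigr)$. The lower-tail bound $\mathbb{P}(\overline{X}-\mu \le -t)$ follows from the identical argument applied to $-Y_i$ (which is still mean-zero and bounded in an interval of the same width), and a union bound over the two tails contributes the leading factor of $2$, giving the stated inequality.
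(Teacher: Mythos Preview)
Your proof is correct and is the standard Chernoff--MGF argument for Hoeffding's inequality. However, the paper does not actually prove this lemma: it is stated without proof as ``the standard Hoeffding's inequality'' in the preliminaries and used as a black box. So there is nothing to compare against; your proposal simply supplies the classical proof where the paper cites the result.
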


The following application will be useful.

\begin{lemma}[Lower Bound on the Sum of Bernoulli Variables]
\label{lem:lower_bound_on_bernoulli_sum}
Let \(X_1, X_2, \dots, X_n\) be independent random variables such that for each \(i\),
\[
\Pr[X_i = 1] = p \quad \text{and} \quad \Pr[X_i = 0] = 1-p,
\]
with \(p \in (0,1]\). Then, for any \(\delta \in (0,1)\), if
\[
n \ge \frac{2\ln(1/\delta)}{p^2},
\]
it holds with probability at least \(1-\delta\) that
\[
\sum_{i=1}^n X_i \ge \frac{np}{2}.
\]
\end{lemma}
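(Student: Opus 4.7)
The plan is to apply Hoeffding's inequality directly to the sample mean $\overline{X} = \frac{1}{n}\sum_{i=1}^n X_i$. Since each $X_i$ is Bernoulli with parameter $p$, we have $\mu = \mathbb{E}[\overline{X}] = p$, and since $X_i \in [0,1]$ almost surely, we can take $a_i = 0$ and $b_i = 1$, so that $\sum_{i=1}^n (b_i - a_i)^2 = n$. The event that $\sum_i X_i < np/2$ is exactly the event $\overline{X} - p < -p/2$, so it suffices to bound the probability of a one-sided lower deviation of size $p/2$.

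Concretely, I would set the deviation parameter to $t = p/2$ in \Cref{lem:hoeffding}, which yields
\[
\Pr\!\left[\,\sum_{i=1}^n X_i < \tfrac{np}{2}\,\right] \;=\; \Pr\!\left[\,\overline{X} - p < -\tfrac{p}{2}\,\right] \;\le\; \Pr\!\left[\,|\overline{X} - p| \ge \tfrac{p}{2}\,\right] \;\le\; 2\exp\!\left(-\tfrac{n p^2}{2}\right).
\]
Imposing $2\exp(-np^2/2) \le \delta$ and solving for $n$ gives the threshold $n \ge 2\ln(2/\delta)/p^2$, which matches the statement up to the trivial factor of $2$ inside the logarithm. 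To recover the stated constant $2\ln(1/\delta)/p^2$ exactly, one invokes the one-sided form of Hoeffding's bound, $\Pr[\overline{X} - p \le -p/2] \le \exp(-np^2/2)$, which follows from the same Chernoff-style moment generating function argument that underlies \Cref{lem:hoeffding} but without the union over the two tails.

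There is no real obstacle here: this is a one-line application of concentration for bounded random variables, and the only minor subtlety is the cosmetic two-sided versus one-sided distinction discussed above. I would therefore present the proof as a short calculation that plugs $t = p/2$ into Hoeffding's inequality, rewrites the deviation event in the equivalent form $\sum_i X_i < np/2$, and solves for $n$.
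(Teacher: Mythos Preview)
Your proposal is correct and follows essentially the same approach as the paper: apply Hoeffding's inequality with deviation $t = p/2$ (equivalently $t = np/2$ for the sum), obtain the bound $\exp(-np^2/2)$, and solve for $n$. You even anticipate the one minor wrinkle the paper glosses over, namely that the stated two-sided form of \Cref{lem:hoeffding} yields an extra factor of $2$ inside the logarithm, whereas the paper silently invokes the one-sided version to get the constant $2\ln(1/\delta)/p^2$ exactly.
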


\begin{proof}
Since each \(X_i\) is a Bernoulli random variable taking values in \([0,1]\) with \(\mathbb{E}[X_i]=p\), we have
\[
\mathbb{E}\left[\sum_{i=1}^n X_i\right]= np.
\]
By Hoeffding's inequality, for any \(t > 0\) it holds that
\[
\Pr\!\Biggl[\sum_{i=1}^n X_i \le np - t\Biggr] \le \exp\!\Biggl(-\frac{2t^2}{n}\Biggr).
\]
Our goal is to ensure
\[
\sum_{i=1}^n X_i \ge \frac{np}{2},
\]
which is equivalent to having a deviation no more than 
\[
np - \frac{np}{2} = \frac{np}{2}.
\]
Thus, setting \(t = \frac{np}{2}\) in Hoeffding's inequality, we obtain
\[
\Pr\!\Biggl[\sum_{i=1}^n X_i \le \frac{np}{2}\Biggr] \le \exp\!\Biggl(-\frac{2\Bigl(\frac{np}{2}\Bigr)^2}{n}\Biggr)
= \exp\!\Biggl(-\frac{n p^2}{2}\Biggr).
\]
To guarantee that this probability is at most \(\delta\), we require
\[
\exp\!\Biggl(-\frac{n p^2}{2}\Biggr) \le \delta.
\]
Taking logarithms on both sides gives
\[
-\frac{n p^2}{2} \le \ln(\delta),
\]
which is equivalent to
$
n \ge \frac{2\ln(1/\delta)}{p^2}.
$
Thus, if 
$n \ge \frac{2\ln(1/\delta)}{p^2},$
then with probability at least \(1-\delta\) we have
\[
\sum_{i=1}^n X_i \ge \frac{np}{2}.
\]
This completes the proof.
\end{proof}

\section{Teacher Training}
\label{app:teacher_training}

In this section, we recall the teacher training analysis from \cite{panigrahi2024progressive}, which serves as a foundation for our approach. We first describe the first stage of training, where our goal is to ensure that the weights corresponding to the support of the target parity function are amplified relative to the others. Then, we outline the second stage, which leverages this weight separation to drive the teacher model toward a low-loss solution. This two-stage process is critical for setting up the conditions under which our student training analysis will later succeed.

In \Cref{subsec:teach_training_1}, we restate \Cref{lem:effect_of_1_step_on_w}, which shows that after one gradient descent step, 
the in-support weights become larger than the out-of-support weights. 
In \Cref{subsec:teach_training_2}, we recall \Cref{lem:second_stage_conclusion}, which establishes that if the conditions of 
\Cref{lem:effect_of_1_step_on_w} are met, 
the teacher can learn the top-level weights within $O(d^{O(k)} \eps^{-2} \log(dk/\eps \delta))$ samples, achieving a loss of at most $\eps$.

The teacher loss is given by a regularized version of the hinge loss $\ell_{f_t}(\bx, y)=\max(0, 1-f_t(\bx)y)$ for the first stage of training, and the standard hinge loss for the second stage.

\begin{algorithm}[H]
\caption{2-stage training for teacher}
\label{alg:2_stage_training}
\begin{algorithmic}[1]
\Require
Number of iterations $T_2$, 
Learning rates \(\eta_1, \eta_2\), 
batch sizes \(B_1, B_2\), 
weight decay \(\lambda_1\).\\
\textbf{Inner Layer Training:}
\For{\(t = 1\) 
}
    \State Sample \(B_1\)-samples \(\{(\bx^{(j)}, y^{(j)})\}_{j=1}^{B_1}\).
    \State Update the inner layer weights $\{ \bw_1^{(t)}, \dots, \bw_{m_t}^{(t)}\}$ as: 
    \[
    \bw_i^{(t)} \gets \bw_i^{(t-1)} - \eta_1 \mathbb{E}_{(\bx, y) \in \{(\bx^{(j)}, y^{(j)})\}_{j=1}^{B_1}} \left[ \nabla_{\bw_i} \left(\ell_{f_t}(\bx,y) + \lambda_1 \| \bw_{i}^{(t-1)}\|^2 \right) \right]
    \]
\EndFor\\
\textbf{Outer Layer Training:}
\For{\(t \in [0, T_2]\)}
    \State Sample \(B_2\)-samples \(\{(\bx^{(j)}, y^{(j)})\}_{j=1}^{B_2}\).
    \State Update the outer layer weights:
    \[
    \ba^{(t)} \gets \ba^{(t-1)} - \eta_2 \mathbb{E}_{(\bx, y) \in \{(\bx^{(i)}, y^{(i)})\}_{i=1}^{B_2}}\left[  \nabla_{\ba}\ell_{f_t}(\bx, y) \right]
    \]
\EndFor
\end{algorithmic}
\end{algorithm}

Our teacher is trained in exactly the same way as in \cite{panigrahi2024progressive}, using \Cref{alg:2_stage_training}.
For completeness, we recall conditions required for teacher training to succeed. Before we continue, we set up some notation.

\paragraph{Notation} 
\begin{itemize}
    \item In what follows, $B_1, B_2$ are batch sizes, i.e. number of samples drawn to estimate the gradients in the first and second stages of training respectively. $\delta$ will denote the probability of failure, and $\lambda_1$ will denote a regularization parameter (as seen in \Cref{alg:2_stage_training}). 
    \item $\tau_g$ is the error estimate of the gradient of the hinge loss, which in the case of the teacher turns out to be equivalent to the correlation loss at initialization, i.e. for a network $f$,  
$
\left| \mathbb{E}_{\bx, y \sim U(\{\pm 1\}^d)} \left[ \nabla_{w_{ij}} f(\bx)y \right] - \mathbb{E}_{\{(\bx_k, y_k)\}_{k=1}^{B_1}} \left[ \nabla_{w_{ij}} f(\bx)y \right] \right| \leq \tau_g,
$.
    \item $m_t$ and $m_s$ denote the hidden layer sizes of the student and teacher respectively.
    \item For $a, b, c \in \mathbb R$ we will say $a = b \pm c$ if $a \in [b-c, b+c]$. 
    \item For $u, v \in \R^d$, we define $u \odot v = (u_1v_1, \dots, u_dv_d)$.
    \item $\theta(t) := \{ \bW^{(t)}, \ba^{(t)}, \bb^{(t)}\}$ denotes the parameters of the model that the algorithm recovers at timestep $t$.
    \item $\maj(\bx) : \{\pm 1\}^d \rightarrow \{-1, 1\}$ returns $\text{sign}(\sum_{i=1}^d x_i)$, where $\text{sign}(0) :=1$ and $\text{sign}(t) := t/|t|$ for $t \neq 0$. 

\end{itemize}

We will also need the following lemma that controls the gradient error as a function of the batch size. 

\begin{claim}[Gradient Concentration \cite{panigrahi2024progressive}]
\label{claim:gradient_concentration}
Let $f$ be a two-layer network initialized using the symmetric initialization in \Cref{def:sym_initialization} with $m$ being its hidden dimension. 
Fix \(\delta, \tau_g > 0\). 
For all \(i \in [m], j \in [d]\), for a randomly sampled batch of size \(B_1\), \(\{(\bx_k, y_k)\}_{k=1}^{B_1}\), with probability at least \(1 - \delta\),
\[
\left| \mathbb{E}_{\bx, y \sim U(\{\pm 1\}^d)} \left[ \nabla_{w_{ij}} f(\bx) y\right] - \mathbb{E}_{\bx, y \sim \{(\bx_k, y_k)\}_{k=1}^{B_1}} \left[ \nabla_{w_{ij}} f(\bx) y \right] \right| \leq \tau_g,
\]
provided \(B_1 \geq \Omega\left(\tau_g^{-2} \log(md / \delta)\right)\).
\end{claim}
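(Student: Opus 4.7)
The plan is a straightforward Hoeffding bound plus union bound over the $md$ entries of the first-layer weight matrix. First I would compute the per-sample quantity explicitly. For $f(\bx) = \sum_{i=1}^{m} a_i \sigma(\bw_i \cdot \bx + b_i)$, we have
$$\nabla_{w_{ij}} f(\bx)\, y \;=\; a_i\, \sigma'(\bw_i \cdot \bx + b_i)\, x_j\, y.$$
Under the symmetric initialization of \Cref{def:sym_initialization}, $|a_i| = 1/m$, and since $\sigma'(\cdot) \in \{0,1\}$, $|x_j|=1$, $|y|=1$, the per-sample quantity lies in $[-1/m,\, 1/m]$. (The gradient is defined almost everywhere with respect to the randomness over $\bx$, which is all we need since expectations are taken.)

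Next, I would fix a pair $(i,j) \in [m] \times [d]$ and apply Hoeffding's inequality (\Cref{lem:hoeffding}) to the $B_1$ i.i.d.\ random variables $X_k := a_i \sigma'(\bw_i \cdot \bx_k + b_i)\, (x_k)_j\, y_k$, each with range at most $2/m$. This gives
$$\Pr\!\left[\left|\tfrac{1}{B_1}\!\sum_{k=1}^{B_1} X_k \;-\; \mathbb{E}[X_1]\right| \;\ge\; \tau_g\right] \;\le\; 2\exp\!\left(-\tfrac{B_1\, m^2 \tau_g^2}{2}\right).$$
Then I would union bound over all $md$ coordinates $(i,j)$. Requiring the total failure probability to be at most $\delta$ yields the condition
$$B_1 \;\ge\; \frac{2}{m^2\, \tau_g^2} \log\!\left(\tfrac{2md}{\delta}\right),$$
which is comfortably implied by $B_1 \geq \Omega\!\left(\tau_g^{-2} \log(md/\delta)\right)$; the $1/m^2$ factor only makes the requirement weaker, so the stated bound (which does not exploit the $|a_i|=1/m$ scaling) is already sufficient.

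I do not foresee any real obstacle: this is a textbook concentration argument for bounded random variables. The only point that requires care is that $\sigma(t) = \max(0,t)$ is non-differentiable at the origin, but $\Pr[\bw_i \cdot \bx + b_i = 0] = 0$ under the symmetric initialization (since $b_i \in \{-1 + 1/k, \dots, 1-1/k\}$ and $\bw_i \cdot \bx$ is an integer), so choosing any subgradient in $\{0,1\}$ does not affect the analysis. Everything else is an immediate consequence of Hoeffding's inequality and a union bound.
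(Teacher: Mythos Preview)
Your proposal is correct and is exactly the standard argument one would expect: bound the per-sample gradient entry by $1/m$ using the symmetric initialization, apply Hoeffding's inequality (\Cref{lem:hoeffding}) for each fixed $(i,j)$, and union bound over the $md$ coordinates. The paper does not give its own proof of this claim --- it is simply quoted from \cite{panigrahi2024progressive} --- so there is nothing further to compare against.

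One small correction: your remark that $\Pr[\bw_i \cdot \bx + b_i = 0] = 0$ is not quite right. At initialization $\bw_i \cdot \bx$ is an integer of the same parity as $d$, and $b_i$ can equal $0$ (take $j=k$ in the grid $\{-1+1/k,\dots,1-1/k\}$), so when $d$ is even the event has positive probability. This does not affect your argument, however, since fixing any subgradient $\sigma'(0)\in[0,1]$ keeps the per-sample quantity bounded by $1/m$ and Hoeffding applies verbatim.
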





\subsection{Teacher analysis after first stage of training}
\label{subsec:teach_training_1}

After the first stage of teacher training, the weights satisfy the property that they have larger magnitudes for in-support indices compared to out-of-support indices. This property is crucial for the second stage of training to achieve a good solution. This behavior is formally captured by the following lemma. 

\begin{lemma}[Lemma B.2 (Single step gradient descent, from \cite{panigrahi2024progressive})] 
    \label{lem:effect_of_1_step_on_w}
    Let $\zeta_k$ denote the $k^{th}$ Fourier coefficient of the majority function. 
    Fix \( \tau_g, \delta > 0 \). Set \( T_1 = 1 \). 
    Suppose the batch size \( B_1 \geq \Omega(\tau_g^{-2} \log(m_td/\delta)) \). 
    For learning rate \( \eta_1 = \frac{m_t}{k |\zeta_{k-1}|} \) and \( \lambda_1 = 1/2\eta_1 \), 
    the following conditions hold true for all neurons \( i \in [m] \) at the end of the first stage of training with probability at least \( 1 - \delta \).
    \begin{enumerate}
        \item \( \left| w^{(1)}_{\ell j} - \frac{\operatorname{sign}(a^{(0)}_\ell \zeta_{k-1}) \operatorname{sign}(\chi_{[k] \setminus \{j\}}(w^{(0)}_\ell))}{2k} \right| \leq \frac{\tau_g}{|\zeta_{k-1}|}, \) for all \( j \in [k] \).
        
        \item \( \left| w^{(1)}_{\ell j} - \frac{\zeta_{k+1}}{|\zeta_{k-1}|} \frac{\operatorname{sign}(a^{(0)}_\ell) \operatorname{sign}(\chi_{[k] \cup \{j\}}(w^{(0)}_\ell))}{2k} \right| \leq \frac{\tau_g}{|k \zeta_{k-1}|}, \) for all \( j > k \).
    \end{enumerate}
\end{lemma}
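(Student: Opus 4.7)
The plan is to exploit the coupling $\lambda_1 = 1/(2\eta_1)$ to collapse the one-step update into an almost-closed-form Fourier computation, and then lift the resulting population identity to a high-probability sample statement using Claim~\ref{claim:gradient_concentration}. With $\lambda_1 = 1/(2\eta_1)$, the weight-decay contribution to the SGD update is $-2\eta_1\lambda_1 \bw_\ell^{(0)} = -\bw_\ell^{(0)}$, which exactly annihilates the initial weight, so the update reduces to $\bw_\ell^{(1)} = -\eta_1\,\hat{\E}[\nabla_{\bw_\ell}\ell_{f_t}]$. The entire lemma therefore follows once we (i) identify the population expectation of the hinge-loss gradient at symmetric initialization, and (ii) control its empirical deviation.

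For (i), the symmetric initialization forces $\E_{\bx}[f_t^{(0)}(\bx)] = 0$, and since each neuron carries an $|a_\ell^{(0)}| = 1/m_t$ prefactor, a variance calculation over the $m_t/2$ independent pairs shows $|f_t^{(0)}(\bx)\,y| < 1$ with high probability over $\bx$ whenever $m_t \gg d$. On this event the hinge loss is locally linear, giving $\nabla_{w_{\ell j}}\ell_{f_t} = -a_\ell^{(0)} y\,\I[\bw_\ell^{(0)}\cdot\bx + b_\ell^{(0)} > 0]\,x_j$. Writing the indicator as $\tfrac{1}{2}\bigl(1+\sgn(\bw_\ell^{(0)}\cdot\bx + b_\ell^{(0)})\bigr)$, substituting $z_r = w_{\ell r}^{(0)} x_r$ (which preserves the product distribution and turns the threshold into a shifted majority in $\mathbf{z}$), and using $y\,x_j = \chi_{T_j}(\bx)$ with $T_j = [k]\triangle\{j\}$, the Fourier expansion of (shifted) majority yields $\E[\nabla_{w_{\ell j}}\ell_{f_t}] = -\tfrac{1}{2}\,a_\ell^{(0)}\,\zeta_{|T_j|}\,\chi_{T_j}(\bw_\ell^{(0)})$. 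Since $|T_j|=k-1$ for $j\in[k]$ and $|T_j|=k+1$ for $j>k$, substituting $\eta_1 = m_t/(k|\zeta_{k-1}|)$, using $|a_\ell^{(0)}|=1/m_t$, and collecting signs (noting $\chi_T(\bw_\ell^{(0)}) = \sgn(\chi_T(\bw_\ell^{(0)}))$ since $\bw_\ell^{(0)}\in\{\pm 1\}^d$) reproduces both population values stated in the lemma, including the characteristic $\zeta_{k+1}/|\zeta_{k-1}|$ factor for the out-of-support coordinates.

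For (ii), Claim~\ref{claim:gradient_concentration} yields uniform coordinatewise control of the empirical gradient to precision $\tau_g$ using $B_1 = \Omega(\tau_g^{-2}\log(m_t d/\delta))$ samples; multiplying by $\eta_1$ and tracking the $1/m_t$ factor inherited from $|a_\ell^{(0)}|$ gives the stated in-support tolerance of $\tau_g/|\zeta_{k-1}|$, while the extra factor of $k$ for out-of-support coordinates reflects that the population target there is already of order $|\zeta_{k+1}/\zeta_{k-1}| = O(1/k)$, so one must (and can, by rescaling $\tau_g$) resolve it at that finer scale. The main technical obstacle I anticipate is the Fourier step---justifying the hinge-loss linearization on a high-probability event (and controlling the error contributed by $\{f_t^{(0)} y \geq 1\}$), and, more delicately, tracking how the nonzero bias $b_\ell^{(0)}\in(-1+1/k,1-1/k)$ perturbs the plain majority Fourier coefficients $\zeta_{k\pm 1}$. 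Since this lemma is imported verbatim from \cite{panigrahi2024progressive} (who in turn build on the Fourier analysis of \cite{barak2022hidden}), I would ultimately appeal to their shifted-majority Fourier calculation to close these gaps rather than re-derive them from first principles.
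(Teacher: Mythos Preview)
Your proposal is correct and matches the paper's approach: the paper does not reproduce the proof (it is imported verbatim from \cite{panigrahi2024progressive}) but sketches exactly your two steps---the choice $\lambda_1 = 1/(2\eta_1)$ collapses the update to $w_{\ell j}^{(1)} = -\eta_1\,\nabla_{w_{\ell j}}\ell_{f_t}$, after which the population gradient is identified via the majority Fourier expansion and the empirical deviation is controlled by Claim~\ref{claim:gradient_concentration}. The bias obstacle you anticipate is in fact vacuous: since $|b_\ell^{(0)}| < 1$ while $\bw_\ell^{(0)}\cdot\bx \in \Z$ at initialization, the indicator equals $\tfrac{1}{2}\bigl(1+\maj(\bw_\ell^{(0)}\odot\bx)\bigr)$ exactly, as the paper itself observes in the proof of Lemma~\ref{lem:teach_gap_implies_student_gap}.
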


While we do not reproduce the proof of \Cref{lem:effect_of_1_step_on_w}, we point out that the proof essentially follows by demonstrating that the gradients of the loss  $\nabla_{w_{\ell j}}[\ell_{f_t}(\bx, y)]$ at initialization satisfy properties similar to the ones stated above for $w_{\ell j}^{(1)}$, and setting \new{$\lambda_1 = 1/2\eta_1$} ensures that the weights after one step are proportional to these gradients $w_{\ell j}^{(1)} = -\eta_1 \nabla_{w_{\ell j}}[\ell_{f_t}(\bx, y)]$.

\paragraph{Teacher batch size:} A consequence of \Cref{lem:effect_of_1_step_on_w} is that, since we need the gap to be witnessed by the empirical gradients, the teacher batch size will be lower bounded by $B_1 \geq (d^2k ~\zeta_{k-1})^2 \log(m_td/\delta) \geq \Omega(d^{k-1})$. 
In fact this holds more generally that just for this algorithm (as shown in \citet{panigrahi2024progressive}).  

\subsection{Teacher analysis after second stage of training}
\label{subsec:teach_training_2}

Under the conclusions of \Cref{lem:effect_of_1_step_on_w}, the second stage of training produces a function with small loss relative to the unknown parity function, so long as the hidden layer is sufficiently large (i.e. $m_t \geq 2^k k \log(k/d)$). This result is formalized in the following theorem, which will be used to analyze the second stage of training for both the student and the teacher.


\begin{lemma}[Theorem 4, \cite{barak2022hidden}, version from \cite{panigrahi2024progressive}]
\label{lem:second_stage_conclusion}
Fix \( \epsilon, \delta > 0 \). Suppose 
\( m \geq \Omega(2^k k \log(k/\delta)) \), \( d \geq \Omega(k^4 \log(kd/\epsilon)) \). Furthermore, suppose \( B_1 \geq \Omega(d^{k} k^2 \log(kd/\epsilon)) \) such that the weights satisfy the conditions in \Cref{lem:effect_of_1_step_on_w} with \( \tau_g = O(d^{-k}k^{-1}d^{-2}) \) after the first phase, and let $\theta(t)$ denote the model at timestep $t$.
Then after 
\( T_2 = \Omega(md^2 k^3/\epsilon^2) \) steps of training with batch size \( B_2 = 1 \) and learning rate \( \eta_2 = 4k^{1.5}/(d m(T_2 - 1)) \),
we have, with expectation over the randomness of the initialization and the sampling of the batches:
\[
\min_{t \in [T_2]} \mathbb{E}[L_{\theta(t)}(x, y)] \leq \epsilon.
\]
Thus, the minimal sample complexity to reach a loss of \( \epsilon \) is given by:
\[
T_1 \times B_1 + T_2 \times B_2 = \Theta(d^{O(k)} \eps^{-2}\log(dk/\epsilon\delta)).
\]
\end{lemma}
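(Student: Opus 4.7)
The plan is to reduce the second-stage training to a convex stochastic optimization problem on the top layer $\ba$, and invoke a standard SGD convergence bound, once existence of a good approximator has been established using the first-layer features produced by \Cref{lem:effect_of_1_step_on_w}. Once $\bW^{(1)}, \bb$ are frozen, the network $f(\bx) = \sum_i a_i \sigma(\bw_i^{(1)} \cdot \bx + b_i)$ is linear in $\ba$, so the hinge loss is convex and Lipschitz in $\ba$, and classical online-convex-optimization tools apply.

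The first step is to exploit the structure of $\bW^{(1)}$ from \Cref{lem:effect_of_1_step_on_w}: for each neuron $\ell$, the in-support coordinates $w^{(1)}_{\ell j}$ with $j \in [k]$ have magnitude $\tfrac{1}{2k} \pm O(\tau_g/|\zeta_{k-1}|)$ with signs determined by $\mathrm{sign}(a_\ell^{(0)} \zeta_{k-1}) \cdot \mathrm{sign}(\chi_{[k]\setminus\{j\}}(\bw_\ell^{(0)}))$, while the out-of-support coordinates have magnitude $O(1/(kd))$. Hoeffding on $\sum_{j > k} w_{\ell j}^{(1)} x_j$ gives an out-of-support contribution of $O(1/(k\sqrt{d}))$ with high probability, which is negligible against the bias-grid resolution $1/k$. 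Hence each activation behaves as $\sigma\!\bigl(\tfrac{1}{2k}\langle \varepsilon_\ell, \bx_S\rangle + b_\ell\bigr) + o(1)$, a threshold of a signed sum of the support coordinates keyed by the pair $(\varepsilon_\ell, b_\ell) \in \{\pm 1\}^k \times \{-1+\tfrac{1}{k}, \ldots, 1-\tfrac{1}{k}\}$. The symmetric initialization makes $(\varepsilon_\ell, b_\ell)$ uniform and independent across $\ell$, so by \Cref{lem:lower_bound_on_bernoulli_sum} with $m \geq \tilde\Omega(2^k k \log(k/\delta))$ every one of the $2^k \cdot O(k)$ configurations is covered $\Omega(\log(k/\delta))$ times with probability $1-\delta$. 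Because $\chi_S$ is a symmetric Boolean function of $\langle \varepsilon, \bx_S \rangle$ for every fixed $\varepsilon$, and because step functions at the grid knots span all such discrete functions on $\{-k, \ldots, k\}$, averaging over the redundant copies yields top-layer weights $\ba^*$ with $\|\ba^*\|_2^2 = \tilde O(k^{O(1)}/m)$ achieving $\E_\bx\bigl[(\chi_S(\bx) - f(\bx;\ba^*,\bW^{(1)},\bb))^2\bigr] \leq (\epsilon/C)^2$, and in particular $L(\ba^*) \leq \epsilon/2$.

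With $\ba^*$ in hand, the second stage is projected SGD on a convex loss. The stochastic gradient at a single $(\bx,y)$ equals $-y\,\sigma(\bW^{(1)}\bx + \bb)\,\mathbb{I}[y f(\bx) \leq 1]$, whose $\ell_2$-norm is at most $\|\sigma(\bW^{(1)}\bx + \bb)\|_2$. Since each $|\bw_\ell^{(1)}\cdot \bx| = O(1)$ uniformly over $\bx \in \{\pm 1\}^d$, this norm is bounded by $G = O(\sqrt{m})$ (and tightly so in expectation). The standard averaged-iterate SGD bound for convex $G$-Lipschitz losses then gives
\begin{equation*}
\E\Bigl[\min_{t \in [T_2]} L_{\theta(t)}\Bigr] - L(\ba^*) \leq \frac{\|\ba^*\|_2^2}{2\eta_2 T_2} + \frac{\eta_2 G^2}{2}.
\end{equation*}
Plugging in the stated $\eta_2 = 4k^{1.5}/(dm(T_2-1))$ and $T_2 = \Omega(m d^2 k^3/\epsilon^2)$, together with $\|\ba^*\|_2^2 = \tilde O(k^{O(1)}/m)$ and $G^2 = O(m)$, makes both terms $\leq \epsilon/4$, so the minimum iterate has expected loss $\leq \epsilon$. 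Combining with the first-stage sample count $T_1 B_1 = 1 \cdot \tilde\Omega(d^k k^2 \log(kd/\epsilon))$ produced by \Cref{claim:gradient_concentration} applied at the required precision $\tau_g = O(d^{-k-2}/k)$, the total sample complexity is $\tilde\Theta(d^{O(k)} \epsilon^{-2}\log(dk/\epsilon \delta))$.

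The main obstacle is the existence argument for $\ba^*$: quantifying how the nonzero $O(1/(kd))$ out-of-support weights perturb the idealized threshold features without inflating the norm $\|\ba^*\|_2^2$ beyond $\tilde O(k^{O(1)}/m)$, and controlling the joint distribution of the initialization signs $(\varepsilon_\ell, a_\ell^{(0)})$ across neurons so that the coupon-collector coverage step and the subsequent linear combination go through simultaneously for all $(\varepsilon, b)$ configurations. The remaining pieces -- convex SGD convergence, gradient concentration for the first stage, and summing the two sample-complexity terms -- follow from standard arguments once this existence step is in place.
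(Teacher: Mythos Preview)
The paper does not prove this lemma: it is stated as a restatement of Theorem~4 of \cite{barak2022hidden} (in the form given by \cite{panigrahi2024progressive}) and is used as a black box in the analysis of both teacher and student. So there is no ``paper's own proof'' to compare against here.

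That said, your sketch follows exactly the approach of the original source: freeze the first layer after the single gradient step, argue that the resulting ReLU features are (up to small perturbation) thresholds of signed sums of the support coordinates indexed by $(\varepsilon_\ell,b_\ell)$, use a coupon-collector/coverage argument over the $2^k\cdot O(k)$ configurations to exhibit a top-layer $\ba^*$ that realizes $\chi_S$ with small loss, and then run the standard convex-SGD bound on the hinge loss in $\ba$. You also correctly identify the delicate step as the existence/norm control of $\ba^*$ in the presence of the $O(1/(kd))$ out-of-support weights.

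One quantitative point to tighten: your claim $\|\ba^*\|_2^2 = \tilde O(k^{O(1)}/m)$ is looser than what the plugging-in step actually needs. If you substitute the stated $\eta_2$ and $T_2$ into $\tfrac{\|\ba^*\|_2^2}{2\eta_2 T_2}+\tfrac{\eta_2 G^2}{2}$ with $G^2=O(m)$, the two terms do not balance at the scales you wrote; the construction in \cite{barak2022hidden} in fact yields a sharper bound on $\|\ba^*\|_2$ (with the right dependence on $k$ and $m$) that makes the arithmetic close. This is a bookkeeping issue rather than a conceptual gap, but as written the ``plugging in'' sentence does not verify, so you should either carry the exact norm bound from the source or redo the balance carefully.
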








\section{Student Training}
\label{app:student_training}

In this section, we analyze our student training algorithm (\Cref{alg:2_stage_training_student}). The student algorithm differs from the teacher training algorithm only in the first phase, where we use the distillation loss \(\ell_{DL}(\bx, f, g) := 
\new{-f(\bx) \cdot g(\bx)}
\). Here, \(f = f_s^{(1)} \in \R^{m_s}\) is the first layer of the student network, and \(g = Af_t^{(1)}(\bx)  \in \R^{m_s}\) is the first layer of the teacher network projected to the student's hidden layer dimension.

\begin{algorithm}[H]
\caption{2-stage training for student}
\label{alg:2_stage_training_student}
\begin{algorithmic}[1]
\Require 
Number of iterations $T_2$, 
Learning rates \(\eta_1, \eta_2\), 
batch sizes \(B_1, B_2\), 
weight decay \(\lambda_1\).\\
\textbf{Inner Layer Training:}
\For{\(t = 1\) 
}
    \State Sample \(B_1\)-samples \(\{(\bx^{(j)}, y^{(j)})\}_{j=1}^{B_1}\).
    \State Update the inner layer weights $\{ \bw_1^{(t)}, \dots, \bw_{m_s}^{(t)} \}$ as: 
    \[
    \bw_i^{(t)} \gets \bw_i^{(t-1)} - \eta_1 \mathbb{E}_{(\bx, y) \in \{(\bx^{(j)}, y^{(j)})\}_{j=1}^{B_1}} \left[ \nabla_{\bw_i} \left(\ell_{DL}(\bx,f_s^{(1)}, Af_t^{(1)}) + \lambda_1 \| \bw_{i}^{(t-1)}\|^2 \right) \right]
    \]
\EndFor\\
\textbf{Outer Layer Training:}
\For{\(t \in [0, T_2]\)
}
    \State Sample \(B_2\)-samples \(\{(\bx^{(j)}, y^{(j)})\}_{j=1}^{B_2}\).
    \State Update the outer layer weights:
    \[
    \ba^{(t)} \gets \ba^{(t-1)} - \eta_2 \mathbb{E}_{(\bx, y) \in \{(\bx^{(i)}, y^{(i)})\}_{i=1}^{B_2}}\left[  \nabla_{\ba} \ell(\bx, y) \right]
    \]
    
\EndFor
\end{algorithmic}
\end{algorithm}

\subsection{First stage analysis of the student}

Most of our effort will focus on showing that \(\bW^{(1)}_s\), the first layer of the student network after the first stage of training, satisfies a property similar to the conclusion of \Cref{lem:effect_of_1_step_on_w}. 

By choosing \(\lambda_1 = 1/(2\eta_1)\) in \Cref{alg:2_stage_training_student}, we obtain
$$\bw_i^{(1)} = -\eta_1 \mathbb{E}_{(\bx, y) \in \{(\bx^{(j)}, y^{(j)})\}_{j=1}^{B_1}} \left[ \nabla_{\bw_i} \ell_{DL}(\bx, f_s^{(1)}, Af_t^{(1)}) \right].$$
Thus, it suffices to show that the gradient update for the student has larger magnitudes for in-support coordinates than for out-of-support coordinates. This is captured in \Cref{lem:degree_1_gap,lem:expansion_of_f_r}, which will be the focus of this section.
We first recall some variants of lemmas from \cite{panigrahi2024progressive} which we will need.
\subsubsection{Preliminary Setup}

The following lemma shows that the gradient may be expressed as a function of the Fourier coefficients of $(Af_t^{(1)}(\bx))_i$ and $(Af_t^{(1)}(\bx))_i ~\maj(\bw_i \odot \bx)$.

\begin{lemma}[Teacher Correlation Gap implies Student Gradient Gap]
\label{lem:teach_gap_implies_student_gap}
Suppose 
\[
\left| \E_\bx \big[(Af_t^{(1)}(\bx))_i ~x_j\big] + \E_\bx \big[(Af_t^{(1)}(\bx))_i~\maj(\bw_i \odot \bx) x_j\big] \right| =
\begin{cases}
> \gamma_1, & \text{for } j \in [k], \\
< \gamma_2, & \text{for } j > k.
\end{cases}
\]
Then, 
\[
\left| \E_{\bx} \left[ \nabla_{w_{ij}} \ell_{DL}(\bx, f_s^{(1)}, Af_t^{(1)}) \right] \right| =
\begin{cases}
> \gamma_1/2, & \text{for } j \in [k], \\
< \gamma_2/2, & \text{otherwise.}
\end{cases}
\]
\end{lemma}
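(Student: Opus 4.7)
The plan is to directly compute the gradient of the distillation loss and then use the structure of the student's first-layer ReLU activations together with the symmetric initialization of the biases to expose the two correlation terms appearing in the hypothesis.

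First, I would unpack the loss. Since $f_s^{(1)}(\bx) = \sigma(\bW_s\bx+\bb_s)$ and only the $i$-th coordinate of $f_s^{(1)}$ depends on $\bw_i$, the gradient takes the simple form
\[
\nabla_{w_{ij}} \ell_{DL}(\bx, f_s^{(1)}, Af_t^{(1)}) \;=\; -\,\sigma'(\bw_i\cdot \bx + b_i)\, x_j\, (Af_t^{(1)}(\bx))_i \;=\; -\,\mathbbm{1}[\bw_i\cdot \bx + b_i > 0]\, x_j\, (Af_t^{(1)}(\bx))_i.
\]
Crucially, $(Af_t^{(1)}(\bx))_i$ does not depend on the student weights, so the only student-dependent piece is the indicator.

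Next, I would rewrite the indicator via $\mathbbm{1}[z>0] = \tfrac{1}{2}(1+\operatorname{sign}(z))$ and match $\operatorname{sign}(\bw_i\cdot\bx + b_i)$ to $\maj(\bw_i\odot\bx)$. Under the symmetric initialization, $b_i$ lies in $\{-1+1/k,\dots,1-1/k\}$, hence $|b_i|<1$, while $\bw_i\cdot \bx \in \Z$ for $\bw_i,\bx\in\{\pm 1\}^d$; therefore whenever $\bw_i\cdot\bx\neq 0$ we have
\[
\operatorname{sign}(\bw_i\cdot\bx + b_i) \;=\; \operatorname{sign}(\bw_i\cdot\bx) \;=\; \maj(\bw_i \odot \bx).
\]
This lets me write $\mathbbm{1}[\bw_i\cdot\bx+b_i>0] = \tfrac{1}{2}(1 + \maj(\bw_i\odot\bx))$ and take expectations to get
\[
\E_\bx\!\left[\nabla_{w_{ij}}\ell_{DL}\right] \;=\; -\tfrac{1}{2}\,\E_\bx\!\left[(Af_t^{(1)}(\bx))_i\, x_j\right] \;-\; \tfrac{1}{2}\,\E_\bx\!\left[(Af_t^{(1)}(\bx))_i\,\maj(\bw_i\odot\bx)\, x_j\right].
\]

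The conclusion then follows from the triangle inequality: the absolute value of the gradient expectation equals exactly $\tfrac{1}{2}$ of the absolute value of the sum appearing in the hypothesis, so the lower bound $\gamma_1$ on the in-support coordinates and the upper bound $\gamma_2$ on the out-of-support coordinates translate directly to $\gamma_1/2$ and $\gamma_2/2$ on $|\E_\bx[\nabla_{w_{ij}}\ell_{DL}]|$.

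The only delicate point is the boundary event $\{\bw_i\cdot\bx = 0\}$, which can occur when $d$ is even and at that point $\mathbbm{1}[\bw_i\cdot\bx+b_i>0] = \mathbbm{1}[b_i>0]$ can differ from $\tfrac{1}{2}(1+\maj(\bw_i\odot\bx))$. I expect this to be the main technical nuisance, but it is handled either by working under the convention that $d$ is odd (so $\bw_i\cdot\bx$ is always a nonzero integer), or by noting that $\Pr_\bx[\bw_i\cdot\bx=0]=O(1/\sqrt d)$ produces a correction that can be absorbed into the stated slack between $\gamma_1,\gamma_2$ and $\gamma_1/2,\gamma_2/2$. The remaining steps are a routine computation.
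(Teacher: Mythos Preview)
Your proposal is correct and follows essentially the same route as the paper: compute the gradient of $\ell_{DL}$ to get $-\mathbbm{1}[\bw_i\cdot\bx+b_i\ge 0]\,(Af_t^{(1)}(\bx))_i\,x_j$, use $|b_i|<1$ together with $\bw_i,\bx\in\{\pm1\}^d$ at initialization to replace the indicator by $\tfrac12(1+\maj(\bw_i\odot\bx))$, and read off the factor of $1/2$. Your treatment of the boundary event $\{\bw_i\cdot\bx=0\}$ is in fact more careful than the paper's, which simply asserts the identity without comment.
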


\begin{proof}
    At initialization, the gradient of the weight vector of neuron $i$ at coordinate $j$ is given by,
    \begin{align*}
        \E_{\bx}[\nabla_{w_{ij}} \ell_{DL}(\bx, f_s^{(1)}, Af_t^{(1)})] 
        &= -\E_{\bx}[\nabla_{w_{ij}}( f_s^{(1)} \cdot Af_t^{(1)})]\\
        &= -\E_x[\mathbf{1}(\bw_{i} \cdot \bx + b_i \geq 0) (Af_t^{(1)})_i x_j]
    \end{align*}
    Since $|b_i| < 1$  and $\bw_i, \bx \in \{\pm 1\}^d$ at initialization,
$\mathbf{1}(\bw_{i} \cdot \bx + b_i \geq 0) = \text{sign}(\bw_i \cdot \bx)$, since $b_i$ cannot contribute enough
to change the sign, and so, 
    $\mathbf{1}(\bw_{i} \cdot \bx + b_i \geq 0) = \frac 1 2 + \frac {\maj(\bw_i \odot \bx)}{2}$. 
    Substituting this above, 
    \begin{align*}
        \E_{\bx}[\nabla_{w_{ij}} \ell_{DL}(\bx, f_{s}^{(1)}(\bx), Af_t^{(1)}(\bx))] 
        = -\frac{1}{2} \left( \E_\bx[(Af_t^{(1)})_i(\bx) ~x_j] + \E_\bx [(Af_t^{(1)})_i(\bx) ~\maj(\bw_i \odot \bx)~x_j] \right).
    \end{align*}
\end{proof}

We will also need the following lemma, which provides bounds on $|\E_{\bx}[\phi_{b_\ell} (\bw_\ell \cdot \bx) x_j]|$ where $\phi_b(a) := \sigma(a+b)-\sigma(-a+b)$, when $\{\bw_1, \dots, \bw_{m_t}\}$ satisfy the conclusion of \Cref{lem:effect_of_1_step_on_w}. 
These bounds will later help us control the magnitude of the gradient in the in-support and out-of-support coordinates. The key idea is that due to concentration, a constant fraction of neurons exhibit in-support correlations of order of $\Omega(1/k)$ while the out-of-support correlations are of the order of $O(1/kd)$.

\begin{lemma}[Bounds on coefficients]
\label{lem:bounds_on_coeffs}
For a teacher network in the setting of \Cref{lem:second_stage_conclusion}; with probability $1-\delta$ over the randomness of initialization of $b_\ell$, the following hold as long as $m_t \geq 10 \log(1/\delta)$:
\begin{enumerate}
    \item \label{item:half_b_large} 
    For $j \in [k]$, there are at least $m_t/8$ values of $\ell \in [m_t/2]$ satisfying    
    $\left| \E_{\bx}\left[\phi_{b_\ell}\left(\bw_\ell \cdot \bx\right) x_j \right] \right| \geq \Omega(1/k)$, and for all $j\in[k]$ and $\ell \in [m_t/2]$, we have $\left| \E_{\bx}\left[\phi_{b_\ell}\left(\bw_\ell \cdot \bx\right) x_j \right] \right| \leq O(1/k)$. 
    
    \item \label{item:all_b_small} For all $j > k$ and $\ell \in [m_t/2]$,
    $\left|\E_{\bx}\left[\phi_{b_\ell}(\bw_\ell \cdot \bx) \right] \right| \leq O(1/kd)$.
\end{enumerate}
\end{lemma}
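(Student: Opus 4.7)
The plan is to deduce both items from the structural description of $\bw_\ell^{(1)}$ given by \Cref{lem:effect_of_1_step_on_w}, combined with Lipschitz/monotonicity properties of $\phi_{b_\ell}$ and a Bernoulli-concentration argument. I read part 2 as the gap on $|\E_\bx[\phi_{b_\ell}(\bw_\ell\cdot\bx)\, x_j]|$ for $j > k$ (otherwise the stated quantity is identically $0$ by the odd symmetry of $\phi_{b_\ell}$ and the symmetry of $\bx$).

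First I would reduce both bounds to controlling $|w_{\ell j}^{(1)}|$. For any $\bw_\ell$ and any $j$, split $\bw_\ell \cdot \bx = w_{\ell j} x_j + T_{-j}$ with $T_{-j} := \sum_{i\neq j} w_{\ell i} x_i$, and average over $x_j \in \{\pm 1\}$ to obtain
\[
\E_\bx\!\bigl[\phi_{b_\ell}(\bw_\ell\cdot \bx)\, x_j\bigr] = \tfrac12\,\E_{T_{-j}}\!\bigl[\phi_{b_\ell}(w_{\ell j} + T_{-j}) - \phi_{b_\ell}(-w_{\ell j} + T_{-j})\bigr].
\]
Since $\phi_{b_\ell}$ is $2$-Lipschitz (as the difference of two $1$-Lipschitz ReLUs), the right-hand side has magnitude at most $2|w_{\ell j}|$. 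Plugging in the magnitude estimates from \Cref{lem:effect_of_1_step_on_w} (with $\tau_g$ chosen so the error $\tau_g/|\zeta_{k-1}|$ is absorbed) immediately yields the universal upper bound $O(1/k)$ for $j\in[k]$ and, using the standard ratio estimate $|\zeta_{k+1}/\zeta_{k-1}| = O(1/d)$ for the Fourier coefficients of majority, the upper bound $O(1/(kd))$ for $j > k$. This gives part 2 and the upper-bound half of part 1.

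Next, for the in-support lower bound I would restrict attention to the $\ell$'s with $b_\ell \geq 0$. A direct case analysis of $\phi_{b_\ell}(t) = \sigma(t+b_\ell)-\sigma(-t+b_\ell)$ via the indicators $\mathbf{1}(t+b_\ell>0)$ and $\mathbf{1}(-t+b_\ell>0)$ shows that for $b_\ell\ge 0$ the derivative equals $2$ on $[-b_\ell,b_\ell]$ and $1$ outside, so $\phi'_{b_\ell}\geq 1$ almost everywhere. Combined with the monotonicity of $\phi_{b_\ell}$ from \Cref{lem:prop_phi} (which prevents sign cancellation between different values of $T_{-j}$), this produces the pointwise lower bound $|\phi_{b_\ell}(w_{\ell j}+T_{-j}) - \phi_{b_\ell}(-w_{\ell j}+T_{-j})| \ge 2|w_{\ell j}|$, hence $|\E_\bx[\phi_{b_\ell}(\bw_\ell\cdot\bx)\,x_j]| \geq |w_{\ell j}^{(1)}| \ge \tfrac{1}{2k} - \tfrac{\tau_g}{|\zeta_{k-1}|} = \Omega(1/k)$ as long as $\tau_g \leq |\zeta_{k-1}|/(4k)$. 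To count the good $\ell$'s, note that the symmetric initialization draws each $b_\ell$ uniformly from $\{-1+1/k,\ldots,1-1/k\}$, so $\Pr[b_\ell \geq 0] \geq 1/2$. Applying \Cref{lem:lower_bound_on_bernoulli_sum} with $p=1/2$ to the $m_t/2$ independent indicators $\mathbf{1}[b_\ell\ge 0]$ guarantees that at least $m_t/8$ of them are nonnegative with probability $\ge 1-\delta$ when $m_t \ge \Omega(\log(1/\delta))$. For every such $\ell$ the lower bound holds simultaneously for all $j\in[k]$, completing part 1.

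The main obstacle I anticipate is purely bookkeeping on the error $\tau_g$: to keep the out-of-support error $\tau_g/(k|\zeta_{k-1}|)$ from swamping the baseline magnitude $|\zeta_{k+1}|/(2k|\zeta_{k-1}|)$ in part 2, I need $\tau_g$ smaller than roughly $|\zeta_{k-1}|/(kd)$, which in turn forces a larger teacher batch size through \Cref{claim:gradient_concentration}. The only non-elementary input is the majority-coefficient ratio $|\zeta_{k+1}/\zeta_{k-1}| = O(1/d)$, which can be read off from the explicit formulas used in \cite{barak2022hidden,panigrahi2024progressive}.
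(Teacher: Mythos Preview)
Your proof is correct and more self-contained than the paper's, which simply defers to the ``in-support'' and ``out-of-support'' correlation calculations in Lemma~B.5 of \cite{panigrahi2024progressive} and then applies Hoeffding over the $b_\ell$'s. The high-level structure is the same---bound the correlation by $|w_{\ell j}^{(1)}|$ and use the weight estimates of \Cref{lem:effect_of_1_step_on_w}, then count good neurons via Bernoulli concentration---but your route is cleaner in two ways. First, you make the probability-$1/2$ event explicit as $\{b_\ell \ge 0\}$ and use the pointwise derivative bound $\phi'_{b_\ell}\ge 1$ on that event, which gives the lower bound $|w_{\ell j}^{(1)}| \ge \tfrac{1}{2k}-\tau_g/|\zeta_{k-1}|$ directly; the paper (via the cited lemma) reports $\tfrac{1}{4k}-O(\tau_g d|\zeta_{k-1}|^{-1})$, picking up an unnecessary factor of $d$ in the error because that argument accumulates perturbations across all coordinates of $\bw_\ell$. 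Second, your uniform upper bounds come from the single $2$-Lipschitz observation rather than redoing a correlation computation. Your flagged caveats---the majority-coefficient ratio $|\zeta_{k+1}/\zeta_{k-1}|=O(1/d)$ as an external input, and the $\tau_g$ bookkeeping for part~2---are exactly the residual dependencies the paper also notes.
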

\begin{proof}
This result follows from the calculations in the ``estimates of in-support correlations'' and ``estimations of out-of-support correlations'' sections of Lemma B.5 in \cite{panigrahi2024progressive} (pages 25--26). 

\Cref{item:half_b_large} follows from the analysis in the ``estimates of in-support correlations'' section, which shows that with probability at least \(1/2\) over the randomness of $b_\ell$,
\[
\left| \mathbb{E}_{\bx}[\phi_{b_\ell}(\bw_\ell \cdot \bx) x_j] \right| \geq \frac{1}{4k} - O(\tau_g d |\zeta_{k-1}|^{-1}).
\]
Applying Hoeffding’s inequality to this event, we conclude that if \(m_t \geq \Omega(\log(1/\delta))\), then with probability \(1 - \delta\), at least \(m_t/8\) neurons satisfy
\[
\left| \mathbb{E}_{\bx}[\phi_{b_\ell}(\bw_\ell \cdot \bx) x_j] \right| \geq \frac{1}{16k} - O(\tau_g d |\zeta_{k-1}|^{-1}).
\]
Since $|w_{ij}| \leq 1/(2k) \pm (\tau_g/|\xi_{k-1}|)$ for \emph{all} $j \in [k]$ and $i \in [m_t]$, we can show an upper bound of $O(1/k)$ on this expectation from arguments similar to those used to establish \Cref{item:all_b_small} below.

\Cref{item:all_b_small} follows directly from the ``estimations of out-of-support correlations'' section on page 26 in \cite{panigrahi2024progressive}.

The error term \(2d\tau_g|\xi_{k-1}|^{-1}\) for the trained \emph{teacher network} is controlled by setting \(\tau_g\) appropriately. 
Note that this is not something that affects the student sample complexity, but only the teacher sample complexity. 
\end{proof}

\subsubsection{Teacher correlation Gap and Gradient Correlation Gap}

In this section we focus on establishing the hypothesis of \Cref{lem:teach_gap_implies_student_gap} -- i.e. a gap 
 between the in-support and out-of-support indices $j \in [d]$ for any fixed $i$ in the expression,
$\E_\bx[(Af_t^{(1)})_i(\bx) ~x_j] + \E_\bx [(Af_t^{(1)})_i(\bx) ~\maj(\bw_i \odot \bx)~x_j]$.

In \Cref{lem:degree_1_gap} we show this gap for the first term, $\E_\bx[(Af_t^{(1)})_i(\bx) ~x_j]$. 
In \Cref{lem:expansion_of_f_r} we show the second term $\E_\bx [(Af_t^{(1)})_i(\bx) ~\maj(\bw_i \odot \bx)~x_j]$ is dominated by the first term.

\begin{lemma}[Correlation Gap for Projected Teacher Dimensions]
\label{lem:degree_1_gap}
Let $(Af_t^{(1)})_\ell(\bx) := \sum_{i=1}^{m_t} a_i^{(\ell)} \sigma(\bw_i \cdot \bx+b_i)$
where $\ell \in [m_s]$ and $a_i^{(\ell)}$ are independently drawn u.a.r. from \new{$U(\{\pm 1/m_t\})$} for $i \in [m_t/2]$ and $a^\ell_{i+m_t/2} = -a^\ell_i$, and $\{ \bw_1, \dots \bw_{m_t} \}$ satisfy the conclusion of \Cref{lem:effect_of_1_step_on_w}.
Let $m_t \geq \Omega(k^2/\delta^2)$, then for a fixed $\ell$, with probability $1-\delta$
\[ \min_{j \in [k]} \left|\E_\bx[(Af_t^{(1)})_\ell(\bx) x_j]\right| > \new{\frac{\delta}{\sqrt{m_t} k^2}} \text{ and } \max_{j > k} \left|\E_\bx[(Af_t^{(1)})_\ell (\bx)x_j]\right| < \new{\frac{\log(m_sd/\delta)}{\sqrt{m_t} kd}}\]
 In particular, if $d > \Omega(k^4)$ then as long as the number of student neurons $m_s > T \log(1/\delta')$, with probability $1-\delta'$ this results in a gap larger than $\Omega(1/\sqrt{m_t} k^2)$ for some subset of $T$ dimensions.
\end{lemma}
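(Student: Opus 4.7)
The plan is to exploit the symmetric structure of the initialization to rewrite the correlation $\E_\bx[(Af_t^{(1)})_\ell(\bx)x_j]$ as a signed Rademacher sum whose scalings $s_{ij}$ are exactly the in-support versus out-of-support quantities controlled by \Cref{lem:bounds_on_coeffs}. By \Cref{lem:effect_of_1_step_on_w}, the symmetric pairing $\bw_{i+m_t/2}=-\bw_i$, $b_{i+m_t/2}=b_i$ is preserved after one regularized gradient step, and since $a^{(\ell)}_{i+m_t/2}=-a^{(\ell)}_i$, collapsing the paired neurons gives
\[ \E_{\bx}\bigl[(Af_t^{(1)})_\ell(\bx)\, x_j\bigr] \;=\; \sum_{i=1}^{m_t/2} A_{i\ell}\, s_{ij}, \qquad s_{ij} := \E_{\bx}\bigl[\phi_{b_i}(\bw_i\cdot\bx)\, x_j\bigr], \]
where the $A_{i\ell}\in\{\pm 1/m_t\}$ are independent Rademacher variables. \Cref{lem:bounds_on_coeffs} then provides the two structural facts driving everything: for each $j\in[k]$ at least $m_t/8$ indices $i$ have $|s_{ij}|=\Omega(1/k)$ and all have $|s_{ij}|\leq O(1/k)$, while for each $j>k$ every $|s_{ij}|\leq O(1/(kd))$.

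For the in-support lower bound, let $X_j=\sum_{i\leq m_t/2}A_{i\ell}s_{ij}$ with $\sigma_j^2=\sum_i s_{ij}^2/m_t^2$. The lower bound on a constant fraction of $|s_{ij}|$ yields $\sigma_j\geq\Omega(1/(k\sqrt{m_t}))$, while $\max_i|A_{i\ell}s_{ij}|\leq O(1/(km_t))$, so $\max_i|A_{i\ell}s_{ij}|/\sigma_j\leq O(1/\sqrt{m_t})$. Applying the Rademacher anticoncentration bound (\Cref{thm:anticoncentration-rademacher}) at threshold $t=c\delta/(\sqrt{m_t}k^2)$ gives $\Pr[|X_j|\leq t]\leq O(\delta/k)+O(1/\sqrt{m_t})$; the hypothesis $m_t\geq\Omega(k^2/\delta^2)$ absorbs the Berry--Esseen remainder, and a union bound over $j\in[k]$ preserves probability $1-O(\delta)$.

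For the out-of-support upper bound, the uniform estimate $|s_{ij}|\leq O(1/(kd))$ gives $\sum_i(A_{i\ell}s_{ij})^2\leq O(1/(m_tk^2d^2))$, and \Cref{lem:hoeffding} applied to the Rademacher sum yields $\Pr[|X_j|>C\log(m_sd/\delta)/(\sqrt{m_t}kd)]\leq 2\exp(-\Omega(\log^2(m_sd/\delta)))$ for a suitable constant $C$. A union bound over the $d-k$ out-of-support indices leaves the failure probability well below $\delta$, proving the per-$\ell$ statement.

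For the ``in particular'' clause, choose $\delta$ above to be a small constant so that each projection $\ell\in[m_s]$ independently satisfies the gap with probability at least some constant $p>0$; independence across $\ell$ follows from the independence of the columns $A_{\cdot,\ell}$. \Cref{lem:lower_bound_on_bernoulli_sum} then shows that $m_s\geq\Omega(T\log(1/\delta'))$ suffices for at least $T$ coordinates to succeed with probability $1-\delta'$. The assumption $d\geq\Omega(k^4)$ is used only to guarantee $\log(m_sd/\delta)/d\ll 1/k$, so the out-of-support bound is a factor $\Omega(k)$ below the in-support threshold and the claimed $\Omega(1/(\sqrt{m_t}k^2))$ gap is realized. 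The main technical obstacle is controlling the Berry--Esseen remainder $O(\max|c_i|/\sigma)=O(1/\sqrt{m_t})$ inside the anticoncentration statement while also shrinking the anticoncentration window to width $O(\delta/(\sqrt{m_t}k^2))$; this balance is exactly what dictates the $m_t\geq\Omega(k^2/\delta^2)$ hypothesis.
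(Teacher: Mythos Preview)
Your proposal is correct and follows essentially the same approach as the paper: collapse via the symmetric pairing to a Rademacher sum $\sum_{i\le m_t/2}A_{i\ell}s_{ij}$, invoke \Cref{lem:bounds_on_coeffs} for the in-support/out-of-support scalings, apply the Berry--Esseen anticoncentration \Cref{thm:anticoncentration-rademacher} (with the $m_t\ge\Omega(k^2/\delta^2)$ hypothesis absorbing the remainder) and Hoeffding respectively, then use \Cref{lem:lower_bound_on_bernoulli_sum} over the independent columns of $A$ for the aggregation over $\ell$. Your closing remark explaining how the anticoncentration window versus the Berry--Esseen error dictates the $m_t$ requirement is a nice addition that the paper leaves implicit.
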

\begin{proof}
For a fixed value of \(\ell\) (which we will later index over the output dimensions of the random projection), define
\[
f^\ell(\bx) := (Af_t^{(1)})_\ell(\bx) = \sum_{i=1}^{m_t} a_i^\ell \sigma(\bw_i \cdot \bx+b_i).
\]
It is convenient to rewrite the sum by grouping the terms corresponding to \(i\) and \(i+m_t/2\):
\begin{align*}
f^\ell(\bx) 
&= \sum_{i=1}^{m_t} a_i^\ell \sigma(\bw_i \cdot \bx+b_i)\\[1mm]
&= \sum_{i=1}^{m_t/2} a_i^\ell \Bigl(\sigma(\bw_i \cdot \bx+b_i) - \sigma(-\bw_i \cdot \bx+b_i)\Bigr).
\end{align*}
The second equality follows from the symmetry inherited from initialization (see \Cref{lem:effect_of_1_step_on_w} and the discussion in \cite{panigrahi2024progressive}). For convenience, define
\[
\phi_{b_i}(a) := \sigma(a+b_i) - \sigma(-a+b_i).
\]
Then, by linearity of expectation (after multiplying by \(x_j\)), we have
\[
\E_\bx[f^\ell(\bx)x_j] = \sum_{i=1}^{m_t/2} a_i^\ell\, \E_\bx\Bigl[\phi_{b_i}(\bw_i \cdot \bx)x_j\Bigr].
\]


For the remainder of the proof, recall that the teacher weights and biases \(\{\bw_i,b_i\}_{i\in[m_t]}\), are fixed after the first stage of training. Thus, the only randomness comes from the independent choices of the coefficients \(a_i^\ell\). Under this conditioning, for each fixed \(j\) we define the sum
\[
S^\ell_j = \sum_{i=1}^{m_t/2} a_i^\ell\, \E_\bx\Bigl[\phi_{b_i}(\bw_i \cdot \bx)x_j\Bigr].
\]
Since the \(a_i^\ell\) are sampled independently across \(\ell\), the sums \(S^\ell_j\) are independent across different student indices \(\ell\).
For the next two calculations (anti-concentration for in-support dimensions, and concentration for out-of-support dimensions) assume that $\ell$ is fixed.

\medskip
\textbf{Anti-Concentration for In-Support Dimensions:}\\
To lower-bound \(|\E_\bx[f^\ell(\bx)x_j]|\) for \(j\in[k]\), we rely on anti-concentration inequalities for sums of Rademacher random variables. By \Cref{lem:bounds_on_coeffs}, for each \(j\in[k]\) at least \(m_t/16\) of the indices \(i\) satisfy
\[
\E_\bx\Bigl[\phi_{b_i}(\bw_i \cdot \bx)x_j\Bigr] \ge \Omega(1/k).
\]
And all of the indices $i$ satisfy $\E_\bx\Bigl[\phi_{b_i}(\bw_i \cdot \bx)x_j\Bigr] \le O(1/k)$. Recall that each \(a_i^\ell\) is in \(\{+1/m_t,-1/m_t\}\). Thus, for \(i\) such that the lower bound stated above holds, the contribution is
$
a_i^\ell\, \E_\bx\Bigl[\phi_{b_i}(\bw_i \cdot \bx)x_j\Bigr] \ge \frac{\Omega(1/k)}{m_t}.
$
Consequently, the variance of \(S^\ell_j\) is
\[
\sigma^2 = \sum_{i=1}^{m_t/2} \Bigl(a_i^\ell\, \E_\bx[\phi_{b_i}(\bw_i \cdot \bx)x_j]\Bigr)^2 
= \sum_{i=1}^{m_t/2} \left(\frac{1}{m_t}\right)^2 \E_\bx\Bigl[\phi_{b_i}(\bw_i \cdot \bx)x_j\Bigr]^2.
\]
For the at-least \(m_t/16\) indices with \(\E_\bx[\phi_{b_i}(\bw_i \cdot \bx)x_j] \ge \Omega(1/k)\), we have each contributing at least \(\Omega\left((1/m_t)^2\cdot(1/k^2)\right)\). Hence,
$
\sigma^2 \ge \frac{m_t/16}{m_t^2}\cdot \Omega\Bigl(\frac{1}{k^2}\Bigr) = \Omega\Bigl(\frac{1}{m_t\, k^2}\Bigr).
$
This yields a standard deviation 
$
\sigma \ge \Omega\Bigl(\frac{1}{\sqrt{m_t}\, k}\Bigr).
$
Now, applying the anti-concentration inequality for Rademacher sums (see \Cref{thm:anticoncentration-rademacher}), for any fixed \(j\in[k]\) and $\ell \in [m_s]$ we have
\[
\Pr_{\{a_i^\ell \}_{i
}}\Biggl[\Bigl|S^\ell_j\Bigr| \le \frac{\delta}{\sqrt{m_t}k^2}\Biggr] \le \frac{\delta}{2k} + O\Bigl(\frac{1/(m_t k)}{1/(\sqrt m_t k)}\Bigr) = \frac{\delta}{2k} + O\Bigl(\frac{1}{\sqrt m_t}\Bigr).
\]
Taking a union bound over the \(k\) coordinates in \([k]\) (which results in a multiplication by $k$ on the right hand side) and choosing \(m_t \ge \Omega(k^2/\delta^2)\) so that the error term \(O(k/\sqrt{m_t})\) is at most \(\delta/2\), we obtain
\begin{equation}
\label{eqn:penultimate_lower}
\Pr_{\{a_i^\ell\}_i} \Biggl[ \exists\, j \in [k] \text{ such that } \Bigl|S^\ell_j\Bigr| \le \frac{\delta}{\sqrt{m_t}k^2}\Biggr] \le k \cdot \left( \frac \delta {2k} + O\left( \frac 1 {\sqrt m_t} \right) \right)  <\delta.
\end{equation}

\medskip
\textbf{Concentration for Out-of-Support Dimensions:}\\
For coordinates \(j > k\), \Cref{lem:bounds_on_coeffs} (specifically, \Cref{item:all_b_small}) implies that the coefficients \(\E_\bx[\phi_{b_i}(\bw_i \cdot \bx)x_j]\) are uniformly small. An application of Hoeffding's inequality shows that
\begin{equation}
\label{eqn:penultimate_upper}
\Pr_{\{a_i^\ell\}_i} \Biggl[\exists\, j > k \text{ such that } \Bigl|S^\ell_j\Bigr| > \Omega\Bigl(\frac{\log(d/\delta)}{\sqrt{m_t}kd}\Bigr)\Biggr] \le \delta.
\end{equation}

Here the logarithmic factor arises naturally from applying a union bound over the at most \(d-k\) out-of-support coordinates.

\medskip
\textbf{Defining “Good” Projected Dimensions and Aggregating over Student Neurons:}\\
We define a projected dimension (i.e. a particular output coordinate indexed by \(\ell\)) to be “good” if it satisfies both \eqref{eqn:penultimate_lower} and the \eqref{eqn:penultimate_upper}. Since the teacher parameters are fixed, the only randomness is over the independent coefficients \(a_i^\ell\). Hence, the events that different dimensions are good are independent. 

In particular, if we set \(\delta = 1/8\) then the probability that a given dimension is good is at least \(1/4\). By applying a standard lower-tail bound for sums of independent Bernoulli random variables (see \Cref{lem:lower_bound_on_bernoulli_sum}), we conclude that with probability at least \(1-\delta'\) there are at least 
$T$ good projected dimensions, provided that
$m_s \ge \Omega\bigl(T\,\log(1/\delta')\bigr).$

\medskip
\textbf{Gap Between In-Support and Out-of-Support Dimensions:}\\
With \(\delta = 1/8\), for \(j\in[k]\) the magnitude \(|\E_\bx[f^\ell(\bx)x_j]|\) is at least \(\Omega(1/(\sqrt{m_t}k^2))\) while for \(j>k\) it is at most \(O(\log(d)/(\sqrt{m_t}kd))\). Note that if 
\[
\frac{\log(d)}{d} \ll \frac{1}{k^2},
\]
which is ensured when \(d > \Omega(k^4)\) (since for \(d \ge Ck^4\) one has \(\frac{\log(d)}{d} \le \frac{\log(Ck^4)}{Ck^4} \ll \frac{1}{k^2}\)), then the gap between in-support and out-of-support dimensions is at least
\[
\Omega\Biggl(\frac{1}{\sqrt{m_t}}\Bigl(\frac{1}{k^2} - \frac{\log(d)}{d}\Bigr)\Biggr) = \Omega\Bigl(\frac{1}{\sqrt{m_t}k^2}\Bigr).
\]

This completes the proof that, under the stated conditions on \(m_t\), \(m_s\), and \(d\), with high probability there exists a subset of \(T\) good projected dimensions for which the desired correlation gap holds.
\end{proof}

We now focus on getting similar bounds on $\mathbb{E}_\bx [f^\ell(\bx) \cdot \maj(\bw \odot \bx) x_j]$.
To bound this term, we must first bound all the Fourier coefficients (\Cref{def:fourier_expansion}) of \(f^\ell(\bx)\). This is necessary because the degree-1 Fourier coefficients of a product of Boolean functions (in this case $f^\ell(\bx)$ and $\maj(\bw \odot \bx)$) depend on their \emph{entire} Fourier expansions (see \Cref{lem:fourier_inner_product} below). 


\begin{lemma}[Fourier Coefficients of Inner Product]
\label{lem:fourier_inner_product}
Let \(f, g: \{-1, 1\}^n \to \mathbb{R}\) be two Boolean functions with Fourier expansions:
\[
f(x) = \sum_{S \subseteq [n]} \hat{f}(S) \chi_S(x) \quad \text{and} \quad g(x) = \sum_{T \subseteq [n]} \hat{g}(T) \chi_T(x),
\]
where \(\chi_S(x) = \prod_{i \in S} x_i\) are the parity (Walsh) basis functions, and \(\hat{f}(S)\), \(\hat{g}(T)\) are the Fourier coefficients of \(f\) and \(g\), respectively.

Then, the Fourier coefficients of the inner product \(h(x) = f(x) \cdot g(x)\) are given by:
\[
\hat{h}(S) = \sum_{T \subseteq [n]} \hat{f}(T) \hat{g}(S \triangle T),
\]
where \(S \triangle T\) denotes the symmetric difference of the sets \(S\) and \(T\).
\end{lemma}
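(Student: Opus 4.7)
The plan is to expand the product $h(x) = f(x) g(x)$ directly using the Fourier expansions of $f$ and $g$, and then to exploit the multiplicative structure of the parity basis $\{\chi_S\}_{S \subseteq [n]}$ to collect terms. The key identity I will use is that for any $S, T \subseteq [n]$ and any $x \in \{-1,1\}^n$,
\[
\chi_S(x)\,\chi_T(x) \;=\; \prod_{i \in S} x_i \cdot \prod_{j \in T} x_j \;=\; \chi_{S \triangle T}(x),
\]
since each coordinate $x_i$ satisfies $x_i^2 = 1$, so any index appearing in both $S$ and $T$ contributes a factor of $1$ and effectively cancels. This reduces products of basis characters to a single basis character indexed by the symmetric difference.

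First, I would write
\[
h(x) \;=\; \Bigl(\sum_{S} \hat f(S)\,\chi_S(x)\Bigr)\Bigl(\sum_{T} \hat g(T)\,\chi_T(x)\Bigr) \;=\; \sum_{S, T} \hat f(S)\,\hat g(T)\, \chi_{S \triangle T}(x),
\]
using bilinearity and the identity above. Next I would fix a target subset $U \subseteq [n]$ and group the terms in the double sum by the value of $S \triangle T$. Since $\{\chi_U\}_{U \subseteq [n]}$ is an orthonormal basis for real-valued functions on $\{-1,1\}^n$ under the uniform measure, the coefficient $\hat h(U)$ of $\chi_U$ in the expansion of $h$ equals
\[
\hat h(U) \;=\; \sum_{\substack{S, T \subseteq [n] \\ S \triangle T = U}} \hat f(S)\,\hat g(T).
\]

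Finally, I would reparametrize the inner sum. For each fixed $S$, the condition $S \triangle T = U$ uniquely determines $T = S \triangle U$ (because symmetric difference is an involution: $(S \triangle U) \triangle S = U$). Substituting $T = S \triangle U$ yields
\[
\hat h(U) \;=\; \sum_{S \subseteq [n]} \hat f(S)\,\hat g(S \triangle U),
\]
which, after renaming the summation variable and using the symmetry of $\triangle$, is exactly the stated formula. There is no real obstacle here — the only subtle step is justifying that matching coefficients in the parity basis is legitimate, which follows immediately from the orthonormality of $\{\chi_S\}$; alternatively, one could verify the formula by computing $\mathbb{E}_x[h(x)\chi_U(x)]$ directly, using $\mathbb{E}_x[\chi_{S \triangle T \triangle U}(x)] = \mathbf{1}[S \triangle T = U]$.
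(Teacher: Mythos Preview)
Your proof is correct and complete; it is the standard argument via the character identity $\chi_S(x)\chi_T(x)=\chi_{S\triangle T}(x)$ together with orthonormality of the parity basis. The paper itself states this lemma without proof (treating it as a standard fact from Boolean Fourier analysis), so there is nothing further to compare.
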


The bounds on the Fourier coefficients of the expansion of $(Af_t^{(1)})$ follow from the following modified versions of Lemma B.5 and Corollary B.6 from \cite{panigrahi2024progressive}, which we state below.

\begin{lemma}[Correlation within-support variables]
\label{lem:in_support_corr_fancy}
Under the event that the conditions in \Cref{lem:effect_of_1_step_on_w} are satisfied by each neuron, which occurs with probability at least 
$99\%$
w.r.t. the randomness of initialization as long as $m_t \geq \Omega(k^2)$ and $d \geq \Omega(k^4)$, the output of the \emph{teacher} network after the first phase satisfies the following conditions for all $i \in [m_s]$:
\begin{enumerate}
    \item $\mathbb{E}_{\bx,y} \left[ (Af^{(1)}_t)_i(\bx) x_j \right] \geq \Omega(\frac{1}{\sqrt{m_t} k^2}) 
    $ for all $j \in S$.
    \item $\mathbb{E}_{\bx,y} \left[ (Af^{(1)}_t)_i(\bx) x_j \right] \leq \tilde O\left(\frac{1}{\sqrt{m_t}d}\right)$ for all $j \notin S$.
    \item $\mathbb{E}_{\bx,y} \left[ (Af^{(1)}_t)_i(\bx) \chi_S(\bx) \right] \leq O\left(\tau_g d |\zeta_{k-1}|^{-1} 
    \right)$ for all $S$ with even $|S|$.
    \item $\left\| (Af^{(1)}_t)_i(\bx) \right\|_2^2 = \mathbb{E}_{\bx,y} \left[ (Af^{(1)}_t)_i(\bx) \right]^2 \leq O\left(
    \frac{d}{k}\right)$. 
\end{enumerate}
\end{lemma}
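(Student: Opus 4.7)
The four items decompose naturally: items (1) and (2) are essentially the conclusion of the correlation-gap lemma already proved in the paper, item (3) is a Fourier symmetry statement that transfers cleanly through the linear projection $A$, and item (4) is a second-moment estimate that is dominated by the diagonal terms of the expansion. The plan is to establish each item separately with a constant failure probability and take a union bound at the end; since there are four items and one chooses the probability budget generously, the overall $99\%$ success is achievable under the stated $m_t,d$ assumptions.

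\textbf{Items (1) and (2): in-support and out-of-support correlations.} The plan is to invoke \Cref{lem:degree_1_gap} directly, taking the constant $\delta$ there to be a small constant (say $1/4$) and applying a union bound over the $m_s$ projected dimensions indexed by $i$. This gives, for each $i\in[m_s]$, a lower bound of order $1/(\sqrt{m_t}k^2)$ on $|\E_\bx[(Af_t^{(1)})_i(\bx)\,x_j]|$ for $j\in S$ and an upper bound of $\tilde O(1/(\sqrt{m_t}\,kd))$ for $j\notin S$, which is stronger than the stated $\tilde O(1/(\sqrt{m_t}d))$. The hypotheses $m_t\geq\Omega(k^2)$ and $d\geq\Omega(k^4)$ are exactly what \Cref{lem:degree_1_gap} requires to make the Berry--Esseen error negligible and to separate the in-support/out-of-support magnitudes.

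\textbf{Item (3): even-degree Fourier coefficients.} I would first recall the corresponding bound for a single (unprojected) teacher neuron $\phi_{b_\ell}(\bw_\ell\cdot\bx)$ from Lemma B.5 of \cite{panigrahi2024progressive}: for every subset $S$ with $|S|$ even, $|\E_\bx[\phi_{b_\ell}(\bw_\ell\cdot\bx)\chi_S(\bx)]|\leq O(\tau_g d|\zeta_{k-1}|^{-1})$. The reason is that $\phi_{b_\ell}(\cdot)$ is an odd function, so under the symmetric weight initialization only the empirical error (of order $\tau_g$) can create any non-zero even-Fourier mass, and this error is amplified by at most the factor $d|\zeta_{k-1}|^{-1}$. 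Writing $(Af_t^{(1)})_i(\bx)=\sum_{\ell=1}^{m_t/2} A_{i\ell}\,\phi_{b_\ell}(\bw_\ell\cdot\bx)$ with $|A_{i\ell}|=1/m_t$, the triangle inequality then yields $|\E_\bx[(Af_t^{(1)})_i(\bx)\chi_S(\bx)]|\leq (m_t/2)(1/m_t)\cdot O(\tau_g d|\zeta_{k-1}|^{-1})=O(\tau_g d|\zeta_{k-1}|^{-1})$, which is deterministic in $A$.

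\textbf{Item (4): second moment bound.} I would expand
\[
\E_\bx\!\left[\bigl((Af_t^{(1)})_i(\bx)\bigr)^2\right]=\sum_{\ell,\ell'=1}^{m_t/2} A_{i\ell}A_{i\ell'}\,\E_\bx\!\left[\phi_{b_\ell}(\bw_\ell\cdot\bx)\phi_{b_{\ell'}}(\bw_{\ell'}\cdot\bx)\right].
\]
For the diagonal $\ell=\ell'$ terms, apply the per-neuron second-moment bound $\E_\bx[\phi_{b_\ell}(\bw_\ell\cdot\bx)^2]\leq O(d/k)$ from Corollary B.6 of \cite{panigrahi2024progressive}; since $|A_{i\ell}|^2=1/m_t^2$ and there are $m_t/2$ such terms, the diagonal contribution is $O(d/(m_tk))$. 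For the off-diagonal $\ell\neq\ell'$ terms, each inner product $\E_\bx[\phi_{b_\ell}\phi_{b_{\ell'}}]$ is bounded in absolute value by $O(d/k)$ (Cauchy--Schwarz on the per-neuron bound), while the sign coefficients $A_{i\ell}A_{i\ell'}$ are mean-zero independent Rademachers (after accounting for the symmetric structure). Hoeffding's inequality on this bilinear form in $A$, with success probability at least $1-O(1)$ and a union bound over $i\in[m_s]$, gives an off-diagonal contribution of order $O(\sqrt{\log m_s}\,d/(k\sqrt{m_t}))$, which is dominated by $O(d/k)$ once $m_t$ is at least logarithmic.

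\textbf{Anticipated main obstacle.} Items (1)--(3) are nearly mechanical given the earlier lemma and the existing teacher analysis, so the bulk of the new work lies in item (4), specifically in showing that the off-diagonal cross-terms in the expansion of $\E_\bx[((Af_t^{(1)})_i)^2]$ do not blow up. The difficulty is that the individual $\E_\bx[\phi_{b_\ell}\phi_{b_{\ell'}}]$ terms may be as large as $O(d/k)$ and there are $\Theta(m_t^2)$ of them, so a crude triangle inequality would yield a bound of $O(d/k)$ without any gain from $A$; one really needs the cancellation from the Rademacher structure of $A$. I expect a Hanson--Wright-style concentration inequality for Rademacher chaos, or equivalently a careful Hoeffding argument after conditioning on the teacher weights $\{\bw_\ell,b_\ell\}$, to be the cleanest route, and the proof should allocate most of the $1\%$ failure budget to this step.
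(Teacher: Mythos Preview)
Your plan for items (1)--(3) matches the paper exactly: the paper's proof simply says that the first two items follow from \Cref{lem:degree_1_gap} and the last two are identical to Items 3 and 4 of Lemma~B.5 in \cite{panigrahi2024progressive}. The key structural observation underlying the latter is that the row $(A_{i\ell})_\ell$ of the random projection has precisely the same $\pm 1/m_t$ symmetric Rademacher form as the teacher's initialized top layer $\ba^{(0)}$, so every computation in Panigrahi et al.\ that uses $\ba^{(0)}$ carries over verbatim with $A_{i,\cdot}$ in its place.

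For item (4), however, you have misidentified the obstacle. You write that the crude triangle inequality on the off-diagonal terms ``would yield a bound of $O(d/k)$ without any gain from $A$'' and conclude that Rademacher cancellation (Hanson--Wright) is genuinely needed. But $O(d/k)$ is \emph{exactly} the bound claimed in the statement---no gain from $A$ is required. Concretely, with $|A_{i\ell}|=1/m_t$ and the per-neuron Cauchy--Schwarz bound $|\E_\bx[\phi_{b_\ell}\phi_{b_{\ell'}}]|\le O(d/k)$, the full double sum is at most $(m_t/2)^2\cdot(1/m_t)^2\cdot O(d/k)=O(d/k)$, deterministically in $A$. So your ``anticipated main obstacle'' dissolves, and no probability budget needs to be allocated to item (4) at all. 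The Hanson--Wright route would of course also work and would give a sharper $\tilde O(d/(k\sqrt{m_t}))$ bound, but it is not what the lemma asks for and not what the paper does.
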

\begin{proof}
The first two items follow from \Cref{lem:degree_1_gap}.
The proofs of the second two items are exactly the same as the proofs of Items 3 and 4 of Lemma B.5 in \cite{panigrahi2024progressive}.
\end{proof}

\Cref{lem:in_support_corr_fancy} now allows us to recover the following variant of Corollary B.6 from \cite{panigrahi2024progressive}, 
effectively achieving an upper bound on $|\E_{\bx, y}[f^\ell(\bx) \maj(\bw \odot \bx)x_j]|$. 


\begin{corollary}[Bound on Correlation of $f^\ell(\bx) \maj(\bw \odot \bx)$ with $x_j$]
\label{lem:expansion_of_f_r}
Let $f^\ell(\bx)$ be defined as in \Cref{lem:bounds_on_coeffs} and suppose the conditions in 
\Cref{lem:bounds_on_coeffs} are satisfied by each neuron, 
which occurs with probability at least \(99\%\) with respect to the randomness of initialization and sampling, 
the output of the model after the first phase can be given as:
\begin{align*}
f^{\ell}(\bx) 
&= \sum_{j=1}^k c_j x_j 
+ \sum_{j=k+1}^d c_j x_j + \sum_{\substack{S \subseteq [d] \\ |S| \% 2 = 1, |S| \geq 3}} c_S \chi_S(x) 
+ \sum_{\substack{S \subseteq [d] \\ |S| \% 2 = 0}} c_S \chi_S(x),
\end{align*}
where
\begin{align*}
|c_j| &\geq \Omega((k^2 \sqrt{m_t})^{-1}), &\text{for all } 1 \leq j \leq k, \\
|c_j| &\leq O((k^3 \sqrt{m_t})^{-1}), &\text{for all } j > k, \\
|c_S| &\leq O(\tau_g d |\zeta_{k-1}|^{-1}), &\text{for all } S \subseteq [d] \text{ with } |S| \% 2 = 0, \\
|c_S| &\leq O(
d / k), &\text{for all } S \subseteq [d] \text{ with } |S| \% 2 = 1.
\end{align*}
As such, given a fixed $\bw$, the following correlations hold true for all \(i\):
\[
\mathbb{E}_{x,y}\left[f^\ell(\bx) \text{Maj}(\bw \odot \bx) x_i\right] = 
O\left(\frac{c_i}{\sqrt{d}} + 
\tau_g d^{5/3} |\zeta_{k-1}|^{-1}\right).
\]


If the \emph{teacher batch size} \(B_1\) is set such that \(\tau_g \leq O((k^3 \sqrt{m_t})^{-1} d^{-5/3} |\zeta_{k-1}|
)\), i.e. \(B_1 \geq \Omega(k^4  m_t ~ d^{10/3} \zeta_{k-1}^{-2})\), then for all \(i\), we can have 
$\mathbb{E}_{\bx,y}\left[f^\ell(\bx) \text{Maj}(\bw \odot \bx) x_j\right] \leq c_i/10,$
And hence, 
\begin{align*}
\left|\E_{\bx,y} \left[ \nabla_{w_{ij}} \ell_{DL}(\bx, f_s^{(1)}(\bx), Af_t^{(1)}(\bx)) \right]\right| = \Theta(|c_j|).
\end{align*}

\end{corollary}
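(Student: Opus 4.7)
Proof Proposal.

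The plan has two steps: (i) establish the stated Fourier coefficient bounds for $f^\ell(\bx)$, and (ii) combine these with the parity structure of the Fourier spectrum of $g(\bx) := \maj(\bw \odot \bx)$ via \Cref{lem:fourier_inner_product} to control the correlation.

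For (i), the degree-$1$ bounds $|c_j| \geq \Omega((k^2\sqrt{m_t})^{-1})$ for $j \in [k]$ and $|c_j| \leq O((k^3\sqrt{m_t})^{-1})$ for $j > k$ are essentially restatements of \Cref{lem:degree_1_gap} specialized to the good projected coordinates. The even-degree bound $|c_S| \leq O(\tau_g d |\zeta_{k-1}|^{-1})$ is the main new ingredient: the symmetric initialization in \Cref{def:sym_initialization} combined with the closed-form one-step update of \Cref{lem:effect_of_1_step_on_w} preserves the twin structure $\bw_{i+m_t/2}=-\bw_i$, $b_{i+m_t/2}=b_i$, $a^\ell_{i+m_t/2}=-a^\ell_i$ (the biases are frozen, and the projection $A$ is chosen with matching symmetry). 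Under exact symmetry the pairs collapse into $\sum_{i \leq m_t/2} a_i^\ell\, \phi_{b_i}(\bw_i \cdot \bx)$, which is an odd function of $\bx$ and therefore has \emph{no} even Fourier mass; the only source of even mass is the $\tau_g$-sized violation of this symmetry after one training step, giving the per-coefficient bound $O(\tau_g d |\zeta_{k-1}|^{-1})$ unaffected by $A$ (since $A$ is signed averaging with $\pm 1/m_t$ entries, independent of the teacher). Finally, $|c_S| \leq O(d/k)$ for odd $|S| \geq 3$ is the trivial bound $|c_S|^2 \leq \|f^\ell\|_2^2 \leq O(d/k)$ from item 4 of \Cref{lem:in_support_corr_fancy}.

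For (ii), \Cref{lem:fourier_inner_product} yields
\[
\mathbb{E}_{\bx}[f^\ell(\bx)\,g(\bx)\,x_i] \;=\; \sum_{T \subseteq [d]} c_T\, \hat{g}(T \triangle \{i\}).
\]
Since $g(\bx) = \maj(\bw \odot \bx)$ is odd in $\bx$ (for odd $d$, modulo a negligible $1/\sqrt{d}$-sized correction otherwise), $\hat{g}(U) = 0$ whenever $|U|$ is even. Because $|T \triangle \{i\}|$ has parity opposite to $|T|$, only \emph{even}-$|T|$ terms of $f^\ell$ survive the sum, and these are precisely the ones bounded by $O(\tau_g d |\zeta_{k-1}|^{-1})$. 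Combining this with a polynomial bound on the $L^1$-Fourier norm of majority yields $O(c_i/\sqrt{d} + \tau_g d^{5/3} |\zeta_{k-1}|^{-1})$, with the exact exponent $5/3$ emerging from balancing the per-coefficient bound against the Fourier $L^1$ mass. Choosing $\tau_g \leq O((k^3 \sqrt{m_t})^{-1} d^{-5/3} |\zeta_{k-1}|)$ makes the right-hand side at most $|c_i|/10$, which by \Cref{claim:gradient_concentration} requires teacher batch size $B_1 \geq \Omega(k^4 m_t d^{10/3} \zeta_{k-1}^{-2})$. Plugging into \Cref{lem:teach_gap_implies_student_gap} then converts the combined correlation bound into $|\mathbb{E}[\nabla_{w_{ij}} \ell_{DL}]| = \Theta(|c_j|)$.

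The main obstacle I anticipate is piece (i)'s even-coefficient bound: since the cancellation is exact under perfect symmetry, \emph{every} even Fourier mode of $f^\ell$ is driven entirely by the $\tau_g$-level violation of symmetry in the teacher's weights after one training step. Tracking how these small violations propagate through the random projection $A$---specifically, verifying that $A$'s $m_t$-fold signed averaging does not amplify them beyond the $O(\tau_g d |\zeta_{k-1}|^{-1})$ per-coefficient budget---requires careful bookkeeping, and is where the proof genuinely diverges from the corresponding analysis in \cite{panigrahi2024progressive}. Once this is in place, piece (ii) is a clean parity/convolution argument on the Fourier side.
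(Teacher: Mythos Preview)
Your proposal is correct and follows essentially the same route as the paper. Both arguments (a) write out the Fourier expansion of $f^\ell$ with the stated coefficient bounds, (b) use the oddness of $\maj(\bw\odot\bx)$ to kill the contribution of every odd-degree term of $f^\ell$ except the diagonal one $c_i\,\E[\maj(\bw\odot\bx)]=O(c_i/\sqrt{d})$, and (c) control the surviving even-degree sum by pulling out the uniform bound $|c_S|\le O(\tau_g d|\zeta_{k-1}|^{-1})$ and summing the absolute Fourier coefficients of majority to pick up the $d^{5/3}$ factor. Your use of \Cref{lem:fourier_inner_product} is just the convolution-form restatement of the paper's direct expectation calculation; the two are identical once unpacked.

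One remark on your anticipated obstacle: tracking the $\tau_g$-level symmetry violation through the projection $A$ is \emph{not} where the work lies. Since $A$ has entries $\pm 1/m_t$ and the per-pair even-coefficient violation is already $O(\tau_g d|\zeta_{k-1}|^{-1})$ pointwise, the $m_t/2$-fold signed average preserves the same bound by the triangle inequality---no careful bookkeeping is needed, and the paper accordingly just defers items 3--4 of \Cref{lem:in_support_corr_fancy} to the identical calculation in \cite{panigrahi2024progressive}. The genuinely new content relative to that prior work sits entirely in the degree-$1$ analysis (your step (i), first two bounds), which is \Cref{lem:degree_1_gap}; the rest of this corollary is indeed a mechanical adaptation.
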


\begin{proof}
The first set of bounds on $|c_j|$ follow from \Cref{lem:bounds_on_coeffs}. 
We now estimate the correlation of $f^{\ell}(\bx) \maj(\bw \odot \bx)$ with $x_i$.
Observe that for a fixed $\bw$, 
\begin{align*}
\mathbb{E}_{\bx,y}\left[f^\ell(\bx) \cdot \text{Maj}(\bw \odot \bx) x_i\right]
&= \E_{\bx, y} \left[ \sum_{j=1}^d c_jx_j\maj(\bw \odot \bx)x_i\right] + \sum_{S \subset [d], |S|\%2 = 1, |S| \geq 3} \E_{\bx,y} \left[ c_S \maj(\bw \odot \bx) \chi_S(\bx) x_i\right] \\
&\qquad + \E_{\bx, y} \left[ \sum_{S \subset[d], |S|\%2 = 0} c_s \maj(\bw \odot \bx) \chi_S(\bx) x_i\right].
\end{align*}
Since $\maj(\bw \odot \bx)$ is an odd function (for a fixed $\bw$), $\E_{\bx, y}[\maj(\bw \odot \bx) \chi_S(\bx)x_i] = 0$ for $|S|\%2 = 1$. This allows us to remove the term. A similar argument holds for the first term, giving us
\begin{align*}
\mathbb{E}_{\bx,y}\left[f^\ell(\bx) \cdot \text{Maj}(\bw \odot \bx) x_i\right]
&= c_i \E_{\bx, y} \left[\maj(\bw \odot \bx)\right] + \E_{\bx, y} \left[ \sum_{S \subset[d], |S|\%2 = 0} c_s \maj(\bw \odot \bx) \chi_S(\bx) x_i\right].
\end{align*}
Note that $\maj(\bw \odot \bx) = - \maj(\bw \odot (-\bx))$, and so the only terms that remain in the expectation $\E_{\bx, y}[\maj(\bw \odot \bx)]$ are those for which $\sum_{i=1}^d w_i x_i = 0$. Since $\text{sign}(0)=1$, this implies, $\E_{\bx, y}[\maj(\bw \odot \bx)] = \Pr_{\bx}[\sum_{i=1}^d w_i x_i = 0]$  which is either $0$ for even $d$ or $\Theta(1/\sqrt{d})$ for odd $d$. The second term may be bounded as follows, 
\begin{align*}
&\left| \E_{\bx, y} \sum_{S \subset[d], S\%2 = 0} c_S \maj(\bw \odot \bx)\chi_S(\bx) x_i \right|\\
&\leq O(\tau_g d |\zeta_{k-1}|^{-1} 
) \cdot \left( \sum_{S \subset[d], S\%2 = 0} |\E_{\bx, y} \maj(\bw \odot \bx) \chi_S(\bx) x_i|\right)\\
&\leq O(\tau_g d |\zeta_{k-1}|^{-1} 
) \cdot \left( \sum_{S \subset[d], S\%2 = 0} |\E_{\bx, y} \maj(\bw \odot \bx) \chi_S(\bx)|\right)\\
&\leq O(\tau_g d |\zeta_{k-1}|^{-1} 
) \cdot \left( \sum_{S \subset[d], S\%2 = 0} \Theta \left( \frac{|S|^{-1/3}}{\binom{d}{|S|}}\right)\right)\\
&\leq O(\tau_g d^{5/3} |\zeta_{k-1}|^{-1} 
)\\
\end{align*}
Where the bounds follow from standard bounds on the Fourier coefficients of the majority function. 
By ensuring that the batch size $B_1 \geq \tilde \Omega(\tau_g^{-2})$ 
, we see that for $\tau_g \leq O((k^3 \sqrt{m_t})^{-1} d^{-5/3} |\zeta_{k-1}|
)$ 
the contribution of the majority term to the gradient is small enough to not significantly affect the overall gap.
\end{proof}


We now see that this gap in the gradient values for the first stage of student training is transferred to the empirical gradient with probability $1-\delta$ using only $O((kd)^2\log(m_t d/\delta))$ samples.

\begin{corollary}[Empirical Gradient Gap]
\label{lem:emp_grad_gap}
Fix $\delta, \tau > 0$. Consider a randomly sampled batch of samples $B = \{(\bx_t, y_t)\}_{t=1}^{|B|}$
with size 
$|B| \ge \Omega((kd)^2 \log(m_t\,d/\delta)).$
Then, with probability at least $1-\delta$, for every index $i \in [m_s]$ and every coordinate $j \in [d]$, we have the empirical gradient satisfies
\[
\widehat \E_{\bx \sim B}\left[\nabla_{w_{ij}} \ell_{DL}\bigl(\bx, f_{s}^{(1)}(\bx), Af_t^{(1)}(\bx)\bigr)\right] =
\begin{cases}
\Omega\left(\frac{1}{\sqrt{m_t} k^2}\right) & \text{if } j \in [k], \\
\tilde O\left(\frac{1}{\sqrt{m_t} d}\right) & \text{if } j \notin [k].
\end{cases}
\]

\end{corollary}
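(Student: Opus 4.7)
My plan is to combine the expected-gradient characterization obtained in \Cref{lem:teach_gap_implies_student_gap} and \Cref{lem:expansion_of_f_r} with a standard Hoeffding--plus--union-bound argument. From \Cref{lem:teach_gap_implies_student_gap}, for each fixed neuron $i \in [m_s]$ and coordinate $j \in [d]$, the population gradient
$\mu_{ij} := \E_{\bx}\!\left[\nabla_{w_{ij}}\ell_{DL}\bigl(\bx, f_s^{(1)}(\bx), Af_t^{(1)}(\bx)\bigr)\right]$
is (up to a factor of $1/2$) equal to $\E_\bx\!\bigl[(Af_t^{(1)})_i(\bx)\,x_j\bigr] + \E_\bx\!\bigl[(Af_t^{(1)})_i(\bx)\,\maj(\bw_i\odot\bx)\,x_j\bigr]$, and \Cref{lem:expansion_of_f_r} shows this quantity has magnitude $\Theta(|c_j|)$ with $|c_j|\ge \Omega(1/(\sqrt{m_t}\,k^2))$ for $j\in[k]$ and $|c_j|\le \tilde O(1/(\sqrt{m_t}\,d))$ for $j>k$ (using the tighter out-of-support bound from item~2 of \Cref{lem:in_support_corr_fancy}, which improves the $k^{-3}$ estimate of \Cref{lem:expansion_of_f_r} once $d\ge \Omega(k^4)$). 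So the task reduces to showing that the empirical average over the batch stays within an additive $\tau \;=\; \Theta\!\bigl(1/(\sqrt{m_t}\,kd)\bigr)$ of $\mu_{ij}$ for every $(i,j)$.

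Next, I would bound the per-sample quantity $g_{ij}(\bx) := -\mathbf 1(\bw_i^{(s,0)}\cdot \bx + b_i \ge 0)\,(Af_t^{(1)})_i(\bx)\,x_j$ in absolute value. Since $|x_j|=1$ and the indicator is $0$ or $1$, we have $|g_{ij}(\bx)|\le |(Af_t^{(1)})_i(\bx)|$. Conditioning on the high-probability event (over the projection $A$) that $\sup_{\bx}|(Af_t^{(1)})_i(\bx)|\le R$ for all $i\in[m_s]$ with $R = \tilde O(1/\sqrt{m_t})$ — this follows from a Hoeffding bound over the independent Rademacher $a^\ell_j$'s, applied for each of the at most $2^d$ sign patterns of $\bx$ but immediately reducible to a polynomial union bound using that the teacher weights have been trained and produce only $\poly(d)$ distinct values of $\phi_{b_j}(\bw_j\cdot\bx)$ — I can then apply Hoeffding's inequality (\Cref{lem:hoeffding}) to the $|B|$ i.i.d.\ samples of $g_{ij}(\bx)$ with range $R$. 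This gives
\[
\Pr\!\Bigl[\bigl|\widehat\E_{\bx\sim B}[g_{ij}(\bx)] - \mu_{ij}\bigr|\ge \tau\Bigr] \;\le\; 2\exp\!\Bigl(-\tfrac{|B|\,\tau^2}{2R^2}\Bigr).
\]

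I would then union-bound over all $(i,j)\in[m_s]\times[d]$, absorbing the $\log(m_s d)$ into the $\log(m_t d/\delta)$ factor (since $m_s \le m_t$). Choosing $\tau = \Theta(1/(\sqrt{m_t}\,kd))$ and $R = \tilde O(1/\sqrt{m_t})$ yields the requirement $|B| \ge \Omega\!\bigl((kd)^2 \log(m_t d/\delta)\bigr)$, matching the claim. Combining the concentration bound with the population estimates of \Cref{lem:expansion_of_f_r}, for every in-support $j$ the empirical gradient is at least $\Omega(1/(\sqrt{m_t}k^2)) - \tau = \Omega(1/(\sqrt{m_t}k^2))$, while for every out-of-support $j$ it is at most $\tilde O(1/(\sqrt{m_t}d)) + \tau = \tilde O(1/(\sqrt{m_t}d))$, as desired.

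The main technical obstacle is identifying the correct sample-level range $R$. A naive deterministic bound gives only $R=O(1)$, which would force $|B|\gtrsim m_t d^2 \log(\cdot)$ and miss the stated rate. The fix is to work on the high-probability event that the random projection $A$ makes $(Af_t^{(1)})_i$ uniformly small (of order $1/\sqrt{m_t}$), absorbing this conditioning into the $99\%$ event already used in \Cref{lem:in_support_corr_fancy}. Once $R$ is pinned down at $\tilde O(1/\sqrt{m_t})$, the rest is routine Hoeffding and a union bound.
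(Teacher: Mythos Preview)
Your plan inverts the order of the two sources of randomness relative to the paper, and that is exactly where the gap sits. The paper does \emph{not} apply Hoeffding to the post-projection quantity $g_{ij}(\bx)=-\mathbf 1(\cdot)\,(Af_t^{(1)})_i(\bx)\,x_j$. Instead it expands
\[
\widehat \E_{\bx\sim B}\bigl[(Af_t^{(1)})_i(\bx)\,x_j\bigr]
\;=\;\sum_{\ell=1}^{m_t/2} a_\ell^{(i)}\,\widehat \E_{\bx\sim B}\bigl[\phi_{b_\ell}(\bw_\ell\cdot\bx)\,x_j\bigr]
\]
and applies Hoeffding to each inner scalar $\phi_{b_\ell}(\bw_\ell\cdot\bx)\,x_j$, which is deterministically $O(1)$ (since $|\bw_\ell\cdot\bx|\le O(1)$ after the first teacher stage). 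The precision needed at this level is only $\tau_s=O(1/(kd))$, so $|B|\ge\Omega((kd)^2\log(m_t d/\delta))$ suffices after a union bound over the $m_t d$ pairs $(\ell,j)$. Once the empirical scaling factors $\widehat s_{\ell j}$ are within $O(1/(kd))$ of the true ones, they inherit the bounds of \Cref{lem:bounds_on_coeffs}, and the anticoncentration/concentration argument over the random projection $A$ in \Cref{lem:degree_1_gap} is \emph{re-run} on these empirical coefficients. The randomness of $A$ is exploited after the Hoeffding step, not before.

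Your route fixes $A$ first, invokes the population gap, and then concentrates $\widehat\E_B[g_{ij}]$ around $\mu_{ij}$. That forces a uniform-in-$\bx$ bound on $|(Af_t^{(1)})_i(\bx)|$ conditional on $A$, and your justification for $R=\tilde O(1/\sqrt{m_t})$ does not hold: even if each scalar $\phi_{b_\ell}(\bw_\ell\cdot\bx)$ takes only $\poly(d)$ values, the vector $\bigl(\phi_{b_\ell}(\bw_\ell\cdot\bx)\bigr)_{\ell\le m_t/2}$ can take up to $\min\{2^d,\poly(d)^{m_t/2}\}$ distinct values, because the signs of the trained weights $\bw_\ell$ depend on independent initializations across $\ell$. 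The honest union bound gives $R=O(\sqrt{d/m_t})$, which yields $|B|\gtrsim k^2 d^3\log(\cdot)$---a factor of $d$ off. The clean fix is precisely the paper's move: push Hoeffding inside the sum over $\ell$ before the random signs $a_\ell^{(i)}$ are applied; then the per-sample range is $O(1)$ unconditionally, the target precision is $O(1/(kd))$, and the $\sqrt{m_t}$ scaling reappears only when you redo the $A$-anticoncentration on the (now concentrated) empirical coefficients.
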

\begin{proof}
Recall that from calculations similar to those in \Cref{lem:teach_gap_implies_student_gap},
\begin{align*}
    &\widehat \E_{\bx \sim B}[\nabla_{w_{ij}} \ell_{DL}(\bx, f_{s}^{(1)}(\bx), Af_t^{(1)}(\bx))]\\
    &= -\frac{1}{2} \left( \widehat \E_{\bx \sim B}[(Af_t^{(1)})_i(\bx) ~x_j] + \widehat \E_{\bx \sim B}[(Af_t^{(1)})_i(\bx) ~\maj(\bw_i \odot \bx)~x_j] \right)
\end{align*}
So to estimate the number of samples to achieve a gradient gap, it is sufficient to estimate the number of samples required to estimate the gap in the expectations on the right hand side.
We show that in $|B|$ samples, the first term is close enough to the true expectation; the argument for the second term is analogous. 

Define $(Af_t^{(1)})_i = \sum_{\ell=1}^{m_t} a_\ell^{(i)} \sigma(\bw_\ell \cdot \bx + b_\ell)$. Then we have \[ \widehat \E_{\bx \sim B}[(Af_t^{(1)})_i(\bx) \, x_j] = \sum_{\ell=1}^{m_t/2} a_\ell^{(i)}\, \widehat \E_{\bx \sim B}[\phi_{b_\ell}(\bw_\ell \cdot \bx) \, x_j].\] 
We will prove that if $|B| \ge \Omega((kd)^2 \log(1/\delta))$, then with probability at least $1-\delta$, 
$$
\left|\widehat \E_{\bx \sim B}[\phi_{b_\ell}(\bw_\ell \cdot \bx) \, x_j] - \E_{\bx \sim \{\pm 1\}^n}[\phi_{b_\ell}(\bw_\ell \cdot \bx) \, x_j]\right| \le O\left(\frac{1}{kd}\right).
$$ 
This shows that the properties of $\E_{\bx \sim \{\pm 1\}^n}[\phi_{b_\ell}(\bw_\ell \cdot \bx) \, x_j]$ described in \Cref{lem:bounds_on_coeffs} also apply to the empirical estimate -- specifically that for at least $m_t/16$ values of $\ell$, $\widehat \E_{\bx \sim B}[\phi_{b_\ell}(\bw_\ell \cdot \bx) \, x_j] \geq \Omega(1/k)$ for all in-support $j$; and for all $m_t$ values of $\ell$, $\widehat \E_{\bx \sim B}[\phi_{b_\ell}(\bw_\ell \cdot \bx) \, x_j] \leq O(1/kd)$ for out-of-support $j$, and $\widehat \E_{\bx \sim B}[\phi_{b_\ell}(\bw_\ell \cdot \bx) \, x_j] \leq O(1/k)$.

Therefore, the gap established in \Cref{lem:degree_1_gap} for the population expectation carries over to the empirical estimate.

We now apply Hoeffding's inequality to show concentration for $\widehat \E_{\bx \sim B}[\phi_{b_\ell}(\bw_\ell \cdot \bx) x_j]$. Since the weights $\bw_\ell$ come from after the first stage of teacher training and thus satisfy \Cref{lem:effect_of_1_step_on_w} --- the in-support weights are of size $O(1/k)$ and the out-of-support weights are of size $O(1/kd)$ --- the random variable $\phi_{b_\ell}(\bw_\ell \cdot \bx) x_j$ is bounded, with $|\phi_{b_\ell}(\bw_\ell \cdot \bx) x_j| \le O(1)$. We can then apply Hoeffding's inequality (\Cref{lem:hoeffding}) to obtain that, with probability at least $1 - 2\exp(-c|B|\tau_s^2)$ for some constant $c$, 
$$
\left|\E_{\bx}[\phi_{b_\ell}(\bw_\ell \cdot \bx)x_j] - \widehat \E_{\bx \sim B}[\phi_{b_\ell}(\bw_\ell \cdot \bx)x_j]\right| \le \tau_s.
$$
We only require that $\tau_s$ is smaller than $O(1/kd)$ to recover the bounds in \Cref{lem:bounds_on_coeffs}. The same argument applies to the second term.
Taking a union bound over all the $m_t d$ variables $w_{\ell j}$ and the two cases and putting everything together gives us the sample complexity bound for $|B|$.
\end{proof}

\subsection{Student sample complexity}

We now show that the student network requires fewer samples to recover a good solution. 



\begin{lemma}[Student version of Theorem 4, \cite{barak2022hidden}]
\label{lem:second_stage_conclusion_student}
Fix $\eps > 0$. Let $d \geq \Omega(k^4 \log(kd/\eps))$ and suppose a teacher $f_t$ with hidden dimension 
$m_t \geq m_s \geq \Omega(2^k k \log(k))$ 
is trained in the setting of \Cref{lem:second_stage_conclusion}.
Furthermore, for student training \Cref{alg:2_stage_training_student} \( B_1 \geq (kd)^2 \log(m_t d/\delta) \), \new{$\eta_1 = \sqrt{m_t} $}. Then with probability $99\%$, after 
\( T_2 = \Omega(m_sd^2 k^3/\epsilon^2) \) steps of training with batch size \( B_2 = 1 \) and learning rate \( \eta_2 = 4k^{1.5}/(d m_s(T_2 - 1)) \), we have, with expectation over the randomness of the initialization and the sampling of the batches:
\[
\min_{t \in [T_2]} \mathbb{E}[L_{\theta(t)}(\bx, y)] \leq \epsilon.
\]
Thus, the minimal sample complexity to reach a loss of \( \epsilon \) for the student is given by:
\[
T_1 \times B_1 + T_2 \times B_2 
= \Theta(2^{O(k)} d^2 \epsilon^{-2} \log(m_t k/\delta\epsilon)).
\]
\end{lemma}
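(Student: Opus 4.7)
The plan is to reduce this statement to the already-established second-stage analysis in \Cref{lem:second_stage_conclusion} by first showing that one gradient step of the distillation loss produces a student first-layer weight matrix $\bW_s^{(1)}$ satisfying the same qualitative in-support versus out-of-support separation that drives the Barak--Edelman--Kakade--Malach--Zhang argument. Once this is in place, the second stage of \Cref{alg:2_stage_training_student} is algorithmically identical to that of \Cref{alg:2_stage_training}, so the existing analysis can be invoked directly on the student.

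First I would analyze the first stage of \Cref{alg:2_stage_training_student}. Choosing $\lambda_1 = 1/(2\eta_1)$ makes the regularized update a pure gradient descent step, so $\bw_i^{(1)} = -\eta_1\,\widehat{\E}_{\bx\sim B_1}[\nabla_{\bw_i}\ell_{DL}(\bx,f_s^{(1)},Af_t^{(1)})]$. By \Cref{lem:emp_grad_gap}, as soon as $B_1 \geq \Omega((kd)^2 \log(m_t d/\delta))$ this empirical gradient has magnitude $\Omega(1/(\sqrt{m_t}\,k^2))$ for $j \in [k]$ and $\tilde O(1/(\sqrt{m_t}\,d))$ for $j \notin [k]$, uniformly in $i \in [m_s]$. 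The choice $\eta_1 = \sqrt{m_t}$ cancels the $\sqrt{m_t}$ factor, leaving $|w^{(1)}_{ij}| = \Omega(1/k^2)$ on the support and $|w^{(1)}_{ij}| = \tilde O(1/d)$ off the support --- precisely the qualitative magnitude gap that \Cref{lem:second_stage_conclusion} requires.

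Next I would invoke \Cref{lem:second_stage_conclusion} with the student network playing the role of the teacher. The width hypothesis $m_s \geq \Omega(2^k k \log k)$ and dimension hypothesis $d \geq \Omega(k^4 \log(kd/\eps))$ are assumed in the lemma statement, the required first-stage gap has just been established, and the second stage of \Cref{alg:2_stage_training_student} performs $T_2$ hinge-loss SGD steps on $\ba$ with batch size $1$ and rate $\eta_2 = 4 k^{1.5}/(d m_s (T_2-1))$, matching \Cref{alg:2_stage_training} verbatim. The conclusion $\min_{t \leq T_2} \E[L_{\theta(t)}] \le \eps$ after $T_2 = \Omega(m_s d^2 k^3/\eps^2)$ steps then follows. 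Adding samples across stages gives $T_1 B_1 + T_2 B_2 = O((kd)^2 \log(m_t d/\delta)) + \Omega(m_s d^2 k^3/\eps^2) = \Theta(2^{O(k)} d^2 \eps^{-2} \log(m_t k /(\delta \eps)))$ once $m_s = \tilde\Theta(2^k k)$ is substituted.

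The main subtlety I anticipate is matching the student's first-stage gap to the exact hypothesis of \Cref{lem:second_stage_conclusion}, which is phrased in terms of the conclusions of \Cref{lem:effect_of_1_step_on_w}. Unlike the teacher --- whose gradient has a closed-form expression tied to the signs of the initial $a_i^{(0)}$ and $\chi_{[k]\setminus\{j\}}(\bw_i^{(0)})$ --- the student gradient only inherits a magnitude gap mediated by the random projection $A$, with in-support entries $\Omega(1/k^2)$ rather than $\Theta(1/k)$ and out-of-support entries $\tilde O(1/d)$ rather than $\Theta(1/(kd))$. I would argue that only this qualitative separation --- in-support coordinates dominating out-of-support coordinates by a polynomial-in-$k,d$ factor once $d \gg k^4$ --- is actually exploited in the proof of \Cref{lem:second_stage_conclusion}, so the Barak et al.\ analysis transfers with the weaker constants being absorbed into the $2^{O(k)} d^2 \eps^{-2}$ bound. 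Finally, the $99\%$ overall success probability is obtained by union-bounding the high-probability events of \Cref{lem:in_support_corr_fancy}, \Cref{lem:emp_grad_gap}, and the second-stage concentration argument imported from \Cref{lem:second_stage_conclusion}.
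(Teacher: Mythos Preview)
Your approach is essentially the paper's: use \Cref{lem:emp_grad_gap} to get the in-support/out-of-support gap in the empirical gradient, scale by $\eta_1=\sqrt{m_t}$ so the gap becomes $\Omega(1/k^2)$ versus $\tilde O(1/d)$, then invoke the second-stage analysis from \Cref{lem:second_stage_conclusion}. The sample-complexity arithmetic and the identification of the main subtlety (that the student's gap is only qualitative and not the exact closed form of \Cref{lem:effect_of_1_step_on_w}) both match.

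One point you gloss over that the paper makes explicit: the gap is \emph{not} uniform in $i\in[m_s]$. The anticoncentration argument underlying \Cref{lem:degree_1_gap} only guarantees that each projected coordinate is ``good'' with constant probability, so only a subset of roughly $T=\Theta(2^k k\log(k/\eps))$ of the $m_s$ student neurons inherit the gap (this is why $m_s\ge\Omega(T\log(1/\delta'))$ is needed). The paper then argues that when restricted to these $T$ good neurons the second-stage analysis goes through verbatim, and that the top layer can simply set the contribution of any spurious neuron to zero. Your write-up leans on the literal statement of \Cref{lem:emp_grad_gap}, which is phrased as if the bound holds for every $i$, but the underlying argument (and the paper's own proof of the present lemma) treats only a good subset. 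This is a minor gap to fill, and the fix is exactly the one the paper gives.
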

\begin{proof}
For the first stage, the sample complexity of the student is determined by the sample complexity required to ensure that the gradient witnesses a gap between the in-support and out-of-support coordinates. \Cref{lem:emp_grad_gap} shows that with a batch $B$ of $O((kd)^2 \log(m_t d/\delta))$ samples, the expected gradient has in-support coordinates of $\Omega(1/\sqrt{m_t}k^2)$ and out-of-support coordinates of $O(1/\sqrt{m_t}d)$. 

After one step of gradient descent with an appropriate regularization parameter $(\lambda_1 = 1/2\eta)$, we have $\bw_i = -\eta \widehat \E_{(\bx, y)\sim B} \left[\nabla_{\bw_i} \ell_{DL} (\bx, f_s^{(1)}, Af_t^{(1)})\right]$.

Since $d \geq \Omega(k^4)$, we may then set $\eta = \sqrt{m_t}$ and $T = \Theta(k 2^k \log(k/\eps))$ and $m_s = O(T)$ to ensure that with probability $99\%$, the gap between at least $T$ randomly selected student weights out of a total of $m_s$ student weights matches the gap in \cite{panigrahi2024progressive} and is bounded below by $\Omega(1/k^2)$. 

This sets up the $T$ weights exhibiting a gap in the bottom layer to satisfy the same properties as in \cite{panigrahi2024efficient}. When restricted to the $T$ student weights that have a gap, the analysis of the second stage is the same as in their paper (and the same as in \Cref{lem:second_stage_conclusion}) -- in the event that we have some spurious neurons that don't exhibit a gap, the top layer can learn to ignore these by setting the contribution of those to be $0$. The exponential dependence in $k$ arises from the requirement in \cite{panigrahi2024efficient, barak2022hidden}.

\end{proof}
\emph{Remark:} 
We observe that \emph{even when the gap between the teacher's width and the student's width is only polynomial}, the teacher requires $\Omega(d^{k-1})$ samples, while the student only needs $\tilde{O}(2^{O(k)} \poly(d,k))$ samples.
This difference arises because of the difference in the magnitude of the gap between the in-support and out-of-support coordinates of the gradient in these two cases.

\section{Probabilistic Context-Free Grammars}
\label{app:pcfg}

In this section we formally define a PCFG.

\begin{definition}[Probabilistic Context-Free Grammar (PCFG)]
\label{def:PCFG}
A Probabilistic Context-Free Grammar (PCFG) is a 5-tuple $(N, \Sigma, S, R, P)$ where:
\begin{itemize}
    \item $N$ is a finite set of non-terminal symbols
    \item $\Sigma$ is a finite set of terminal symbols $(N \cap \Sigma = \emptyset)$
    \item $S \in N$ is the distinguished start symbol
    \item $R \subseteq N \times (N \cup \Sigma)^*$ is a finite set of production rules
    \item $P: R \to [0,1]$ is a probability function satisfying:
    \[
        \forall A \in N,\ \sum_{\substack{(A \to \beta) \in R}} P(A \to \beta) = 1
    \]
\end{itemize}
The probability of a derivation tree $T$ is given by:
\[
P(T) = \prod_{(A \to \beta) \in T} P(A \to \beta)
\]
\end{definition}

In \texttt{cfg3b}, the PCFG is constructed such that the degree for every non-terminal \( A \) is 2. 
In any generation rule, consecutive pairs of symbols in the generated strings are distinct. 
The $25\%, 50\%, 75\%$, and $95\%$ percentile string lengths generated by the PCFG are $251, 278, 308, and 342$, respectively, 
we refer to the commonly cited \Cref{fig:pcfg_indented} below from \cite{allen2023physics}.

\begin{figure}[ht]
\centering
\scalebox{0.8}{\ttfamily  
\setlength{\tabcolsep}{0pt} 
\begin{tabular}{l}
22 $\rightarrow$ 21 20 \\
22 $\rightarrow$ 20 19 \\
\hspace{1cm}19 $\rightarrow$ 16 17 18 \\
\hspace{1cm}19 $\rightarrow$ 17 18 16 \\
\hspace{1cm}20 $\rightarrow$ 17 16 18 \\
\hspace{1cm}20 $\rightarrow$ 16 17 \\
\hspace{1cm}21 $\rightarrow$ 18 16 \\
\hspace{1cm}21 $\rightarrow$ 16 18 17 \\
\hspace{2cm}16 $\rightarrow$ 15 13 \\
\hspace{2cm}16 $\rightarrow$ 13 15 14 \\
\hspace{2cm}17 $\rightarrow$ 14 13 15 \\
\hspace{2cm}17 $\rightarrow$ 15 13 14 \\
\hspace{2cm}18 $\rightarrow$ 15 14 13 \\
\hspace{2cm}18 $\rightarrow$ 14 13 \\
\hspace{3cm}13 $\rightarrow$ 11 12 \\
\hspace{3cm}13 $\rightarrow$ 12 11 \\
\hspace{3cm}14 $\rightarrow$ 11 10 12 \\
\hspace{3cm}14 $\rightarrow$ 10 11 12 \\
\hspace{3cm}15 $\rightarrow$ 12 11 10 \\
\hspace{3cm}15 $\rightarrow$ 11 12 10 \\
\hspace{4cm}10 $\rightarrow$ 7 9 8 \\
\hspace{4cm}10 $\rightarrow$ 9 8 7 \\
\hspace{4cm}11 $\rightarrow$ 8 7 9 \\
\hspace{4cm}11 $\rightarrow$ 7 8 9 \\
\hspace{4cm}12 $\rightarrow$ 8 9 7 \\
\hspace{4cm}12 $\rightarrow$ 9 7 8 \\
\hspace{5cm}7 $\rightarrow$ 3 1 \\
\hspace{5cm}7 $\rightarrow$ 1 2 3 \\
\hspace{5cm}8 $\rightarrow$ 3 2 \\
\hspace{5cm}8 $\rightarrow$ 3 1 2 \\
\hspace{5cm}9 $\rightarrow$ 3 2 1 \\
\hspace{5cm}9 $\rightarrow$ 2 1 \\
\end{tabular}}
\caption{\textbf{cfg3b} from \cite{allen2023physics}. Vocabulary is $\{1,2,3\}$. Indentation reflects production hierarchy.}
\label{fig:pcfg_indented}
\end{figure}

We also we show similar performance gains to those we observe in \Cref{sec:bert} for experiments with larger bandwidth. 
In particular, for experiments with a total of 6000 and 8000 iterations respectively, with three and four-stage curricula.

\section{Miscellaneous Figures}

\begin{figure*}[t]
    \centering
       \begin{minipage}[t]{0.5\linewidth}
        \centering
        \includegraphics[width=1\textwidth]{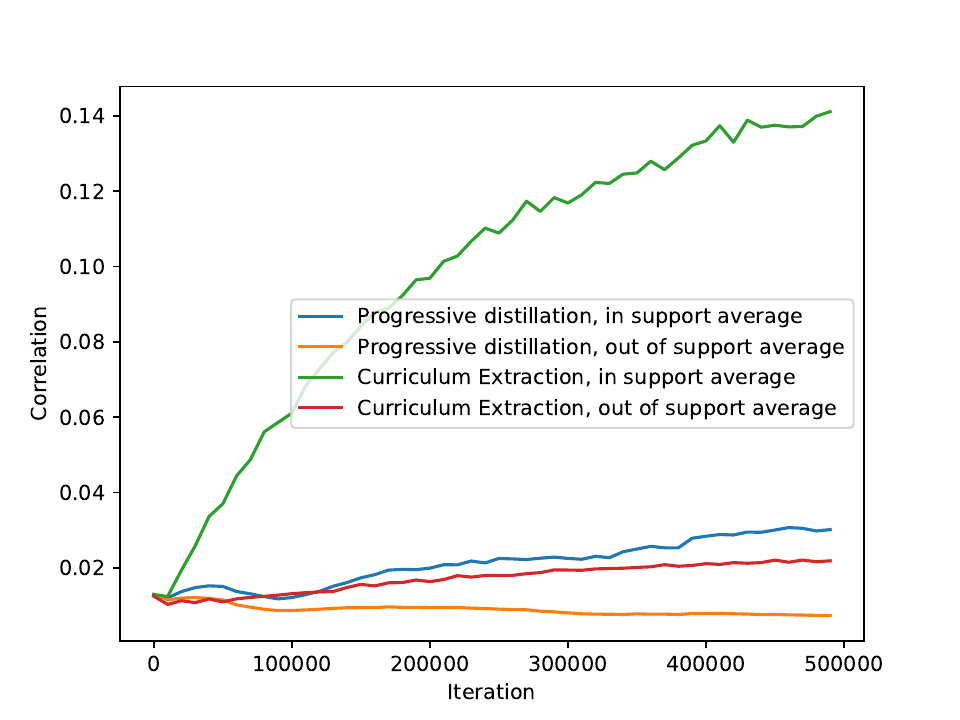}
        \vspace{0.5em} 
        \label{fig:mlp_prog_vs_layer_correlation}
    \end{minipage}
     \caption{%
    \textbf{MLP Projection vs Layer Correlation}
    We look at the magnitude of the correlations of the hidden layer weights of the depth-two MLP with the support of a 100-dimensional 6-sparse parity after the first phase of training. We observe that the curriculum extraction in-support coverage is significantly larger the out-of-support coverage, and with a significantly larger advantage than that for progressive distillation.
    }
    \label{fig:mlp_proj_vs_layer_correlation}.
\end{figure*}

\end{document}